\def\secref#1{section~\ref{#1}}
\def\Secref#1{Section~\ref{#1}}
\def\eqref#1{(\ref{#1})}
\def\1{\bm{1}}
\def\eps{{\epsilon}}
\def\LL{{\mathscr{L}}}
\def\rvepsilon{{\mathbf{\epsilon}}}
\def\rvm{{\mathbf{m}}}
\def\rvmu{{\mathbf{\mu}}}
\def\rvnu{{\mathbf{\nu}}}
\def\rvx{{\mathbf{x}}}
\def\rvy{{\mathbf{y}}}
\def\rvz{{\mathbf{z}}}
\def\rmA{{\mathbf{A}}}
\def\rmD{{\mathbf{D}}}
\def\rmE{{\mathbf{E}}}
\def\rmI{{\mathbf{I}}}
\def\rmO{{\mathbf{O}}}
\def\rmP{{\mathbf{P}}}
\def\rmSigma{{\mathbf{\Sigma}}}
\def\rmU{{\mathbf{U}}}
\def\rmX{{\mathbf{X}}}
\def\rmZ{{\mathbf{Z}}}
\def\vtheta{{\bm{\theta}}}
\def\vvarphi{{\bm{\varphi}}}
\DeclareMathAlphabet{\mathsfit}{\encodingdefault}{\sfdefault}{m}{sl}
\SetMathAlphabet{\mathsfit}{bold}{\encodingdefault}{\sfdefault}{bx}{n}
\newcommand{\E}{\mathbb{E}}
\newcommand{\R}{\mathbb{R}}
\newtheorem{proposition}{Proposition}[section]
\DeclareMathOperator*{\argminB}{argmin}   
\newcommand{\norm}[1]{\left\lVert#1\right\rVert}
\title{Robust Subspace Recovery Layer for \\Unsupervised Anomaly Detection}
\newcommand{\printfnsymbol}[1]{%
  \textsuperscript{\@fnsymbol{#1}}%
}
\author{Chieh-Hsin Lai\thanks{ Equal contribution.}, Dongmian Zou\printfnsymbol{1}\& Gilad Lerman  \\
School of Mathematics\\
University of Minnesota\\
Minneapolis, MN 55455 \\
\texttt{\{laixx313, dzou, lerman\}@umn.edu} 
}
\begin{document}

\maketitle

\begin{abstract}
We propose a neural network for unsupervised anomaly detection with a novel robust subspace recovery layer (RSR layer). This layer seeks to extract the underlying subspace from a latent representation of the given data and removes outliers that lie away from this subspace. It is used within an autoencoder. The encoder maps the data into a latent space, from which the RSR layer extracts the subspace. The decoder then smoothly maps back the underlying subspace to a ``manifold" close to the original inliers. Inliers and outliers are distinguished according to the distances between the original and mapped positions (small for inliers and large for outliers). Extensive numerical experiments with both image and document datasets demonstrate state-of-the-art precision and recall. 
\end{abstract}

\section{Introduction}\label{sec:intro}

Finding and utilizing patterns in data is a common task for modern machine learning  systems. 
However, there is often some anomalous information that does not follow a common pattern and has to be recognized. For this purpose, anomaly detection aims to identify data points that ``do not conform to expected behavior'' \citep{chandola2009anomaly}. We refer to such points as either anomalous or outliers. In many applications, there is no ground truth available to distinguish anomalous from normal points, and they need to be detected in an unsupervised fashion. For example, one may need to remove anomalous images from a set of images obtained by a search engine without any prior knowledge about how a normal image should look \citep{xia2015learning}. Similarly, one may need to distinguish unusual news items from a large collection of news documents without any information whether a news item is usual or not \citep{kannan2017outlier}. In these examples, the only assumptions are that normal data points appear more often than anomalous ones and have a simple underlying structure which is unknown to the user.


Some early methods for anomaly detection relied on Principal Component Analysis (PCA) \citep{shyu2003novelnew}. Here one assumes that the underlying unknown structure of the normal samples is linear. However, PCA is sensitive to outliers and will often not succeed in recovering the linear structure or identifying the outliers \citep{lerman2018overview,vaswani2018static}. More recent ideas of Robust PCA (RPCA) \citep{wright2009robust,vaswani2018static} have been considered for some specific problems of anomaly detection or removal \citep{zhou2017anomaly, paffenroth2018robust}. RPCA assumes sparse corruption, that is, few elements of the data matrix are corrupted. This assumption is natural for some special problems in computer vision, in particular, background subtraction \citep{Torre:03, wright2009robust,vaswani2018static}. However, a natural setting of anomaly detection with hidden linear structure may assume instead that a large portion of the data points are fully corrupted. The mathematical framework that addresses this setting is referred to as robust subspace recovery (RSR) \citep{lerman2018overview}.

While Robust PCA and RSR try to extract linear structure or identify outliers lying away from such structure, the underlying geometric structure of many real datasets is nonlinear. Therefore, one needs to extract crucial features of the nonlinear structure of the data while being robust to outliers. In order to achieve this goal, we propose to use an autoencoder (composed of an encoder and a decoder) with an RSR layer. We refer to it as RSRAE (RSR autoencoder).
It aims to robustly and nonlinearly reduce the dimension of the data in the following way. The encoder maps the data into a high-dimensional space. The RSR layer linearly maps 
the embedded points into a low-dimensional subspace that aims to learn the hidden linear structure of the embedded normal points. The decoder maps the points from this subspace to the original space. It aims to map the normal points near their original locations, and the anomalous points far from their original locations.

Ideally, the encoder maps the normal data to a linear space and any anomalies lie away from this subspace. In this ideal scenario, anomalies can be removed by an RSR method directly applied to the data embedded by the encoder. Since the linear model for the normal data embedded by the encoder is only approximate, we do not directly apply RSR to the embedded data. Instead, we  minimize a sum of the reconstruction error of the autoencoder and the RSR error for the data embedded by the encoder. We advocate for an alternating procedure, so that the parameters of the autoencoder and the RSR layer are optimized in turn.

\subsection{Structure of the rest of the paper}
\Secref{sec:relatedwork} reviews works that are directly related to the proposed RSRAE and highlights the original contributions of this paper. \Secref{sec:alg_describe} explains the proposed RSRAE, and in particular, its RSR layer and total energy function. \Secref{sec:real} includes extensive experimental evidence demonstrating  effectiveness of RSRAE with both image and document data. 
\Secref{sec:relatedtheory} discusses theory for the relationship of the RSR penalty with the WGAN penalty.
\Secref{sec:conclude} summarizes this work and mentions future directions. 

\section{Related Works and Contribution}\label{sec:relatedwork}

We review related works in \Secref{subsec:relatedworks} and highlight our contribution in \Secref{subsec:contribution}.

\subsection{Related Works}\label{subsec:relatedworks}
Several recent works have used autoencoders for anomaly detection. 
\citet{xia2015learning} proposed the earliest work on anomaly detection via an autoencoder, while utilizing large reconstruction error of outliers. They apply an iterative and cyclic scheme, where in each iteration, they determine the inliers and use them for updating the parameters of the autoencoder. \citet{aytekin2018clustering} apply $\ell_{2}$ normalization for the latent code of the autoencoder and also consider the case of multiple modes for the normal samples. Instead of using the reconstruction error, they apply $k$-means clustering for the latent code, and identify outliers as points whose latent representations are far from all the cluster centers. \citet{zong2018deep} also use an autoencoder with clustered latent code, but they fit a Gaussian Mixture Model using an additional neural network.
Restricted Boltzmann Machines (RBMs) are similar to autoencoders. \citet{zhai2016deep} define ``energy functions'' for RBMs that are similar to the reconstruction losses for autoencoders. They identify anomalous samples according to large energy values. \citet{chalapathy2017robust} propose using ideas of RPCA within an autoencoder, where they alternatively optimize the parameters of the autoencoder and a sparse residual matrix.

The above works are designed for datasets with a small fraction of outliers. However, when this fraction increases, outliers are often not distinguished by high reconstruction errors or low similarity scores. In order to identify them, additional assumptions on the structure
of the normal data need to be incorporated. For example, \citet{zhou2017anomaly} decompose the input data into two parts: low-rank and sparse (or column-sparse). The low-rank part is fed into an autoencoder and the sparse part is imposed as a penalty term with the $\ell_1$-norm (or $\ell_{2,1}$-norm for column-sparsity). 

In this work, we use a term analogous to the $\ell_{2,1}$-norm, which can be interpreted as the sum of absolute deviations from a latent subspace. However, we do not decompose the data a priori, but minimize an energy combining this term and the reconstruction error.  Minimization of the former term is known as least absolute deviations in RSR  \citep{lerman2018overview}. 
It was first suggested for RSR and related problems in \citet{watson2001some, ding2006r, zhang2009median}. The robustness to outliers of this energy, or of relaxed versions of it, was studied in \citet{mccoy2011two, xu2012robust, lp_recovery_part1_11, zhang2014novel, lerman2015robust, lerman2017fast, maunu2017well}. In particular, \citet{maunu2017well} established its well-behaved landscape under special, though natural, deterministic conditions. Under similar conditions, they guaranteed fast subspace recovery by a simple algorithm that aims to minimize this energy.

Another directly related idea 
for extracting useful latent features 
is an addition of a linear self-expressive layer to an autoencoder \citep{ji2017deep}. It is used in the different setting of unsupervised subspace clustering. By imposing the self-expressiveness, the autoencoder is robust to an increasing number of clusters. Although self-expressiveness also improves robustness to noise and outliers,
\citet{ji2017deep} aims at clustering and thus its goal is different than ours. Furthermore, their self-expressive energy does not explicitly consider robustness, while ours does.
\citet{lezama2018ole} consider a somewhat parallel idea of imposing a loss function to increase the robustness of representation. However, their goal is to increase the margin between classes and their method only applies to a supervised setting in anomaly detection, where the normal data is multi-modal.


\subsection{Contribution of this work}
\label{subsec:contribution}
This work introduces an RSR layer within an autoencoder. 
It incorporates a special regularizer that enforces an outliers-robust linear structure in the embedding obtained by the encoder. 
We clarify that the method does not alternate between application of the autoencoder and the RSR layer, but fully integrates these two components. Our experiments demonstrate that a simple incorporation of a ``robust loss'' within a regular autoencoder does not work well for anomaly detection. We try to explain this and also the improvement obtained by incorporating an additional RSR layer.

Our proposed architecture is simple to implement. Furthermore, the RSR layer is not limited to a specific design of RSRAE but can be put into any well-designed autoencoder structure. 
The epoch time of the proposed algorithm is comparable to those of other common autoencoders. 
Furthermore, our experiments show that RSRAE competitively performs in unsupervised anomaly detection tasks.

{RSRAE addresses the unsupervised setting, but is not designed to be highly competitive in the semi-supervised or supervised settings, where one has access to training data from the normal class or from both classes, respectively. In these settings, RSRAE functions like a regular autoencoder without taking an advantage of its RSR layer, unless the training data for the normal class is  corrupted with outliers.}

The use of RSR is not restricted to autoencoders. 
We establish some preliminary analysis for RSR within a generative adversarial network (GAN) \citep{goodfellow2014generative, arjovsky2017wasserstein} in \Secref{sec:relatedtheory}.
More precisely, we show that a linear WGAN intrinsically incorporates RSR in some special settings, although it is unclear how to impose 
an RSR layer. 

\section{RSR layer for outlier removal}
\label{sec:alg_describe}
We assume input data $\{\rvx^{(t)}\}_{t=1}^N$ in $\R^M$, and denote by $\rmX$ its corresponding data matrix, whose $t$-th column is $\rvx^{(t)}$. The encoder of RSRAE, $\mathscr{E}$, is a neural network that maps each data point, $\rvx^{(t)}$, to its latent code $\rvz^{(t)} = \mathscr{E}(\rvx^{(t)}) \in \R^D$. The RSR layer is a linear transformation $\rmA \in \R^{d \times D}$ that reduces the dimension to $d$. That is, $\tilde{\rvz}^{(t)} = \rmA \rvz^{(t)} \in \R^d$. The decoder $\mathscr{D}$ is a neural network that maps $\tilde{\rvz}^{(t)}$ to $\tilde{\rvx}^{(t)}$ in the original ambient space $\R^M$.

We can write the forward maps in a compact form using the corresponding data matrices as follows:
\begin{equation}
    \rmZ = \mathscr{E}(\rmX), \hspace{0.3cm} \tilde{\rmZ} = \rmA \rmZ, \hspace{0.3cm} \tilde{\rmX} = \mathscr{D}(\tilde{\rmZ}).
\end{equation}

Ideally, we would like to optimize RSRAE so it only maintains the underlying structure of the normal data. We assume that the original normal data lies on a $d$-dimensional ``manifold'' in $\R^D$ and thus the RSR layer embeds its latent code into $\R^d$. In this ideal optimization setting, the similarity between the input and the output of RSRAE is large whenever the input is normal and small whenever the input is anomalous. Therefore, by thresholding a similarity measure,
one may distinguish between normal and anomalous data points.

In practice, the matrix $\rmA$ and the parameters of $\mathscr{E}$ and $\mathscr{D}$ are  obtained by minimizing a loss function, which is a sum of two parts: the reconstruction loss from the autoencoder and the loss from the RSR layer. For $p>0$, an $\ell_{2,p}$ reconstruction loss for the autoencoder is
\begin{equation}\label{eq:RSRAElossAE}
    L_{\rm{AE}}^p (\mathscr{E}, \rmA, \mathscr{D}) = \sum_{t=1}^N \norm{\rvx^{(t)} - \tilde{\rvx}^{(t)}}_2^p ~.
\end{equation}

In order to motivate our choice of RSR loss, we review a common formulation for the original RSR problem. In this problem one needs to recover a linear subspace, or equivalently an orthogonal projection $\rmP$ onto this subspace. Assume a dataset $\{\rvy^{(t)}\}_{t=1}^N$ and let $\rmI$ denote the identity matrix in the ambient space of the dataset. The goal is to find an orthogonal projector $\rmP$ of dimension $d$ whose subspace robustly approximates this dataset. The least $q$-th power deviations formulation for $q>0$, or least absolute deviations when $q=1$ \citep{lerman2018overview}, seeks $\rmP$ that minimizes 
\begin{equation}\label{eq:lossrsrorig}
    \hat{L}(\rmP) = \sum_{t=1}^N \norm{ \left( \rmI - \rmP \right) \rvy^{(t)}}_2^q ~.
\end{equation}
The solution of this problem is robust to some outliers when $q \leq 1$ \citep{lp_recovery_part1_11, lerman2017fast}; furthermore, $q<1$ can result in a wealth of local minima and thus $q=1$ is preferable \citep{lp_recovery_part1_11, lerman2017fast}.

A similar loss function to \eqref{eq:lossrsrorig} for RSRAE is
\begin{equation}\label{eq:lossRSR}
\begin{split}
    L_{\rm{RSR}}^q (\rmA) ~=~ & \lambda_1 L_{\rm{RSR_1}}(\rmA) + \lambda_2 L_{\rm{RSR_2}}(\rmA) \\
    ~:=~ & \lambda_1 \sum_{t=1}^N \norm{\rvz^{(t)} - \rmA^{\rm{T}}\underset{\tilde{\rvz}^{(t)}}{\underbrace{
    \rmA \rvz^{(t)}}}}_2^q + \lambda_2 \norm{\rmA \rmA^{\rm{T}} - \rmI_d}_{\rm{F}}^2 ~,
\end{split}
\end{equation}
where $\rmA^{\rm{T}}$ denotes the transpose of $\rmA$, $\rmI_d$ denotes the $d \times d$ identity matrix and $\norm{\cdot}_{\rm{F}}$ denotes the Frobenius norm. Here $\lambda_1, \lambda_2 > 0$ are predetermined hyperparameters, though we later show that one may solve the underlying problem without using them. We note that the first term in the weighted sum of \eqref{eq:lossRSR} is close to \eqref{eq:lossrsrorig} as long as $\rmA^{\rm{T}} \rmA$ is close to an orthogonal projector. To enforce this requirement we introduced the second term in the weighted sum of \eqref{eq:lossRSR}. 
In Appendix~\ref{sec:more_RSR_term} we discuss further properties of the RSR energy and its minimization. 

To emphasize the effect of outlier removal, we take $p=1$ in \eqref{eq:RSRAElossAE} and $q=1$ in \eqref{eq:lossRSR}. That is, we use the $l_{2,1}$ norm, or the formulation of least absolute deviations, for both  reconstruction and RSR. The loss function of RSRAE is the sum of
the two loss terms in \eqref{eq:RSRAElossAE} and \eqref{eq:lossRSR}, that is, \begin{equation}
\label{eq:combined}
L_{\rm{RSRAE}}(\mathscr{E}, \rmA, \mathscr{D}) = L_{\rm{AE}}^1 (\mathscr{E}, \rmA, \mathscr{D}) + L_{\rm{RSR}}^1 (\rmA). \end{equation}

We remark that the sole minimization of $L_{\rm{AE}}^1$, without $L_{\rm{RSR}}^1$, is not effective for anomaly detection. We  numerically demonstrate this in \Secref{subsec:cprnorm} and also try to 
explain it in Section \ref{sec:heuristic}.

Our proposed algorithm for optimizing 
\eqref{eq:combined}, which we refer to as the RSRAE algorithm, uses alternating minimization. It iteratively backpropagates the three terms $L_{\rm{AE}}^1$, $L_{\rm{RSR_1}}$, $L_{\rm{RSR_2}}$ and accordingly updates the parameters of the RSR autoencoder. 
For clarity, we describe this basic procedure in Algorithm~\ref{alg:the_alg} of Appendix~\ref{sec:alg}.
It is independent of the values of the parameters $\lambda_1$ and $\lambda_2$.
Note that the additional gradient step with respect to the RSR loss just updates the parameters in $\rmA$. Therefore it does not significantly increase the epoch time of a standard autoencoder for anomaly detection. Another possible method, which we refer to as RSRAE+, is direct minimization of $L_{\rm{RSRAE}}$ with predetermined $\lambda_1$ and $\lambda_2$ via auto-differentiation (see Algorithm \ref{alg:the_alg_plus} of Appendix~\ref{sec:alg}). \Secref{subsec:cprnorm} and Appendix~\ref{subsec:cprnormnotshown} demonstrate that in general, RSRAE performs better than RSRAE+, though it is possible that similar performance can be achieved by carefully tuning the parameters $\lambda_1$ and $\lambda_2$ when implementing RSRAE+. 

We remark that a standard autoencoder is obtained by minimizing only $L_{\rm{AE}}^2$, without the RSR loss. One might hope that minimizing $L_{\rm{AE}}^1$ may introduce the needed robustness. However, \Secref{subsec:cprnorm} and Appendix~\ref{subsec:cprnormnotshown} demonstrate that results obtained by minimizing $L_{\rm{AE}}^1$ or $L_{\rm{AE}}^2$ are comparable, and are worse than those of RSRAE and RSRAE+.

\section{{Experimental Results}}
\label{sec:real}

We test our method \footnote{{Our implementation is available at \url{https://github.com/dmzou/RSRAE.git}}}on five datasets: Caltech 101 \citep{fei2007learning}, Fashion-MNIST \citep{xiao2017fashion}, Tiny Imagenet (a small subset of Imagenet \citep{russakovsky2015imagenet}), Reuters-21578 \citep{lewis1997reuters} and 20 Newsgroups \citep{Lang95}.

\textbf{Caltech 101} contains 9,146 RGB images labeled according to 101 distinct object categories. We take the 11 categories that contain at least 100 images and randomly choose 100 images per category. We preprocess all 1100 images to have  size 32 $\times$ 32 $\times$ 3 and pixel values normalized between $-1$ and $1$. In each experiment, the inliers are the 100 images from a certain category and we sample $c$ $\times$ 100 outliers from the rest of 1000 images of other categories, where $c \in \{0.1, 0.3, 0.5, 0.7, 0.9\}$.

\textbf{Fashion-MNIST} contains 28 $\times$ 28 grayscale images of clothing and accessories, which are categorized into 10 classes. We use the test set which contains 10,000 images and  normalize pixel values to lie in $[-1,1]$. In each experiment, we fix a class and the inliers are the test images in this class. We randomly sample $c$ $\times$ 1,000 outliers from the rest of classes (here and below $c$ is as above). Since there are around 1000 test images in each class, the outlier ratio is approximately $c$.

\textbf{Tiny Imagenet} contains 200 classes of RGB images from a distinct subset of Imagenet. We select 10 classes with 500 training images per class. We preprocess the images to have size 32 $\times$ 32 $\times$ 3 and pixel values in $[-1,1]$. We further represent the images by deep features obtained by a ResNet \citep{he2016identity} with dimension 256 (Appendix~\ref{subsec:tinyimagenetwithoutdeep} provides results for the raw images).
In each experiment, 500 inliers are from a fixed class and $c \times 500$ outliers are from the rest of classes. 

\textbf{Reuters-21578} contains 90 text categories with multi-labels. We consider the five largest classes with single labels and randomly sample from them 360 documents per class. The documents are preprocessed into vectors of size 26,147 by sequentially applying the TFIDF transformer and Hashing vectorizer \citep{rajaraman2011mining}. In each experiment, the inliers are the documents of a fixed class and $c$ $\times$ 360 outliers are randomly sampled from the other classes. 

\textbf{20 Newsgroups} contains newsgroup documents with 20 different labels. We sample 360 documents per class and preprocess them as above into vectors of size 10,000.
In each experiment, the inliers are the documents from a fixed class and $c$ $\times$ 360 outliers are sampled from the other classes.

\subsection{Benchmarks and setting}\label{subsec:benchmark}

We compare RSRAE with the following benchmarks: Local Outlier Factor (LOF) \citep{breunig2000lof}, One-Class SVM (OCSVM) \citep{scholkopf2000support, amer2013enhancing}, Isolation Forest (IF) \citep{liu2012isolation}, Deep Structured Energy Based Models (DSEBMs) \citep{zhai2016deep}, Geometric Transformations (GT) \citep{golan2018deep}, and Deep Autoencoding Gaussian Mixture Model (DAGMM) \citep{zong2018deep}. Of those benchmarks, LOF, OCSVM and IF are traditional, while powerful methods, for unsupervised anomaly detection and do not involve neural networks. DSEBMs, DAGMM and GT are more recent and all involve neural networks. DSEBMs is built for unsupervised anomaly detection. DAGMM and GT are designed for semi-supervised anomaly detection, but allow corruption. We use them to learn a model for the inliers and assign anomaly scores using the combined set of both inliers and outliers. GT only applies to image data. We briefly describe these methods in Appendix \ref{sec:describe_baselines}.

We implemented DSEBMs, DAGMM and GT using the codes\footnote{\url{https://github.com/izikgo/AnomalyDetectionTransformations}} from \citet{golan2018deep} with minimal modification so that they adapt to the data described above and the available GPUs in our machine. The LOF, OCSVM and IF methods are adapted from the scikit-learn packages.

We describe the structure of the RSRAE as follows. For the image datasets without deep features, the encoder consists of three convolutional layers: 5 $\times$ 5 kernels with 32 output channels, strides 2; 5 $\times$ 5 kernels with 64 output channels, strides 2; and 3 $\times$ 3 kernels with 128 output channels, strides 2. The output of the encoder is flattened and the RSR layer transforms it into a 10-dimensional vector. That is, we fix $d=10$ in all experiments. The decoder consists of a dense layer that maps the output of the RSR layer into a vector of the same shape as the output of the encoder, and three deconvolutional layers: 3 $\times$ 3 kernels with 64 output channels, strides 2; $5 \times 5$ kernels with 32 output channels, strides 2; $5 \times 5$ kernels with 1 (grayscale) or 3 (RGB) output channels, strides 2. For the preprocessed document datasets or the deep features of Tiny Imagenet, the encoder is a fully connected network with size (32, 64, 128), the RSR layer linearly maps the output of the encoder to dimension 10, and the decoder is a fully connected network with size (128, 64, 32, $D$) where $D$ is the dimension of the input. Batch normalization is applied to each layer of the encoders and the decoders. The output of the RSR layer is $\ell_2$-normalized before applying the decoder. For DSEBMs and DAGMM we use the same number of layers and the same dimensions in each layer for the autoencoder as in RSRAE. For each experiment, the RSRAE model is optimized with Adam using a learning rate of 0.00025 and 200 epochs. The batch size is 128 for each gradient step. The setting of training is consistent for all the neural network based methods. 

{The two main hyperparameters of RSRAE are the intrinsic dimension $d$ and learning rate. Their values were fixed above. Appendix~\ref{sec:sensitivity} demonstrates stability to changes in these values.}

All experiments were executed on a Linux machine with 64GB RAM and four GTX1080Ti GPUs.
For all experiments with neural networks, we used TensorFlow and Keras. We report runtimes in  Appendix~\ref{sec:runtime}.

\subsection{Results}\label{subsec:res}

We summarize the precision and recall of our experiments by the AUC (area under curve) and AP (average precision) scores. For completeness, we include the definitions of these common scores in Appendix~\ref{sec:describe_baselines}.  We
compute them by considering the outliers as ``positive''.
We remark that we did not record the precision-recall-F1 scores, as in \citet{xia2015learning,zong2018deep}, since in practice it requires knowledge of the outlier ratio.

Figs.~\ref{fig:aucapall} and \ref{fig:aucapall2} present the AUC and AP scores of RSRAE and the methods described in \Secref{subsec:benchmark} for the datasets described above, where GT is only applied to image data without deep features. For each constant $c$ (the outlier ratio) and each method, we average the AUC and AP scores over 5 runs with different random initializations and also compute the standard deviations. For brevity of presentation, we report the averaged scores among all classes and designate the averaged standard deviations by bars. 

The results indicates that RSRAE clearly outperforms other methods in most cases, especially when $c$ is large. Indeed, the RSR layer was designed to handle large outlier ratios. For Fashion MNIST and Tiny Imagenet with deep features, IF performs similarly to RSRAE, but IF performs poorly on the document datasets. OCSVM is the closest to RSRAE for the document datasets but it is generally not so competitive for the image datasets.

\begin{figure}[b!]
\centering
\begin{minipage}[t]{0.48\textwidth}
\rotatebox{90}{\null \qquad Caltech 101}
\centering
\includegraphics[width=6cm]{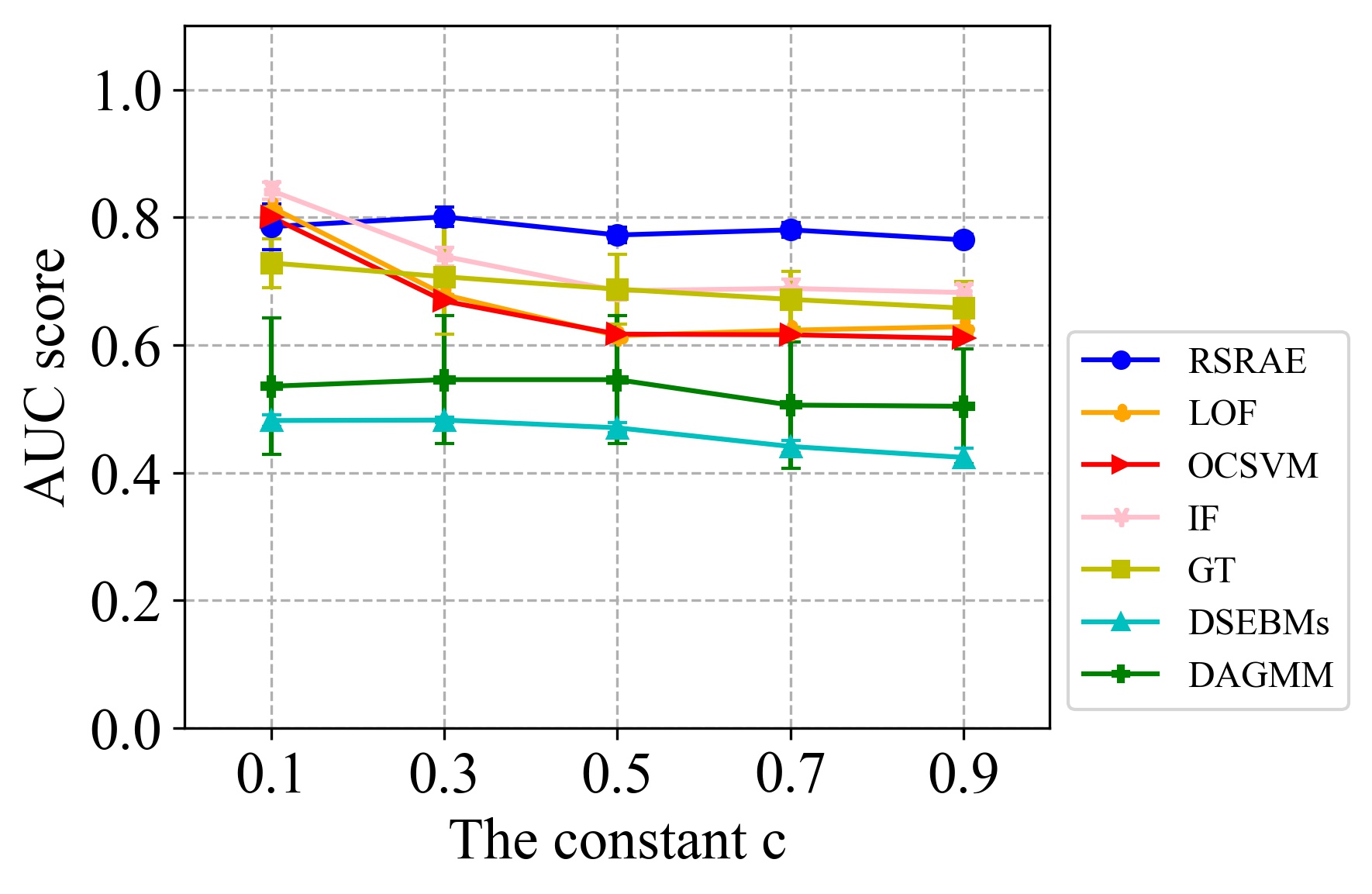}
\end{minipage}
\begin{minipage}[t]{0.48\textwidth}
\centering
\includegraphics[width=6cm]{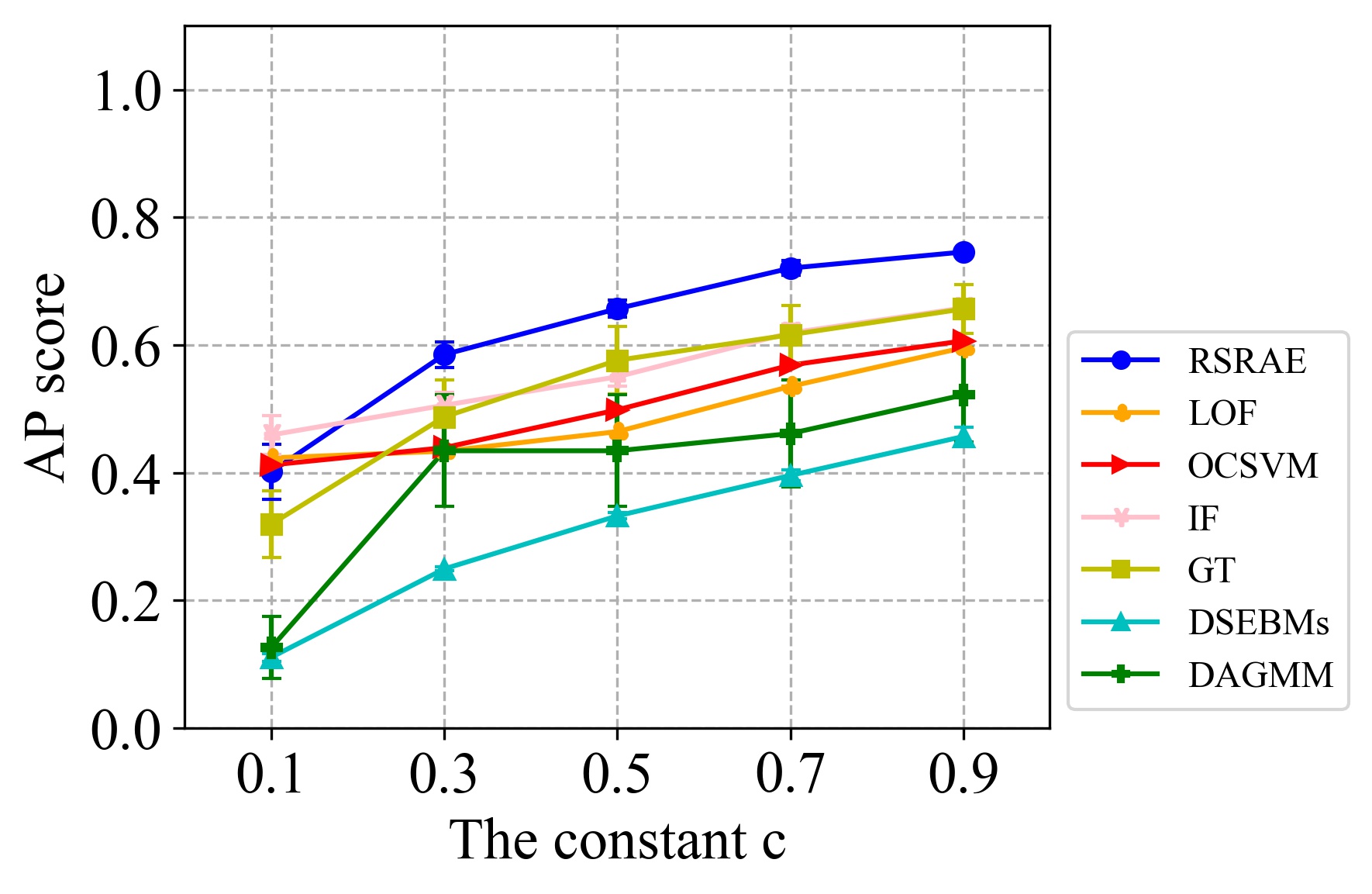}
\end{minipage}
 
\centering
\begin{minipage}[t]{0.48\textwidth}
\rotatebox{90}{\null \qquad Fashion MNIST}
\centering
\includegraphics[width=6cm]{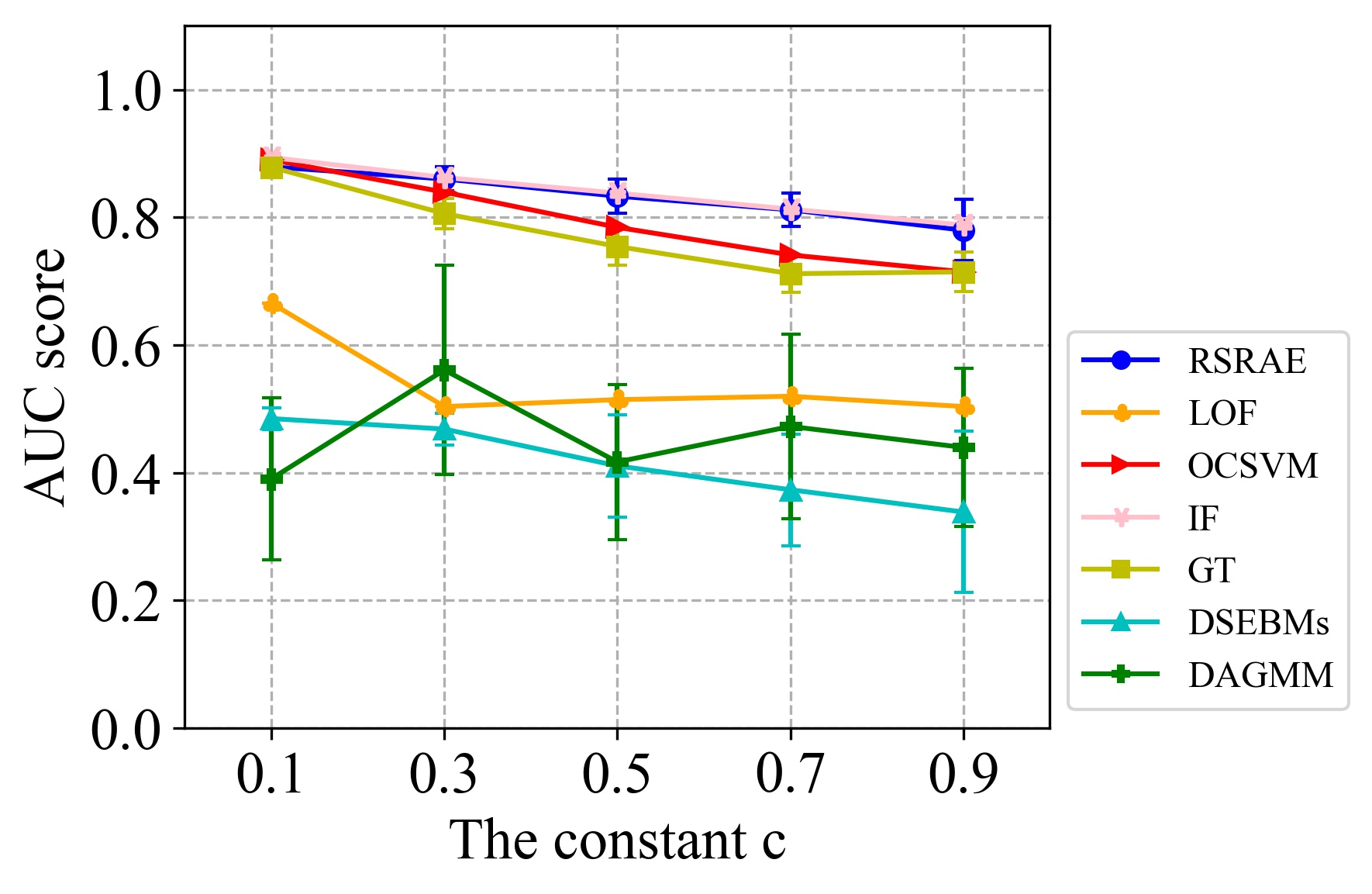}
 
\end{minipage}
\begin{minipage}[t]{0.48\textwidth}
\centering
\includegraphics[width=6cm]{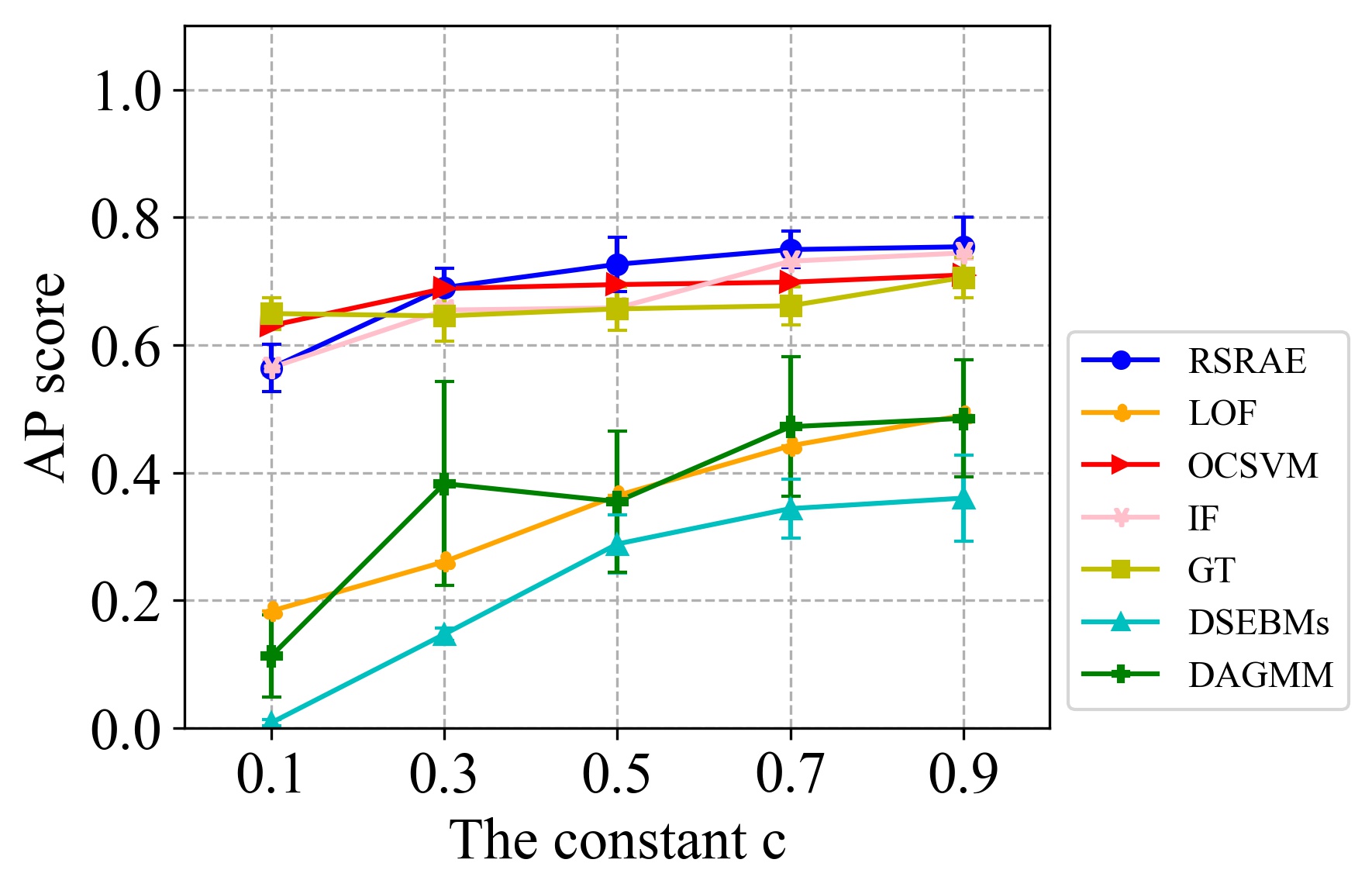}
 
\end{minipage}

\caption{AUC and AP scores for RSRAE using Caltech 101 and Fashion MNIST.}
\label{fig:aucapall}

\end{figure}

\begin{figure}[htb]

\centering
\begin{minipage}[t]{0.48\textwidth}
\rotatebox{90}{\null \qquad Tiny Imagenet}
\centering
\includegraphics[width=6cm]{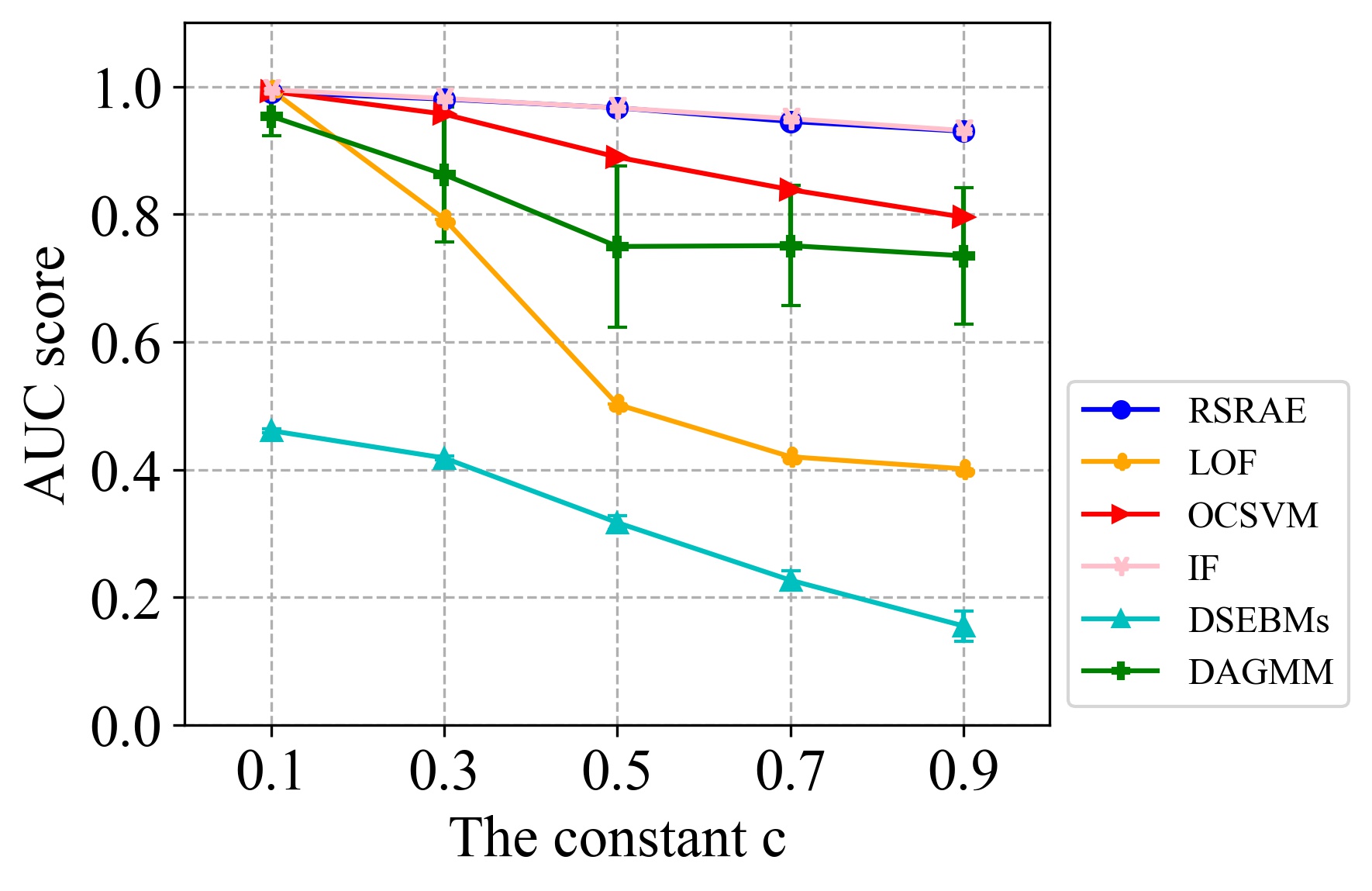}
\end{minipage}
\begin{minipage}[t]{0.48\textwidth}
\centering
\includegraphics[width=6cm]{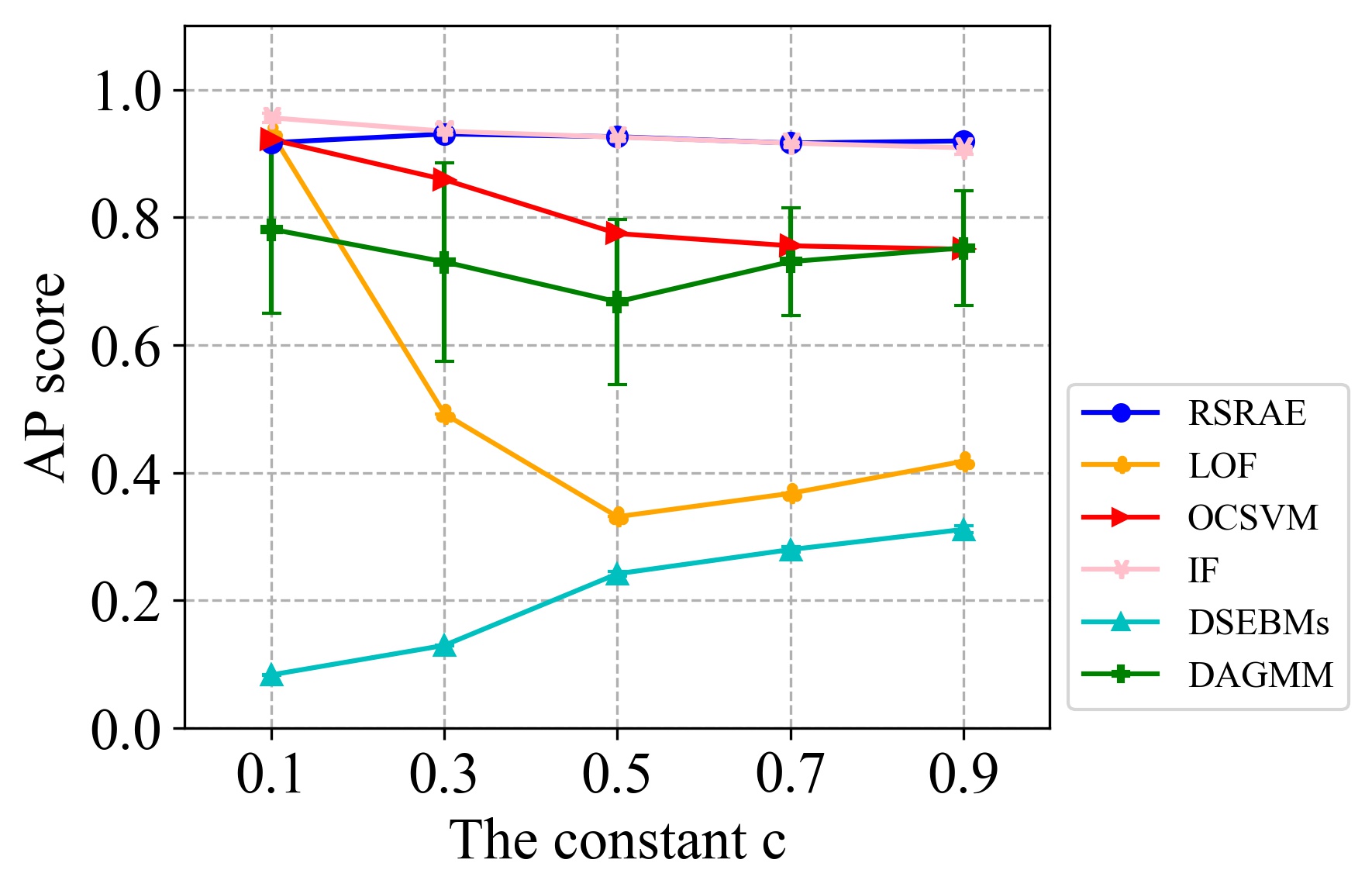}
\end{minipage}

\centering
\begin{minipage}[t]{0.48\textwidth}
\rotatebox{90}{\null \qquad Reuters-21578}
\centering
\includegraphics[width=6cm]{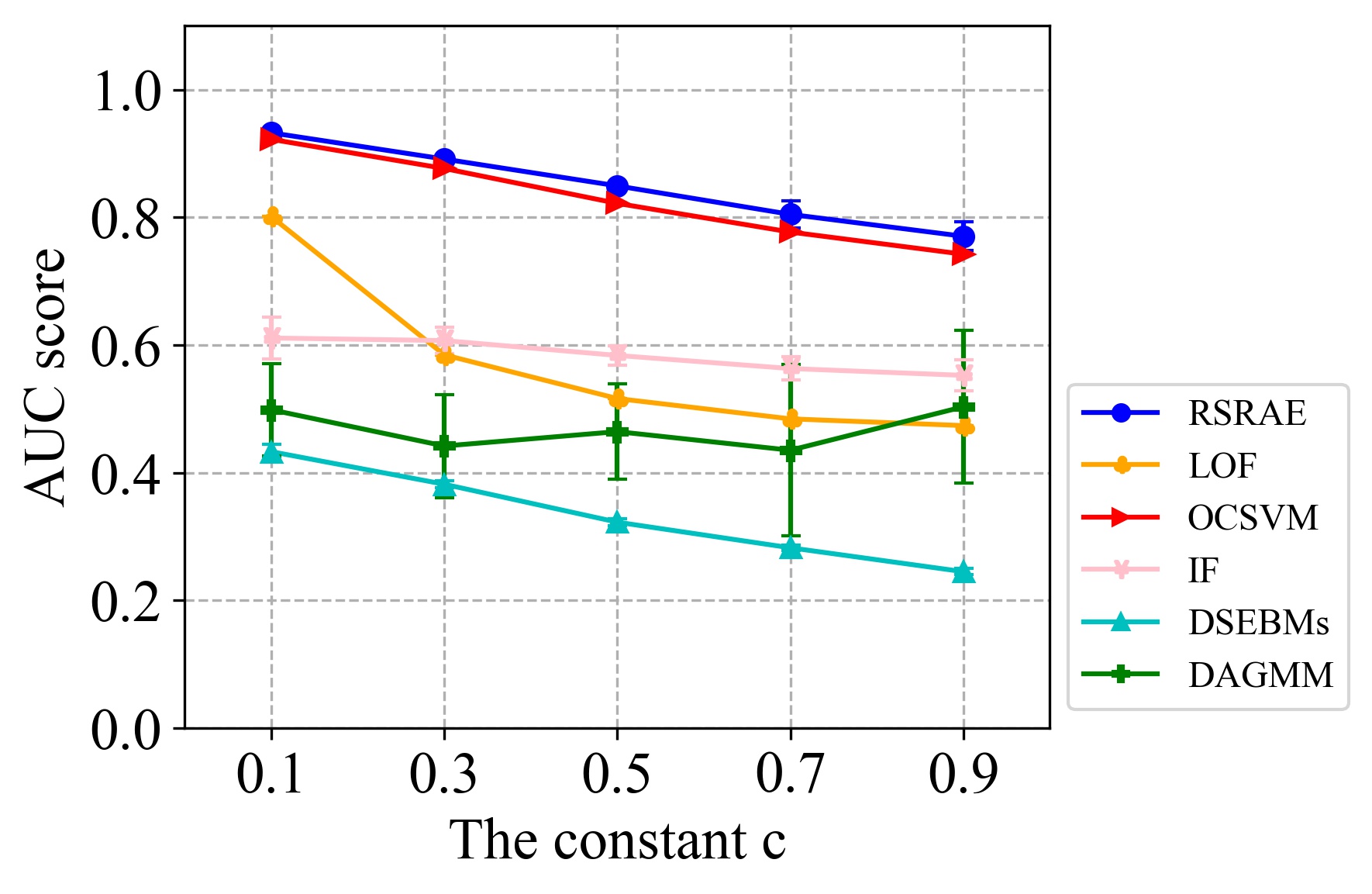}
\end{minipage}
\begin{minipage}[t]{0.48\textwidth}
\centering
\includegraphics[width=6cm]{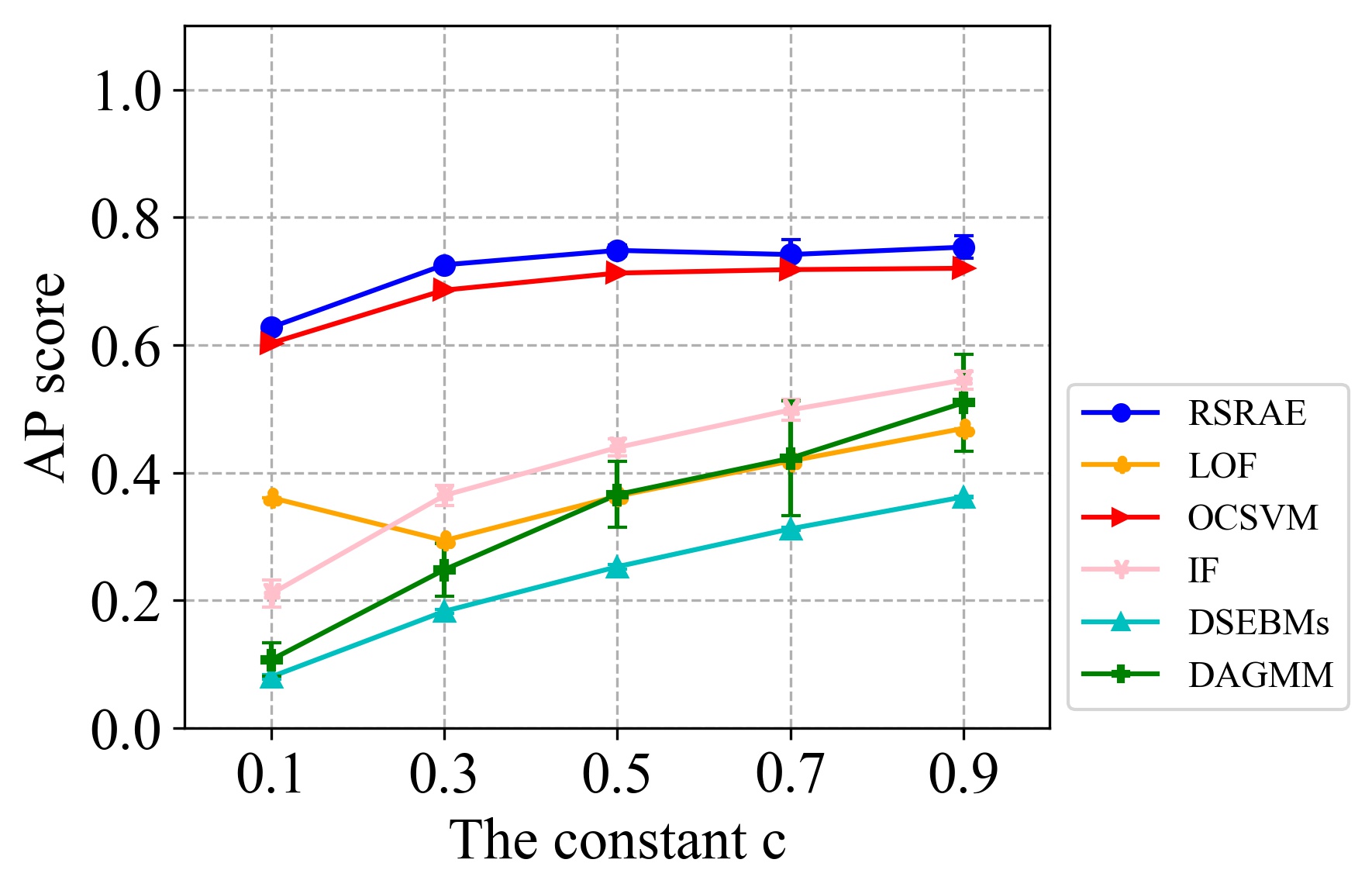}
\end{minipage}
 
\centering
\begin{minipage}[t]{0.48\textwidth}
\rotatebox{90}{\null \qquad 20 Newsgroups}
\centering
\includegraphics[width=6cm]{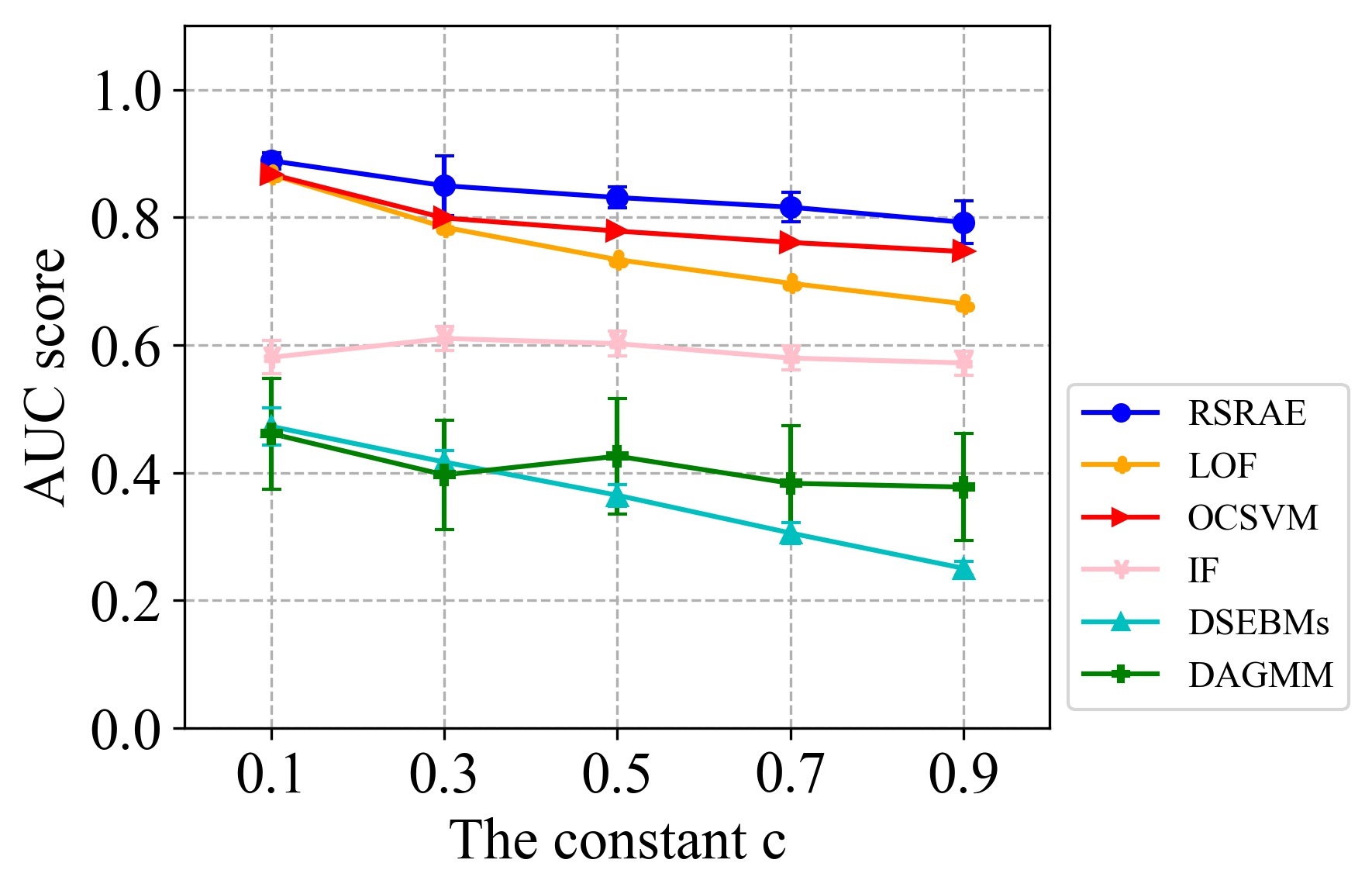}
\end{minipage}
\begin{minipage}[t]{0.48\textwidth}
\centering
\includegraphics[width=6cm]{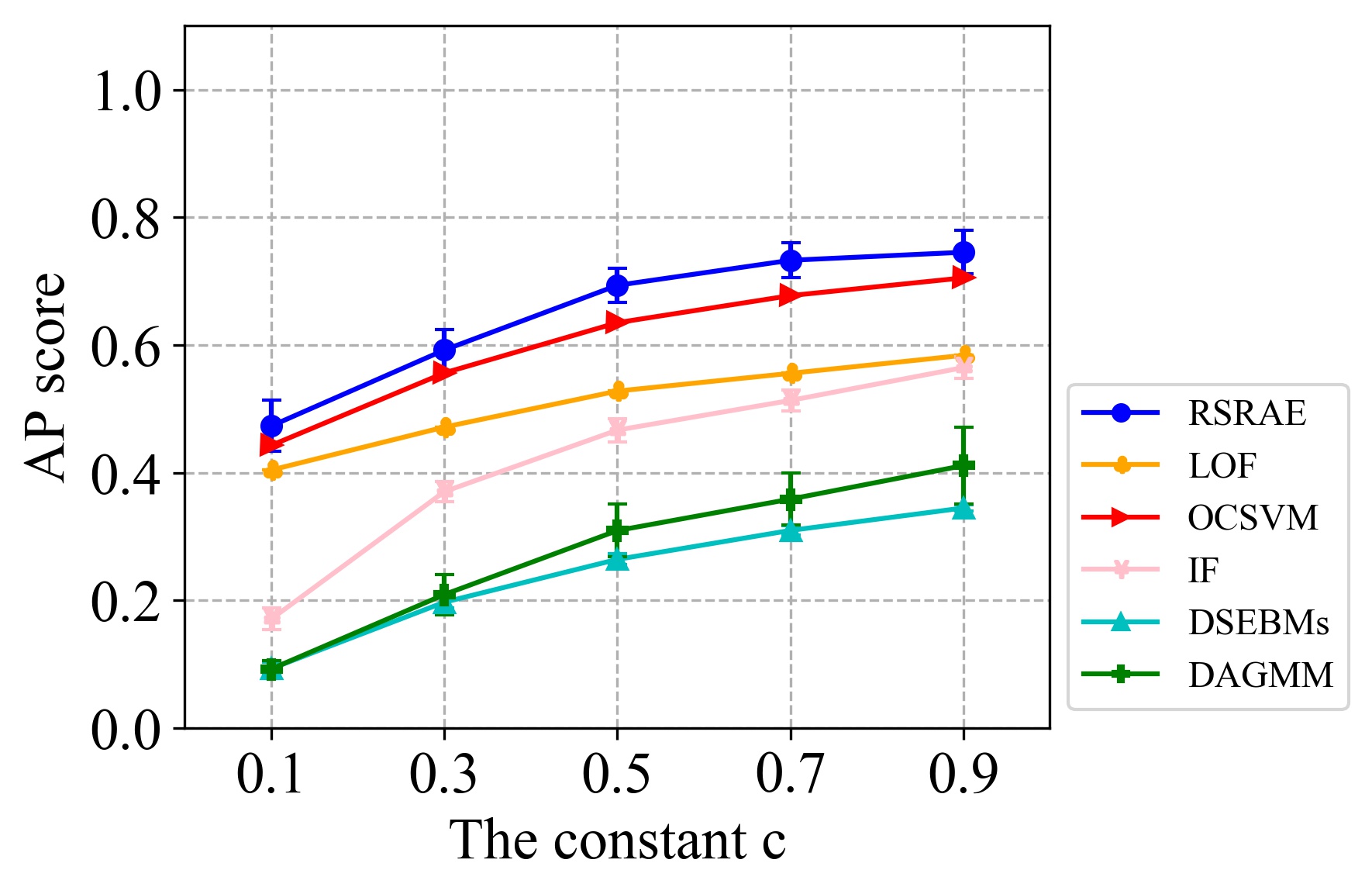}
\end{minipage}

\caption{AUC and AP scores for RSRAE using Tiny Imagenet with deep features, Reuters-21578 and 20 Newsgroups.}
\label{fig:aucapall2}
\end{figure}

\subsection{Comparison with Variations of RSRAE}
\label{subsec:cprnorm}

We use one image dataset (Caltech 101) and one document dataset (Reuters-21578) and compare between RSRAE and three variations of it. The first one is RSRAE+ (see \Secref{sec:alg_describe}) with $\lambda_1 = \lambda_2 = 0.1$ in \eqref{eq:lossRSR} (these parameters were optimized on 20 Newsgroup, though results with other choices of parameters are later demonstrated in \Secref{subsec:hyperparameters}). 
The next two are simpler autoencoders without RSR layers: {AE-1}  minimizes $L_{\rm{AE}}^1$, the $\ell_{2,1}$ reconstruction loss;
and {AE} minimizes $L_{\rm{AE}}^2$, the $\ell_{2,2}$ reconstruction loss (it is a regular autoencoder for anomaly detection). We maintain the same architecture as that of RSRAE, including the matrix $\rmA$, but use different loss functions. 

Fig.~\ref{fig:cprall} reports the AUC and AP scores. We see that for the two datasets RSRAE+ with the prespecified $\lambda_1$ and $\lambda_2$ does not perform as well as RSRAE, but its performance is still better than AE and AE-1. This is expected since we chose $\lambda_1$ and $\lambda_2$ after few trials with a different dataset, whereas RSRAE is independent of these parameters.
The performance of AE and AE-1 is clearly worse, and they are also not as good as some methods compared with in \Secref{subsec:res}.
At last, AE is generally comparable with AE-1. Similar results are noticed for the other datasets in Appendix~\ref{subsec:cprnormnotshown}.

\begin{figure}[ht]
\centering
\begin{minipage}[t]{0.48\textwidth}
\rotatebox{90}{\null \qquad Caltech 101}
\centering
\includegraphics[width=6cm]{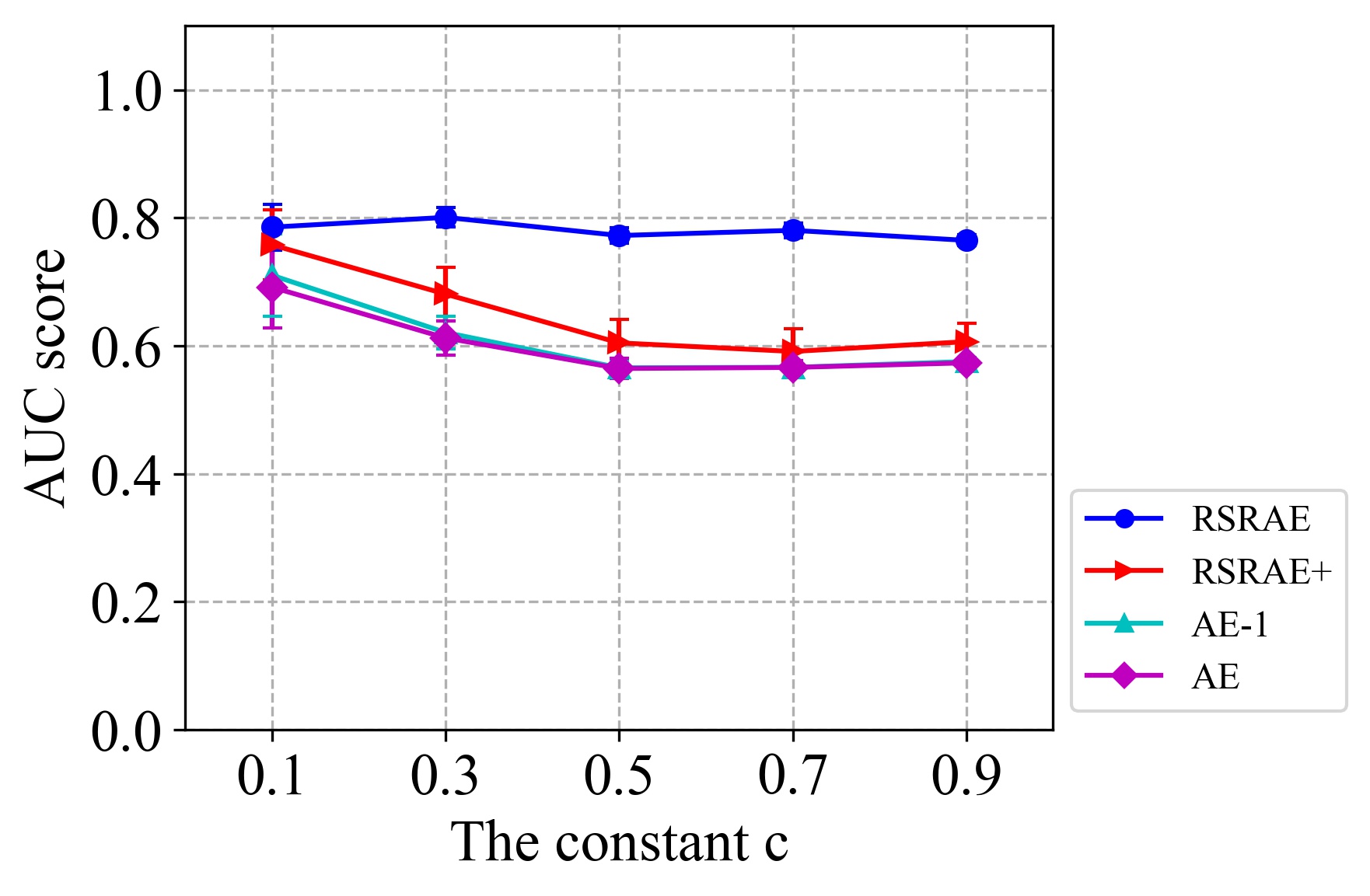}
\end{minipage}
\begin{minipage}[t]{0.48\textwidth}
\centering
\includegraphics[width=6cm]{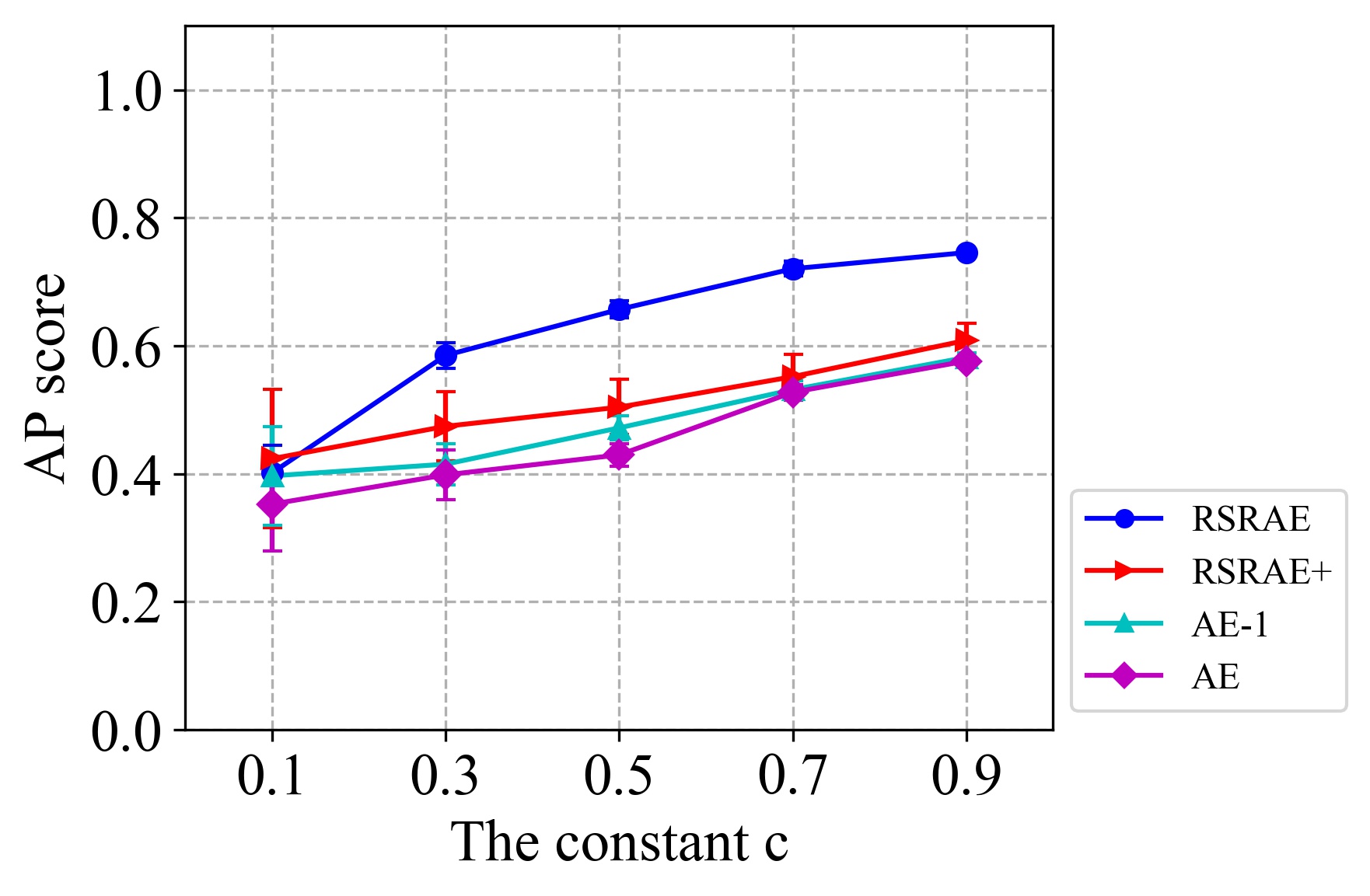}
\end{minipage}

\centering
\begin{minipage}[t]{0.48\textwidth}
\rotatebox{90}{\null \qquad Reuters-21578}
\centering
\includegraphics[width=6cm]{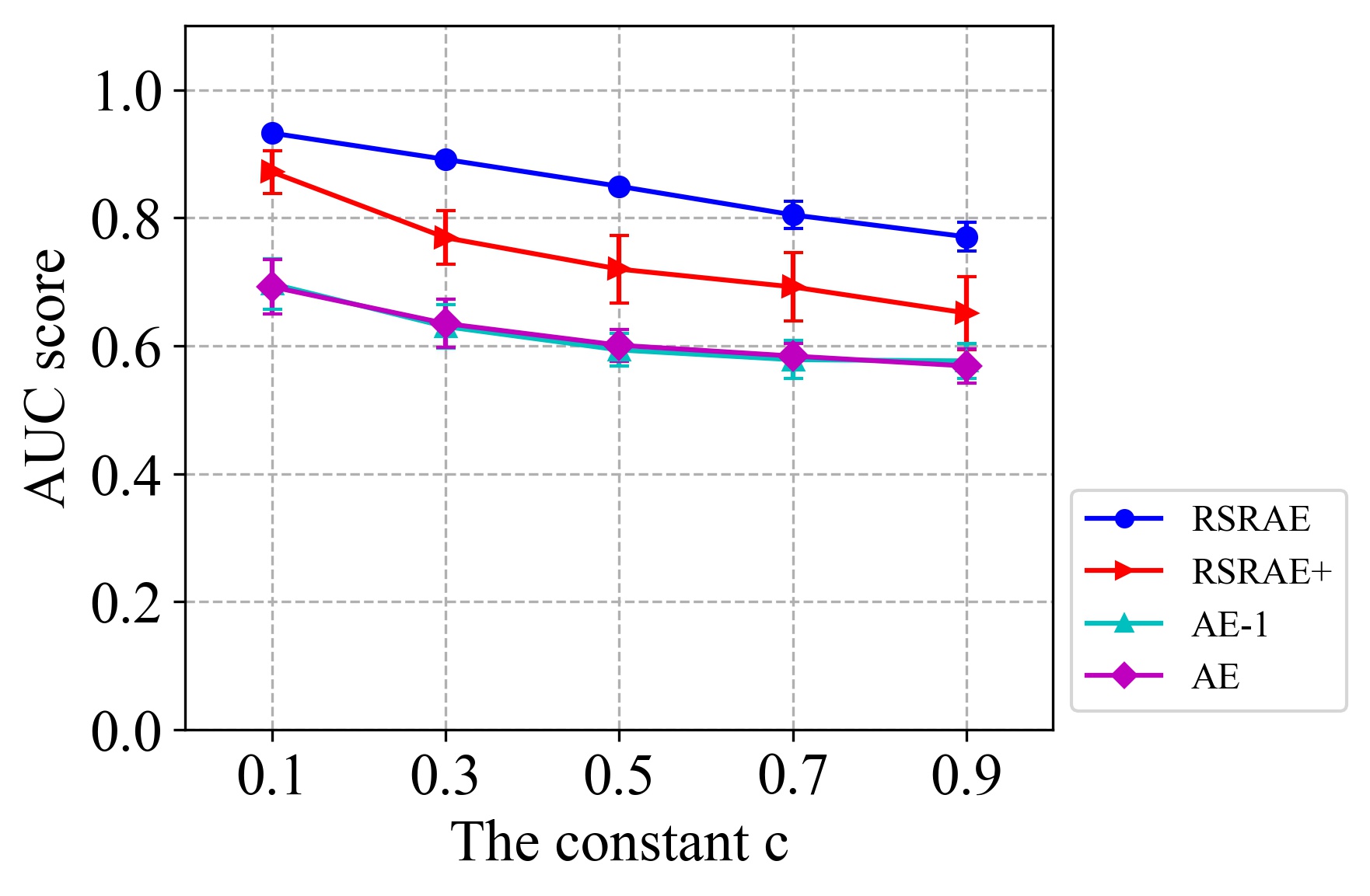}
\end{minipage}
\begin{minipage}[t]{0.48\textwidth}
\centering
\includegraphics[width=6cm]{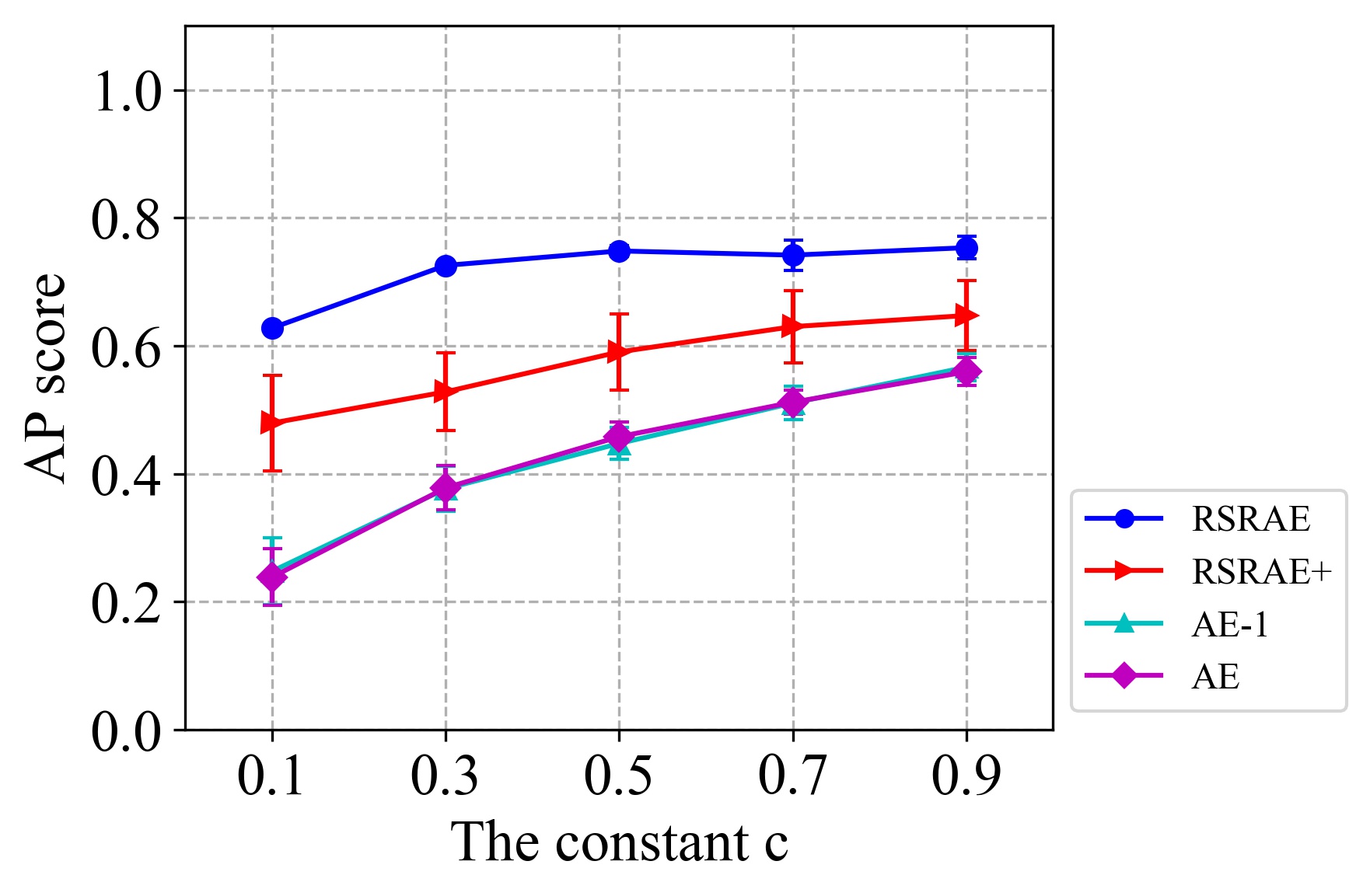}
\end{minipage}
\caption{AUC and AP scores for RSRAE and alternative formulations using Caltech 101 and Reuters-21578.}
\label{fig:cprall}
\end{figure}


\section{Related theory for the RSR penalty}
\label{sec:relatedtheory}

We explain here why we find it natural to incorporate RSR within a neural network. In Section \ref{sec:heuristic} we first review the mathematical idea of an autoencoder and discuss the robustness of a linear autoencoder with an $\ell_{2,1}$ loss (i.e., RSR loss). We then explain why a general autoencoder with an  $\ell_{2,1}$ loss is not expected to be robust to outliers and why an RSR layer can improve its robustness. Section \ref{sec:generative} is a first step of extending this view to a generative network. It establishes some robustness of WGAN with a linear generator, but the extension of an RSR layer to WGAN is left as an open problem. 
\subsection{Robustness and related properties of autoencoders} 
\label{sec:heuristic}
Mathematically, an autoencoder for a dataset $\{\rvx^{(t)}\}_{t=1}^N \subset \R^D$ and a latent dimension $d <D$ is composed of an encoder $\mathscr{E}: \R^D \rightarrow \R^d$ and a decoder $\mathscr{D}: \R^d \rightarrow \R^D$ that minimize the following energy function with $p=2$:
\begin{equation}\label{eq:lossAE}
    \sum_{t=1}^N \norm{\rvx^{(t)} - \mathscr{D} \circ \mathscr{E} ({\rvx}^{(t)})}_2^p ~,
\end{equation}
where $\circ$ denotes function decomposition.
It is a natural nonlinear generalization of PCA \citep{goodfellow2016deep}. Indeed, in the case of a linear autoencoder, $\mathscr{E}$ and $\mathscr{D}$ are linear maps represented by  matrices  $\rmE \in \R^{d \times D}$ and $\rmD \in \R^{D \times d}$, respectively, that need to minimize (among such matrices) the following loss function with $p=2$ 
\begin{equation}\label{eq:lossAEPCA}
    \sum_{t=1}^N \norm{\rvx^{(t)} - \rmD \rmE {\rvx}^{(t)}}_2^p ~.
\end{equation}

We explain in Appendix \ref{subsec:linearaeforrsr} that if  $(\rmD^{\star},\rmE^{\star})$ is a minimizer of \eqref{eq:lossAEPCA} with $p=2$ (among  $\rmE \in \R^{d \times D}$ and $\rmD \in \R^{D \times d}$), then $\rmD^{\star} \rmE^{\star}$ is the orthoprojector on the $d$-dimensional PCA subspace. This means, that the latent code
$\{\rmE^{\star} \rvx^{(t)}\}_{t=1}^N$ parametrizes the PCA subspace and an additional application of
$\rmD^{\star}$ to $\{\rmE^{\star} \rvx^{(t)}\}_{t=1}^N$ results in the projections of the data points $\{\rvx^{(t)}\}_{t=1}^N$ onto the PCA subspace. The recovery error for data points on this subspace is zero (as $\rmD^{\star} \rmE^{\star}$ is the identity on this subspace), and in general, this error is the Euclidean distance to the PCA subspace, $\norm{\rvx^{(t)} - \rmD^{\star} \rmE^{\star} {\rvx}^{(t)}}_2$.

Intuitively, the idea of a general autoencoder is the same. It aims to fit a nice structure, such as a  manifold,  to the data, where ideally $\mathscr{D} \circ \mathscr{E}$ is a projection onto this nice structure. This idea can only be made rigorous for data approximated by simple geometric structure, e.g., by a graph of a sufficiently smooth function.

In order to extend these methods to anomaly detection, one needs to incorporate robust strategies, so that the methods can still recover the underlying structure of the inliers, and consequently assign lower recovery errors for the inliers and higher recovery errors for the outliers.
For example, in the linear case, one may assume a set of inliers lying on and around a subspace and an arbitrary set of outliers (with some restriction on their fraction). PCA, and equivalently, the linear autoencoder that minimizes \eqref{eq:lossAEPCA} with $p=2$, is not robust to general outliers. Thus it is not expected to distinguish well between inliers and outliers in this setting. As explained in Appendix \ref{subsec:linearaeforrsr}, minimizing \eqref{eq:lossAEPCA} with $p=1$ gives rise to the least absolute deviations subspace. This subspace can be robust to outliers under some conditions, but these conditions are restrictive (see examples in \citet{lp_recovery_part1_11}). In order to deal with more adversarial outliers, it is advised to first normalize the data to the sphere (after appropriate centering) and then estimate the least absolute deviations subspace. This procedure was theoretically justified for a general setting of adversarial outliers in \citet{maunu2019robust}.

As in the linear case, an autoencoder that uses the loss function in \eqref{eq:lossAE} with $p=1$ may not be robust to adversarial outliers. Unlike the linear case, there are no simple normalizations for this case. Indeed, the normalization to the sphere can completely distort the structure of an underlying manifold and it is also hard to center in this case. Furthermore, there are some obstacles of establishing robustness for the nonlinear case even under special assumptions. 

Our basic idea for a robust autoencoder is to search for a latent low-dimensional code for the inliers within a larger embedding space. The additional RSR loss focuses on parametrizing the low-dimensional subspace of the encoded inliers, while being robust to outliers. 
Following the above discussion, we enhance such robustness by applying a normalization similar to the one discussed above, but adapted better to the structure of the network (see \Secref{subsec:benchmark}). 
The emphasis of the RSR layer is on appropriately encoding the inliers, where the encoding of the outliers does not matter.  It is okay for the encoded outliers to lie within the subspace of the encoded inliers, as this will result in large recovery errors for the outliers. However, in general, most encoded outliers lie away from this subspace, and this is why such a mechanism is needed (otherwise, a regular autoencoder may obtain a good embedding).

\subsection{Relationship of the RSR loss with linearly generated WGAN}
\label{sec:generative}

An open problem is whether RSR can be used within other neural network structures for unsupervised learning, such as variational autoencoders (VAEs) \citep{kingma2013auto} and generative adversarial networks (GANs) \citep{goodfellow2014generative}. The latter two models are used in anomaly detection with a score function similar to the reconstruction error \citep{an2015variational,vasilev2018q,zenati2018efficient,kliger2018novelty}.

While we do not solve this problem, we establish a natural relationship between RSR and Wasserstein-GAN (WGAN) \citep{arjovsky2017wasserstein, gulrajani2017improved} with a linear generator, which is analogous to the example of a linear autoencoder mentioned above. 

Let $W_p$ denote the $p$-Wasserstein distance in $\R^D$ ($p \geq 1$). That is, for two probability distributions $\mu, \nu$ on $\R^D$,
\begin{equation}\label{eq:defwasserstein}
W_p(\mu, \nu) = \left( \inf_{\pi \in \Pi(\mu, \nu)} \E_{(\rvx,\rvy) \sim \pi} \norm{\rvx-\rvy}_2^p \right)^{1/p} ~,
\end{equation}
where $\Pi(\mu, \nu)$ is the set of joint distributions with $\mu$, $\nu$ as marginals. We formulate the following proposition (while prove it later in Appendix~\ref{sec:proofsubspace}) and then interpret it.

\begin{proposition}\label{prop:subspace}
Let $p\geq 1$ and 
$\rvmu$ be a Gaussian distribution on $\R^D$ with mean $\rvm_X \in \R^D$ and full-rank covariance matrix $\rmSigma_X \in \R^{D \times D}$ (that is, $\rvmu$ is $\mathcal{N} (\rvm_X, \rmSigma_X)$). 
Then \begin{equation}\label{eq:optprob}
\begin{split}
\min_{\rvnu ~\rm{is}~ \mathcal{N}(\rvm_Y, \rmSigma_Y)} \quad & W_p(\rvmu, \rvnu) \\ 
\quad  
\rm{s.t.} \quad \quad \quad &  \rvm_Y \in \R^D \\
\quad & 
\rm{rank} (\rmSigma_Y) = d 
\end{split}
\end{equation}
is achieved when $\rvm_Y=\rvm_X$ and 
$\rmSigma_Y = \rmP_\mathscr{L} \rmSigma_X \rmP_\mathscr{L}$, 
where for $X  \sim \rvmu$
\begin{equation}
\label{eq:rsrganconclusion}
\mathscr{L} = \argminB_{\rm{dim} \mathscr{L} = d} \E \norm{X - \rmP_\mathscr{L} X}_2^p ~.
\end{equation}
\end{proposition}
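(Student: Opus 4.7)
The plan is to sandwich $W_p(\rvmu, \rvnu)$ between a geometric lower bound and an explicit upper bound, then show the two agree at the claimed minimizer.

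First, I would exploit the rank constraint to obtain the lower bound. Since $\mathrm{rank}(\rmSigma_Y) = d$ and $\rvnu$ is Gaussian, the support of $\rvnu$ lies in the $d$-dimensional affine subspace $\mathscr{A}_Y := \rvm_Y + \mathrm{range}(\rmSigma_Y)$. Hence for any coupling $\pi \in \Pi(\rvmu, \rvnu)$,
\[
\E_\pi \norm{X - Y}_2^p \;\geq\; \E_{X \sim \rvmu}\, \mathrm{dist}(X, \mathscr{A}_Y)^p \;=\; \E \norm{(\rmI - \rmP_{\mathscr{L}_Y})(X - \rvm_Y)}_2^p ,
\]
where $\mathscr{L}_Y := \mathrm{range}(\rmSigma_Y)$ and $\rmP_{\mathscr{L}_Y}$ is the orthogonal projection onto it. Taking the infimum over couplings gives a bound on $W_p^p(\rvmu, \rvnu)$ that depends on $\rvnu$ only through the pair $(\rvm_Y, \mathscr{L}_Y)$.

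Next, I would minimize this lower bound over $(\rvm_Y, \mathscr{L}_Y)$. Writing $Z := (\rmI - \rmP_{\mathscr{L}_Y})(X - \rvm_X)$, which is a centered Gaussian and hence centrally symmetric, and $c := (\rmI - \rmP_{\mathscr{L}_Y})(\rvm_X - \rvm_Y)$, the key inequality is the Anderson-type bound
\[
\E\norm{Z + c}_2^p \;\geq\; \E\norm{Z}_2^p \qquad (p \geq 1).
\]
This follows from the pointwise estimate $\norm{z+c}^p + \norm{z-c}^p \geq 2\norm{z}^p$ (the triangle inequality gives $\norm{z+c} + \norm{z-c} \geq 2\norm{z}$, then convexity of $t \mapsto t^p$ completes it) combined with the symmetry $-Z \stackrel{d}{=} Z$. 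Hence the optimal mean is forced to $\rvm_Y = \rvm_X$, and the remaining minimization reduces to $\min_{\dim \mathscr{L}_Y = d} \E \norm{(\rmI - \rmP_{\mathscr{L}_Y})(X - \rvm_X)}_2^p$, which coincides with the problem defining $\mathscr{L}$ in \eqref{eq:rsrganconclusion}.

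Finally, I would exhibit an $\rvnu$ that saturates the bound: take $\rvm_Y = \rvm_X$, $\rmSigma_Y = \rmP_\mathscr{L} \rmSigma_X \rmP_\mathscr{L}$, and use the deterministic coupling $Y := \rvm_X + \rmP_\mathscr{L}(X - \rvm_X)$. A direct check shows $Y \sim \mathcal{N}(\rvm_X, \rmP_\mathscr{L} \rmSigma_X \rmP_\mathscr{L})$, and the transport cost is exactly $\E \norm{(\rmI - \rmP_\mathscr{L})(X - \rvm_X)}_2^p$, matching the lower bound. The main obstacle I anticipate is handling general $p \geq 1$ cleanly: for $p = 2$ the whole argument collapses to the Gelbrich--Dowson--Landau formula and the Anderson step is trivial via the bias-variance decomposition, whereas for general $p$ one must invoke the symmetry inequality above in place of any closed-form expression for $W_p$, and also verify that the centered and non-centered formulations of the optimal subspace agree (which they do in the natural setting $\rvm_X \in \mathscr{L}$, e.g.\ when $\rvmu$ is centered).
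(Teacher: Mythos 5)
Your proposal is correct and takes essentially the same route as the paper's proof: the same lower bound obtained by replacing $\norm{X-Y}_2^p$ with the distance from $X$ to the $d$-dimensional affine support of $\rvnu$, the same symmetry-plus-convexity inequality $\norm{\rvz+\rvc}_2^p + \norm{\rvz-\rvc}_2^p \geq 2\norm{\rvz}_2^p$ (the paper's \eqref{eq:convex}) to force the optimal affine subspace through $\rvm_X$, and the same deterministic coupling $Y = \rvm_X + \rmP_{\mathscr{L}}(X - \rvm_X)$ to show attainment. Your closing worry about centered versus non-centered subspace problems is resolved exactly as in the paper: \eqref{eq:rsrganconclusion} is interpreted as a minimization over \emph{affine} $d$-dimensional subspaces, and the symmetry step shows the minimizer contains $\rvm_X$, reducing it to the centered problem you derived.
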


The setting of this proposition implicitly assumes a linear generator of WGAN. Indeed, the linear mapping, which can be represented by a $d \times D$ matrix, maps a distribution in $\mathcal{N} (\rvm_X, \rmSigma_X)$ into a distribution in 
$\mathcal{N}(\rvm_Y, \rmSigma_Y)$ and reduces the rank of the covariance matrix from $D$ to $d$. The proposition states that in this setting the underlying minimization is closely related to minimizing the loss function \eqref{eq:lossrsrorig}. Note that here $p \geq 1$, however, if one further corrupts the sample, then $p=1$ is the suitable choice \citep{lerman2018overview}. This choice is also more appropriate for WGAN, since there is no $p$-WGAN for $p \neq 1$. 

Nevertheless, training a WGAN is not exactly the same as minimizing the $W_1$ distance \citep{gulrajani2017improved}, since it is difficult to impose the Lipschitz constraint for a neural network. 
Furthermore, in practice, the WGAN generator, which is a neural network, is nonlinear, and thus its output is typically non-Gaussian.
The robustness of WGAN with a linear autoencoder, which we established here, does not extend to a general WGAN (this is similar to our earlier observation that the robustness of a linear autoencoder with an RSR loss does not generalize to a nonlinear autoencoder). We believe that a similar structure like the RSR layer has to be imposed for enhancing the robustness of WGAN, and possibly also other generative networks, but we leave its effective implementation as an open problem.

\section{Conclusion and future work} 
\label{sec:conclude}
We constructed a simple but effective RSR layer within the autoencoder structure for anomaly detection. It is easy to use and adapt. 
We have demonstrated competitive results for image and document data and believe that it can be useful in many other applications.

There are several directions for further exploration of the RSR loss in unsupervised deep learning models for anomaly detection. First, we are interested in theoretical guarantees for RSRAE. A more direct subproblem is understanding the geometric structure of the ``manifold'' learned by RSRAE. 
Second, it is possible that there are better geometric methods to robustly embed the manifold of inliers. For example, one may consider a multiscale incorporation of RSR layers, which we expand on in Appendix~\ref{subsec:betas}.
Third, one may try to incorporate an RSR layer in other neural networks for anomaly detection that use nonlinear dimension reduction. We hope that some of these methods may be easier to directly analyze than our proposed method. For example, we are curious about successful incorporation of robust metrics for GANs or WGANs. In particular, we wonder about extensions of the theory proposed here for WGAN when considering a more general setting.

\subsubsection*{Acknowledgments}
\label{subsubsec: acknowledgement}
This research has been supported by NSF award DMS18-30418. Part of this work was pursued when Dongmian Zou was a postdoctoral associate at the Institute for Mathematics and its Applications at the University of Minnesota. We thank Teng Zhang for his help with proving
Proposition~\ref{prop:subspace} (we discussed a related but different proposition with similar ideas of proofs). We thank Madeline
Handschy for commenting on an earlier version of this paper. 


\bibliography{iclr2020_conference}
\bibliographystyle{iclr2020_conference}


\newpage

\appendix

\section{Details of RSRAE and RSRAE+}
\label{sec:alg}

The implementations of both RSRAE and RSRAE+ are simple. For completeness we provide here their details in algorithm boxes. The codes will be later posted in a supplementary webpage.  Algorithm~\ref{alg:the_alg} describes RSRAE, which minimizes \eqref{eq:combined} by alternating minimization. It denotes the vectors of parameters of the encoder and decoder by $\vtheta$ and $\vvarphi$, respectively.

 \begin{algorithm}[h] 
 \caption{RSRAE 
 }
 \label{alg:the_alg}
 \begin{algorithmic}[1]
 \renewcommand{\algorithmicrequire}{\textbf{Input:}}
 \renewcommand{\algorithmicensure}{\textbf{Output:}}
 \REQUIRE Data $\{\rvx^{(t)}\}_{t=1}^N$; thresholds $\rvepsilon_{\rm{AE}}$, $\rvepsilon_{\rm{RSR_{1}}}$, $\rvepsilon_{\rm{RSR_{2}}}$, $\rvepsilon_{\rm{T}}$;  architecture and initial parameters of $\mathscr{E}$, $\mathscr{D}$, $\rmA$ (including number of columns of $\rmA$); number of epochs $\&$ batches; learning rate for backpropagation; similarity measure
 \ENSURE  Labels of data points as normal or anomalous \\
  \FOR {each epoch}
  \STATE { Divide input data into batches }
  \FOR {each batch}
  \IF {$L_{\rm{AE}}^1(\vtheta, \rmA, \vvarphi)>\rvepsilon_{\rm{AE}}$}
  \STATE Backpropagate $L_{\rm{AE}}^1(\vtheta, \rmA, \vvarphi)$ w.r.t.~$\vtheta, \rmA, \vvarphi$ $\&$ update $\vtheta, \rmA, \vvarphi$
  \ENDIF
  \IF{$L_{\rm{RSR_{1}}}^1(\rmA)>\rvepsilon_{\rm{RSR_{1}}}$}
  \STATE Backpropagate $L_{\rm{RSR_{1}}}^1(\rmA)$ w.r.t.~$\rmA$
  $\&$ update $\rmA$
  \ENDIF
  \IF{$L_{\rm{RSR_{2}}}^1(\rmA)>\rvepsilon_{\rm{RSR_{2}}}$}
  \STATE Backpropagate $L_{\rm{RSR_{2}}}^1(\rmA)$ w.r.t.~$\rmA$ $\&$
  update $\rmA$
  \ENDIF
  \ENDFOR
  \ENDFOR
  \FOR{$t = 1, \ldots, N$}
  \STATE Calculate similarity between
  $\rvx^{(t)}$ and $\tilde{\rvx}^{(t)}$
  \IF {similarity $\geq$  $\rvepsilon_{\rm{T}}$}
  \STATE $\rvx^{(t)}$ is normal
  \ELSE
 \STATE $\rvx^{(t)}$ is anomalous
 \ENDIF
 \ENDFOR
 \RETURN Normality labels for $t = 1, \ldots, N$
 \end{algorithmic} 
 \end{algorithm}

We clarify some guidelines for choosing default parameters, which we follow in all reported experiments. We set $\eps_{\rm{AE}}$, $\eps_{\rm{RSR_1}}$ and $\eps_{\rm{RSR_2}}$ to be zero. In general, we use networks with dense layers but for image data we use convolutional layers. We prefer using $\tanh$ as the activation function due to its smoothness. However, for a dataset that does not lie in the unit cube, we use either a ReLU function if all of its coordinates are positive, or a leaky ReLU function otherwise. The network parameters and the elements of $\rmA$ are initialized to be i.i.d.~standard normal. In all numerical experiments, we set the number of columns of $\rmA$ to be 10, that is, $d=10$. The learning rate is chosen so that there is a sufficient improvement of the loss values after each epoch. Instead of fixing $\eps_{\rm{T}}$, we report the AUC and AP scores for different values of $\eps_{\rm{T}}$.

Algorithm \ref{alg:the_alg_plus} describes RSRAE+, which minimizes \eqref{eq:combined} with fixed $\lambda_1$ and $\lambda_2$ by auto-differentiation.

 \begin{algorithm}[ht] 
 \caption{RSRAE+
}
 \label{alg:the_alg_plus}
 \begin{algorithmic}[1]
 \renewcommand{\algorithmicrequire}{\textbf{Input:}}
 \renewcommand{\algorithmicensure}{\textbf{Output:}}
 \REQUIRE Data $\{\rvx^{(t)}\}_{t=1}^N$; thresholds $\rvepsilon_{\rm{AE}}$, $\rvepsilon_{\rm{T}}$;  architecture and initial parameters of $\mathscr{E}$, $\mathscr{D}$, $\rmA$ (including number of columns of $\rmA$); parameters of the the energy function $\lambda_1$, $\lambda_2$; number of epochs $\&$ batches; learning rate for backpropagation; similarity measure
 \ENSURE  Labels of data points as normal or anomalous \\

  \FOR {each epoch}
  \STATE { Divide input data into batches }
  \FOR {each batch}
  \IF {$L_{\rm{AE}}^1(\vtheta, \rmA, \vvarphi)>\rvepsilon_{\rm{AE}}$}
  \STATE Backpropagate $L_{\rm{AE}}^1(\vtheta, \rmA, \vvarphi) + \lambda_1 L_{\rm{RSR_{1}}}^1(\rmA) + \lambda_2 L_{\rm{RSR_{2}}}^1(\rmA)$ w.r.t.~$\vtheta, \rmA, \vvarphi$ $\&$ update $\vtheta, \rmA, \vvarphi$
  \ENDIF
  \ENDFOR
  \ENDFOR
  \FOR{$t = 1, \ldots, N$}
  \STATE Calculate similarity between
  $\rvx^{(t)}$ and $\tilde{\rvx}^{(t)}$
  \IF {similarity $\geq$  $\rvepsilon_{\rm{T}}$}
  \STATE $\rvx^{(t)}$ is normal
  \ELSE
 \STATE $\rvx^{(t)}$ is anomalous
 \ENDIF
 \ENDFOR
 \RETURN Normality labels for $t = 1, \ldots, N$
 \end{algorithmic} 
 \end{algorithm}

\newpage

\section{Demonstration of RSRAE for artificial data}
\label{sec:artificial}

For illustrating the performance of RSRAE, in comparison with a regular autoencoder, we consider a simple artificial geometric example.
We assume corrupted data whose normal part is embedded in a ``Swiss roll manifold''\footnote{\url{https://scikit-learn.org/stable/modules/generated/sklearn.datasets.make\_swiss\_roll.html}}, which is a two-dimensional manifold in $\R^3$. More precisely, the normal part is obtained by mapping 1,000 points uniformly sampled from the rectangle $[3\pi/2, 9\pi/2] \times [0, 21]$ into $\R^3$ by the function
\begin{equation}
    (s,t) \mapsto (t\cos(t), s, t\sin(t)).
\end{equation} 
The anomalous part is obtained by i.i.d.~sampling of 500 points from an isotropic Gaussian distribution in $\R^3$ with zero mean and standard deviation 2 in any direction. Fig.~\ref{fig:RSRAE_x} illustrates such a sample, where the inliers are in black and the outliers are in blue. We remark that Fig~\ref{fig:AE_x} is identical.

We construct the RSRAE with the following structure. The encoder is composed of fully-connected layers of sizes (32, 64, 128). The decoder is composed of fully connected layers of sizes (128, 64, 32, 3). Each fully connected layer is activated by the leaky ReLU function with $\alpha=0.2$. The intrinsic dimension for the RSR layer, that, is the number of columns of $\rmA$, is $d=2$.

For comparison, we construct the regular autoencoder AE (see \Secref{subsec:cprnorm}). Recall that both of them have the same architecture (including the linear map $\rmA$), but AE minimizes the $\ell_2$ loss function in \eqref{eq:lossAE} (with $p$ = 2) without an additional RSR loss. We optimize both models with 10,000 epochs and a batch gradient descent using Adam \citep{kingma2014adam} with a learning rate of 0.01. 

The reconstructed data ($\tilde{\rmX}$) using  
RSRAE and AE are plotted in Figs.~\ref{fig:RSRAE_xtilde} and \ref{fig:AE_xtilde}, respectively. 
We further demonstrate the output obtained by the encoder and the RSR layer. The output of the encoder, $\rmZ = \mathscr{E}(\rmX)$, lies in $\R^{128}$. For visualization purposes we project it onto a $\R^3$ as follows. We first find two vectors that span the image of $\rmA$  and we add to it the ``principal direction'' of $\rmZ$ orthogonal to the span of $\rmA$. We project $\rmZ$ onto the span of these 3 vectors.  Figs.~\ref{fig:RSRAE_y} and \ref{fig:AE_y} show these projections for RSRAE and AE, respectively. Figs.~\ref{fig:RSRAE_yrsr} and \ref{fig:AE_yrsr} demonstrate the respective mappings of $\rmZ$ by $\rmA$ during the RSR layer.  

Figs.~\ref{fig:RSRAE_xtilde} and \ref{fig:AE_xtilde} imply that the set of reconstructed normal points in RSRAE seem to lie on the original manifold, whereas the reconstructed normal points by AE seem to only lie near, but often not on the Swiss roll manifold. More importantly, the anomalous points reconstructed by RSRAE seem to be sufficiently far from the set of original anomalous points, unlike the reconstructed points by AE. Therefore, RSRAE can better distinguish anomalies using the distance between the original and reconstructed points, where small values are obtained for normal points and large ones for anomalous ones. 
Fig.~\ref{fig:histograms} demonstrates this claim. They plot the histograms of the distance 
between the original and reconstructed points when applying RSRAE and AE, where distances for normal and anomalous points are distinguished by color. Clearly, RSRAE distinguishes normal and anomalous data better than AE.

\begin{figure}[ht]
\centering
\begin{subfigure}{0.3\textwidth}
\centering
    \includegraphics[height=6em]{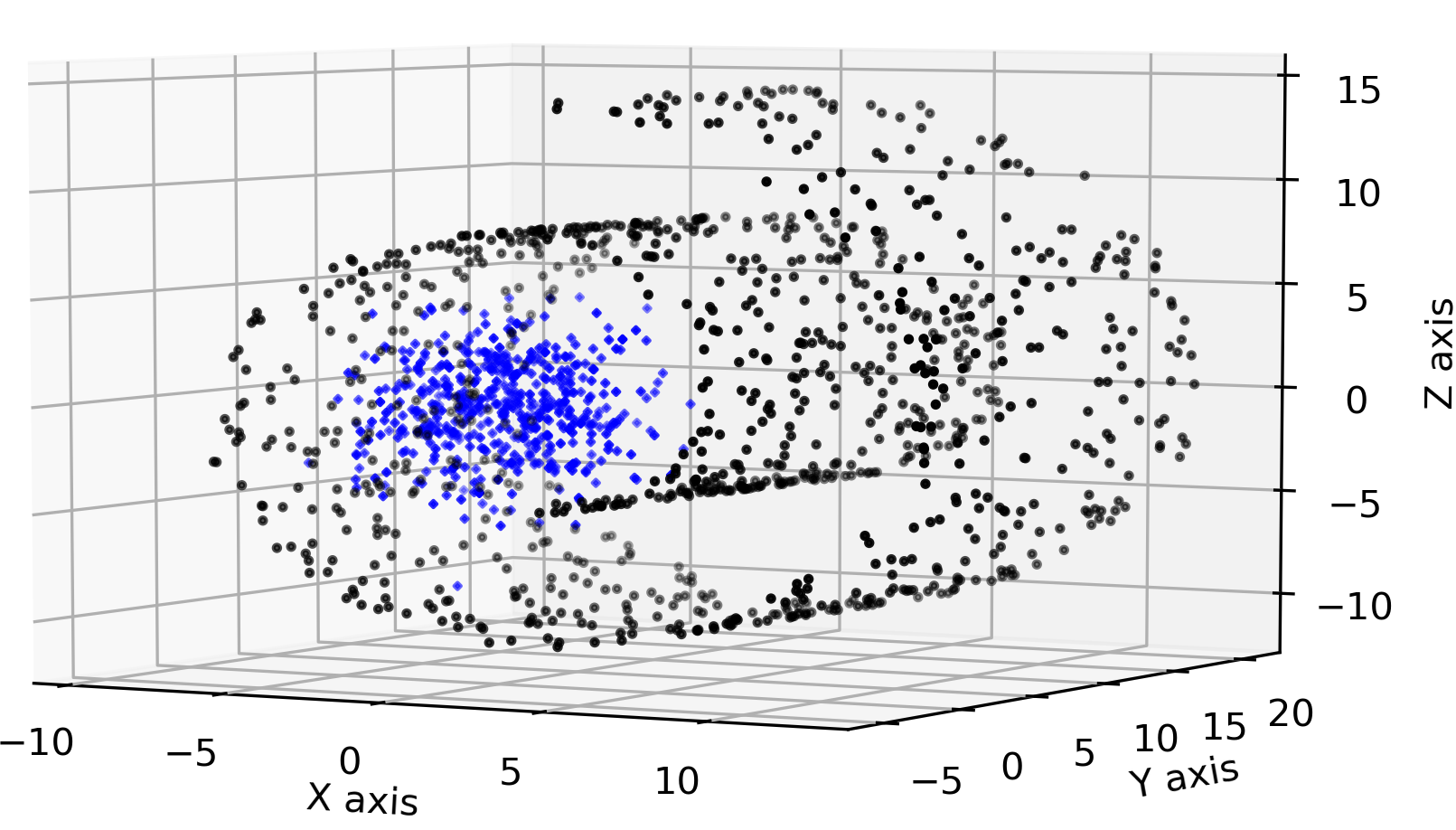}
    \caption{Input data $\rmX$}
    \label{fig:RSRAE_x}
\end{subfigure}%
{\large$\xrightarrow[\begin{subarray}{c} \mathscr{E}:\R^3 \rightarrow \R^{128}  \end{subarray}]{ \rm{Encoder}  } $}%
\begin{subfigure}{0.3\textwidth}
\centering
    \includegraphics[height=6em]{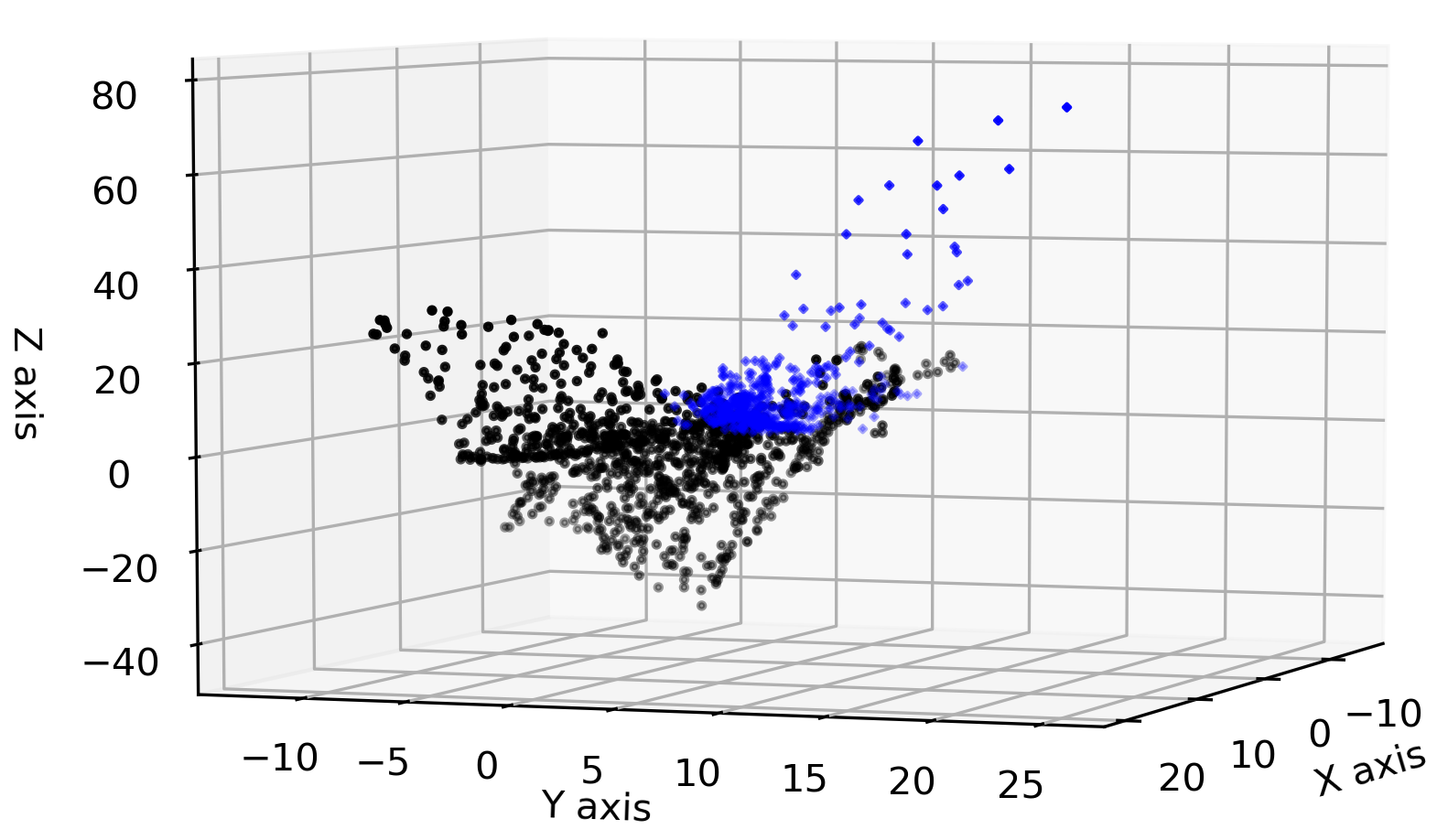}
    \caption{$\rmZ=\mathscr{E}(\rmX)$ projected onto 3D}
    \label{fig:RSRAE_y}
\end{subfigure}
{\large$\xrightarrow[ \begin{subarray}{c} \rm{linear \,mapping} \\ \rmA:\R^{128} \rightarrow \R^{2} \end{subarray}   ]{\rm{~RSR~}}$}%
\begin{subfigure}{0.3\textwidth}
\centering
    \includegraphics[height=6em]{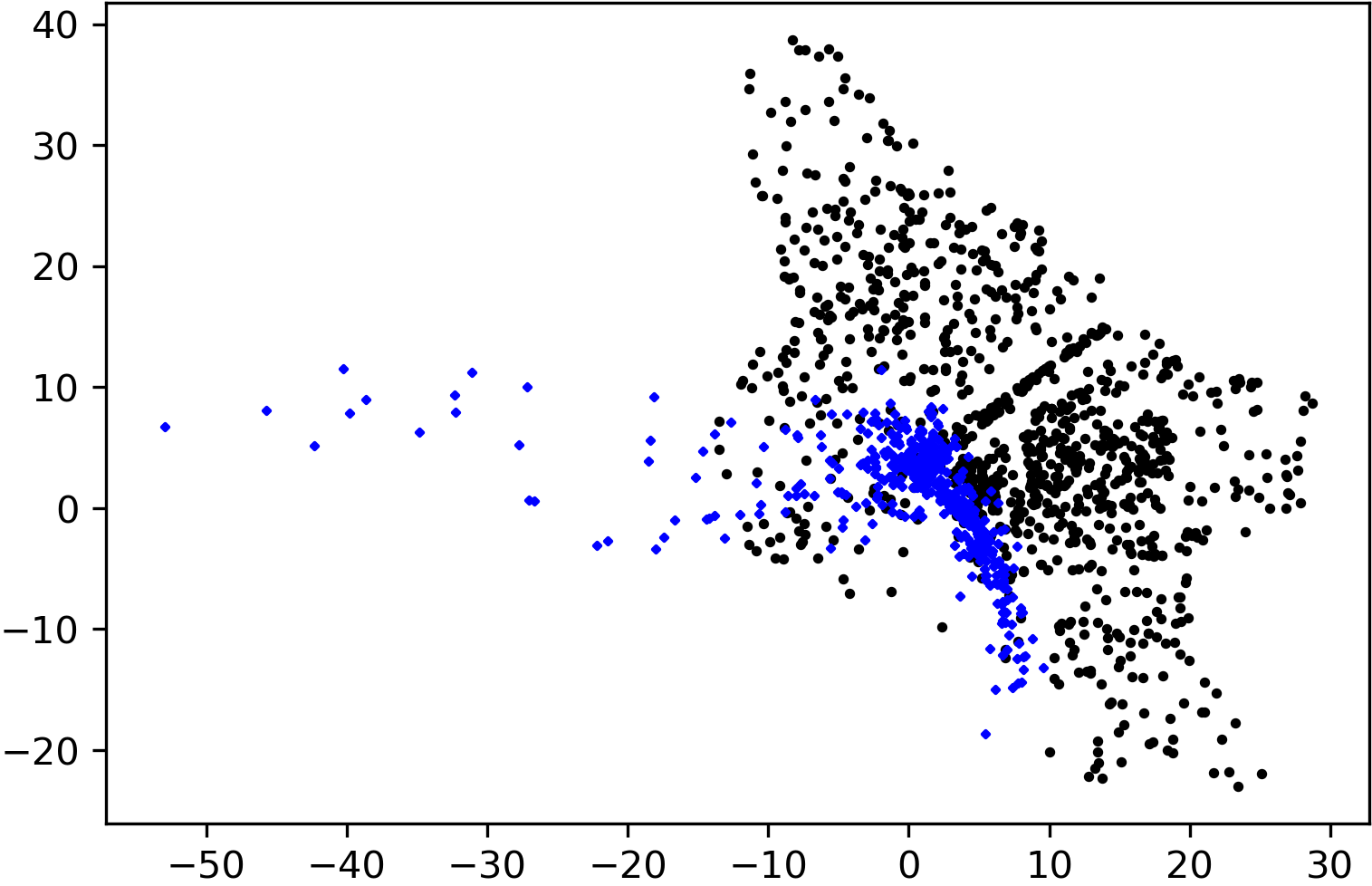}
    \caption{$\tilde{\rmZ} = \rmA \rmZ$ }
    \label{fig:RSRAE_yrsr}
\end{subfigure}
{\large$\xrightarrow[\begin{subarray}{c} \mathscr{D}:\R^2 \rightarrow \R^{3}  \end{subarray}]{ \rm{Decoder}  } $}%
\begin{subfigure}{0.3\textwidth}
\centering
    \includegraphics[height=6em]{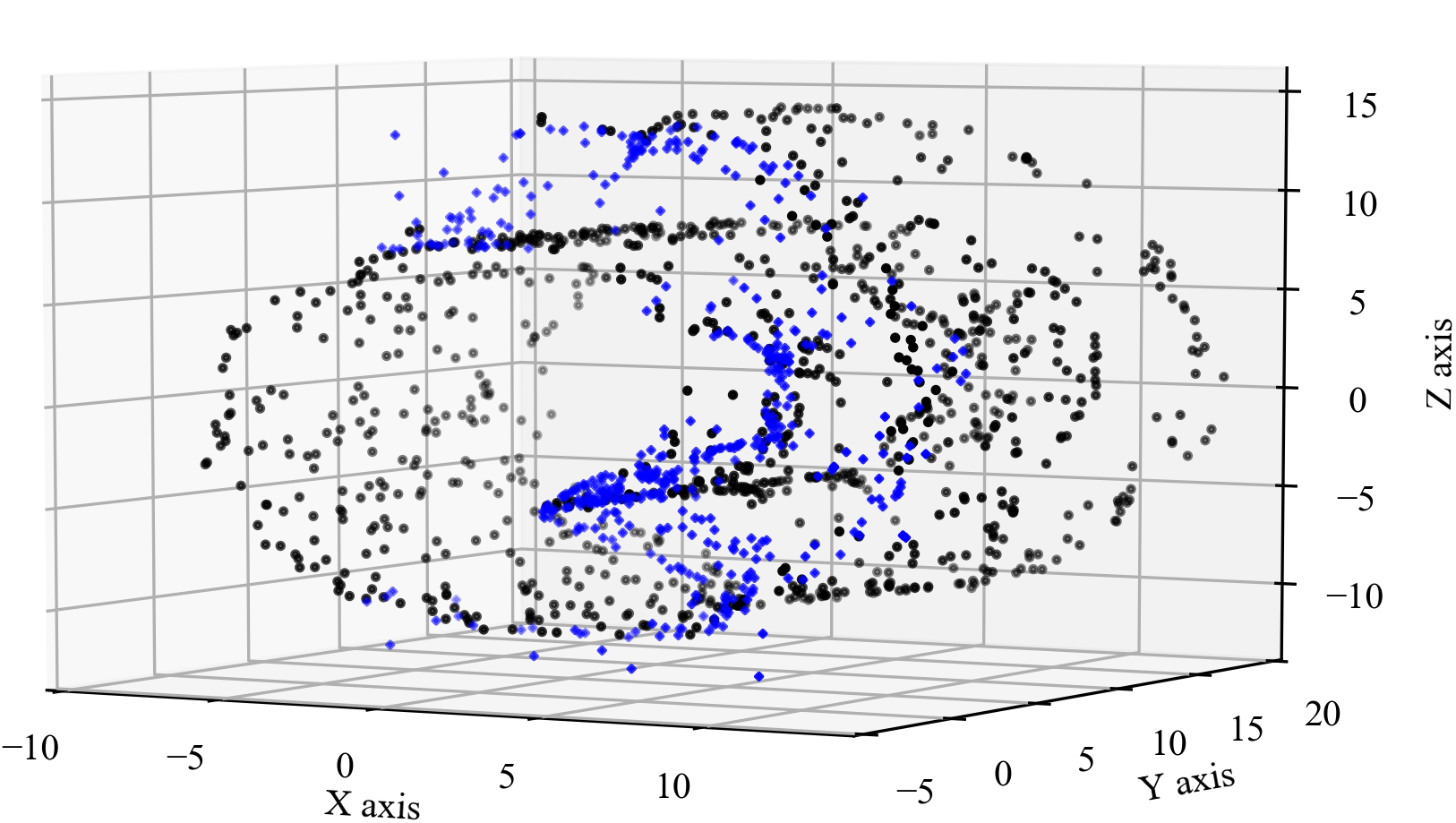}
    \caption{Output of RSRAE \\ $\tilde{\rmX}=\mathscr{D}(\tilde{\rmZ})$}
    \label{fig:RSRAE_xtilde}
\end{subfigure}%

\caption{{Demonstration of the output of the encoder, RSR layer and decoder of RSRAE on a corrupted Swiss roll dataset.}}
\label{fig:RSRAE_structure}

\end{figure}

\begin{figure}[ht]
\centering
\begin{subfigure}{0.3\textwidth}
\centering
    \includegraphics[height=6em]{photos/appendix/hi_x.png}
    \caption{Input data $\rmX$}
    \label{fig:AE_x}
\end{subfigure}
{\large$\xrightarrow[\begin{subarray}{c} \mathscr{E}:\R^3 \rightarrow \R^{128}\end{subarray}]{ \rm{Encoder}  } $}%
\begin{subfigure}{0.3\textwidth}
\centering
    \includegraphics[height=6em]{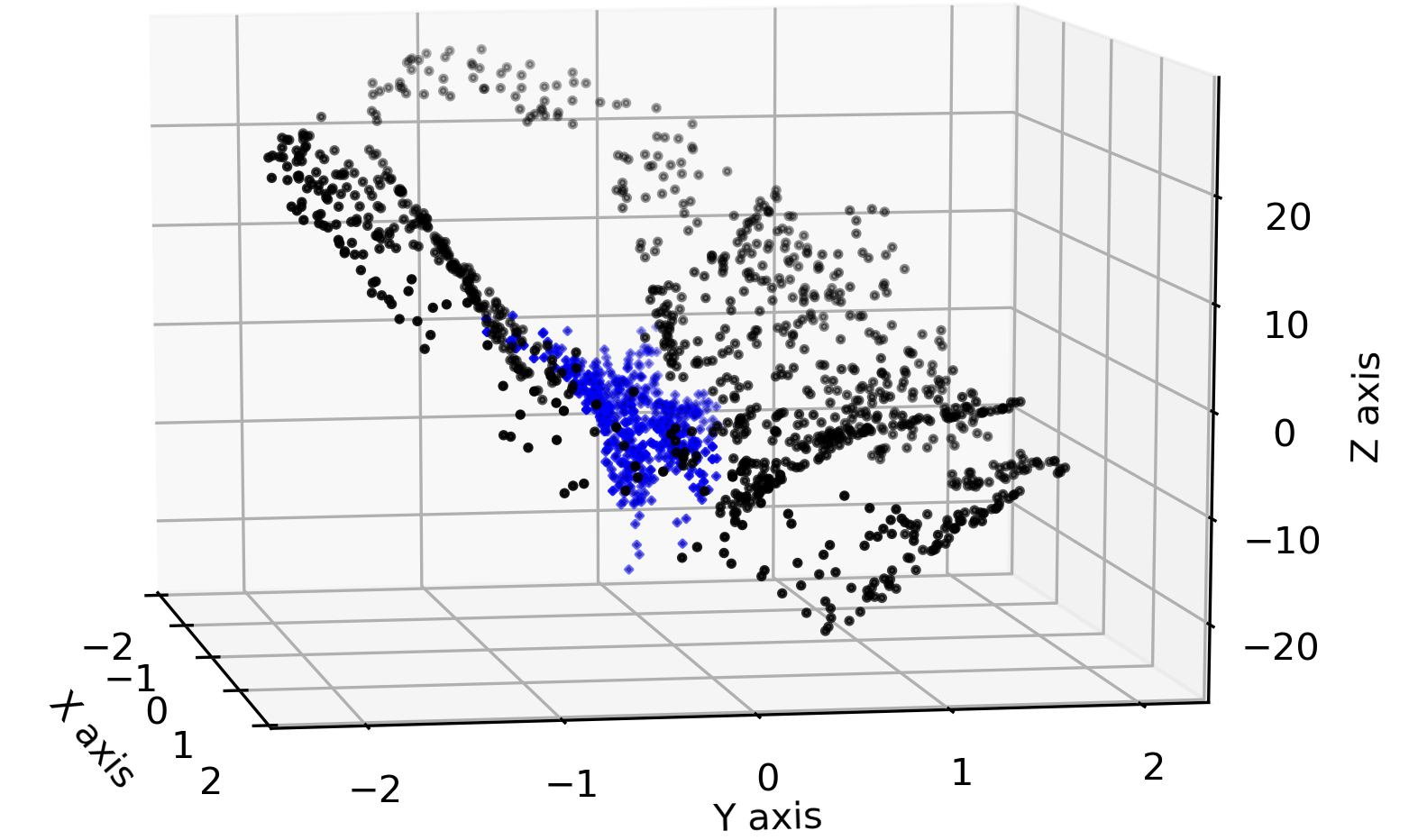}
    \caption{$\rmZ=\mathscr{E}(\rmX)$ projected onto 3D}
    \label{fig:AE_y}
\end{subfigure}
{\large$\xrightarrow[ \begin{subarray}{c}  \\ \rmA:\R^{128} \rightarrow \R^{2} \end{subarray}   ]{\rm{linear \,mapping}}$}%
\begin{subfigure}{0.3\textwidth}
\centering
    \includegraphics[height=6em]{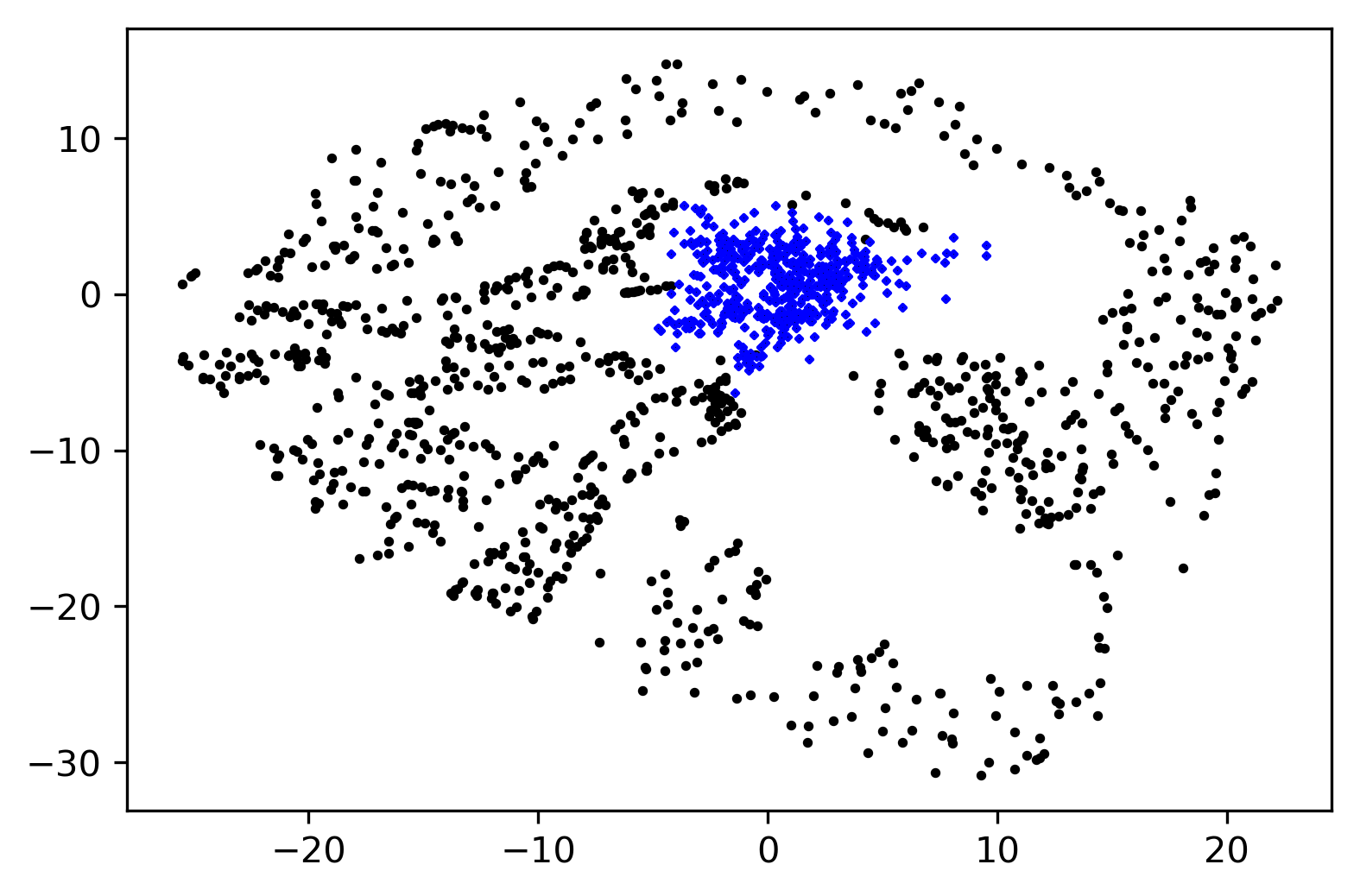}
    \caption{$\tilde{\rmZ} = \rmA \rmZ$}
    \label{fig:AE_yrsr}
\end{subfigure}
{\large$\xrightarrow[\begin{subarray}{c} \mathscr{D}:\R^2 \rightarrow \R^{3}  \end{subarray}]{ \rm{Decoder}  } $}%
\begin{subfigure}{0.3\textwidth}
\centering
    \includegraphics[height=6em]{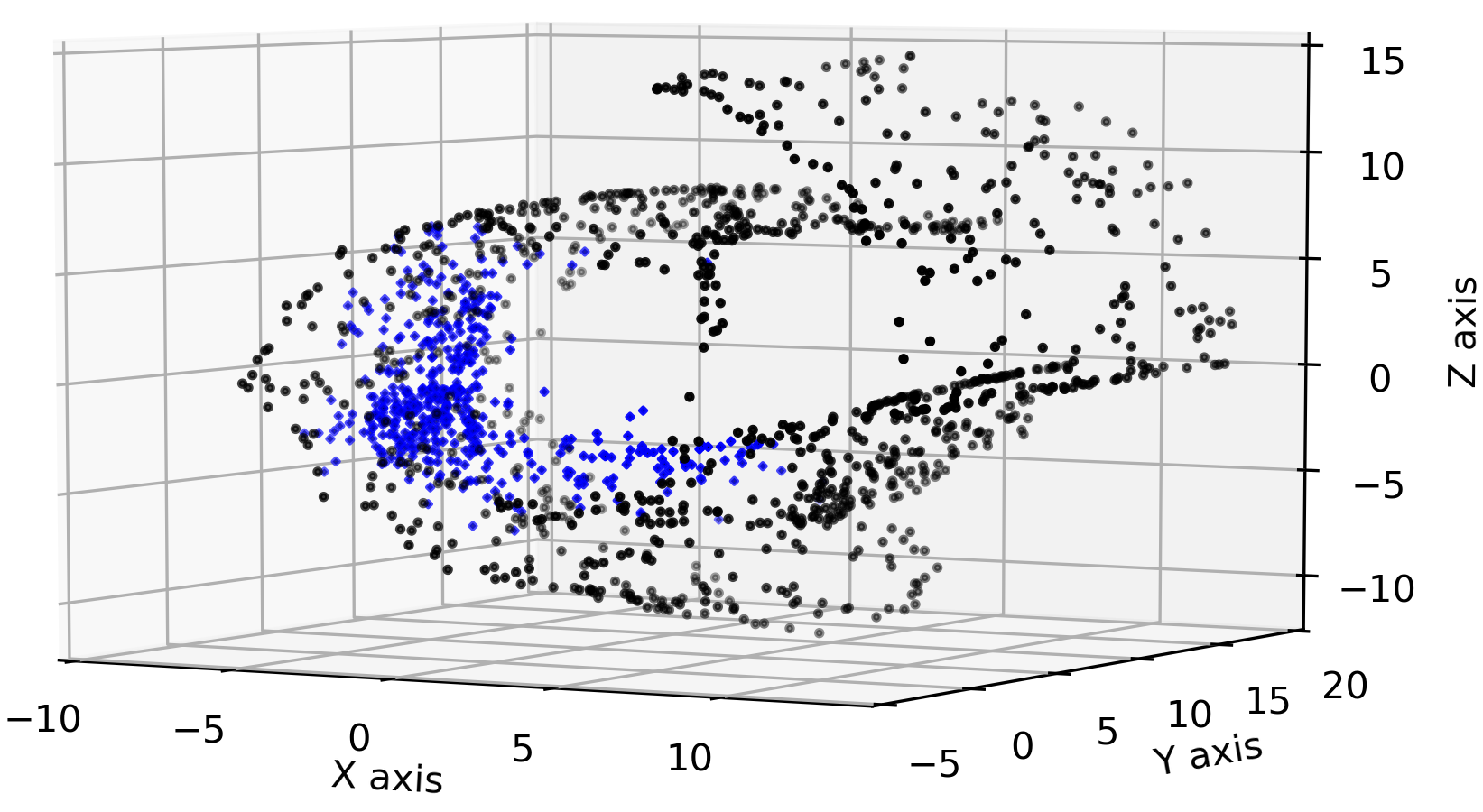}
    \caption{Output of AE \\ $\tilde{\rmX}=\mathscr{D}(\tilde{\rmZ})$}
    \label{fig:AE_xtilde}
\end{subfigure}
\caption{{Demonstration of the output of the encoder, mapping by $\rmA$, and decoder of AE on a corrupted Swiss roll dataset.}}
\label{fig:AE_structure}
\end{figure}

\begin{figure}[htbp]
\centering
\begin{minipage}[t]{0.48\textwidth}
\centering
\includegraphics[width=6cm]{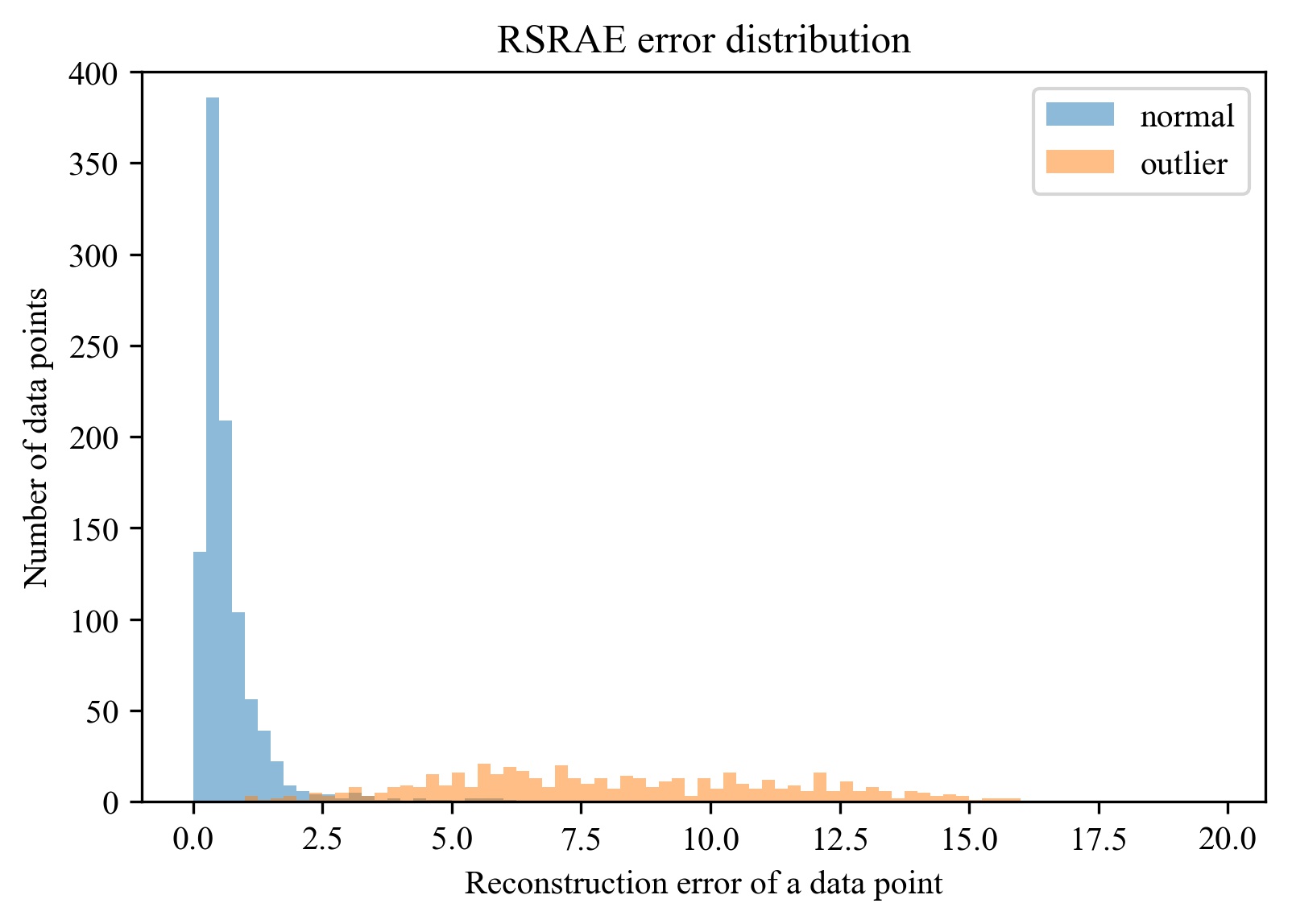}
\subcaption{Error distribution for RSRAE.}
\label{fig:RSRAE_hist}

\end{minipage}
\begin{minipage}[t]{0.48\textwidth}
\centering
\includegraphics[width=6cm]{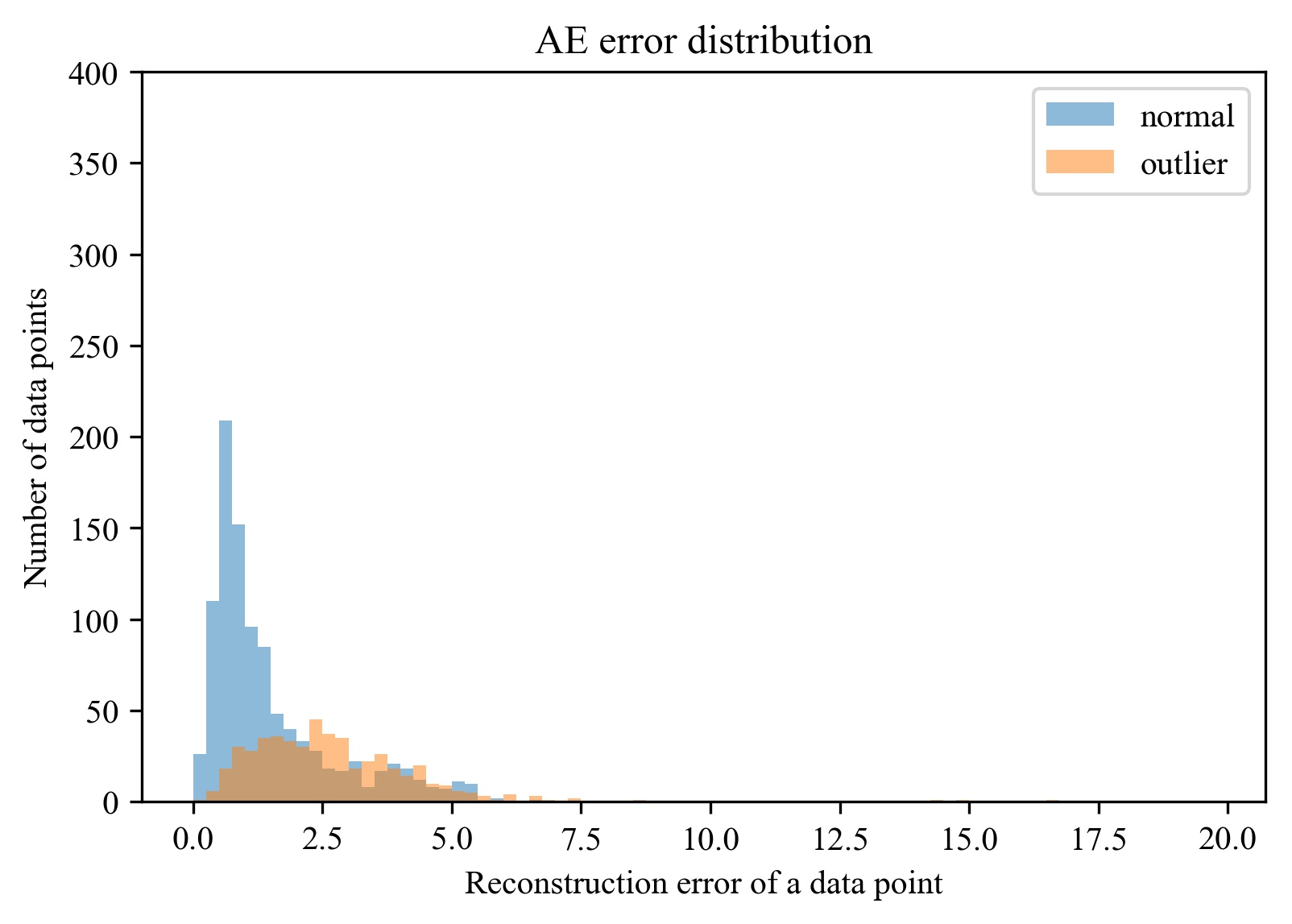}
\subcaption{Error distribution for AE.}
\label{fig:AE_hist}
\end{minipage}
\caption{{Demonstration of the reconstruction error distribution for RSRAE and AE.}}
\label{fig:histograms}
\end{figure}


\newpage

\section{Further discussion of the RSR term}
\label{sec:more_RSR_term}

The RSR energy in \eqref{eq:lossRSR} includes two different terms. 
The proposition below indicates that the second term of \eqref{eq:lossRSR} is zero when plugging into it the solution of the minimization of the first term of \eqref{eq:lossRSR} with the additional requirement that $\rmA$ has full rank. That is, in theory, one may only minimize the first term of \eqref{eq:lossRSR} over the set of matrices $\rmA \in \R^{d \times D}$ with full rank. We then discuss computational issues of this different minimization.

\begin{proposition}\label{prop:noneedboth}
Assume that $\{ \rvz^{(t)} \}_{t=1}^N \subset \R^D$ spans $\R^D$, $d \leqslant D$ and let
\begin{equation}\label{op2}
    \rmA^{\star} =  \argminB_{\substack{\rmA \in \R^{d \times D} \\ \rm{rank}(\rmA) = d}} \sum_{t=1} ^{N} \norm{ \rvz^{(t)} - \rmA^{\rm{T}}\rmA \rvz^{(t)} }_2 .
\end{equation}
Then $\rmA^{\star} {\rmA^{\star}}^{\rm{T}} = \rmI_d.$
\end{proposition}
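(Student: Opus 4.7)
The plan is to exploit the singular value decomposition of $A$ and decouple the objective into a part that depends only on the row-space of $A$ and a part that depends only on its singular values. Write $A = U \Sigma V^{\rm T}$ with $U \in \R^{d\times d}$ orthogonal, $\Sigma \in \R^{d\times d}$ diagonal with strictly positive entries (since $A$ has rank $d$), and $V \in \R^{D\times d}$ with orthonormal columns. The target conclusion $A A^{\rm T} = I_d$ is equivalent to $\Sigma = I_d$, so it suffices to show that any full-rank minimizer must have $\Sigma = I_d$.

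First I would decompose each sample orthogonally with respect to the subspace $\mathscr{V} := \mathrm{range}(V) \subseteq \R^D$: write $\rvz^{(t)} = V\valpha^{(t)} + \rvw^{(t)}$ where $\valpha^{(t)} = V^{\rm T} \rvz^{(t)} \in \R^d$ and $\rvw^{(t)} \perp \mathscr{V}$. A short calculation gives $A^{\rm T} A \rvz^{(t)} = V \Sigma^2 V^{\rm T} \rvz^{(t)} = V \Sigma^2 \valpha^{(t)}$, hence
\begin{equation}
\norm{\rvz^{(t)} - A^{\rm T} A \rvz^{(t)}}_2^2 = \norm{(I_d - \Sigma^2)\valpha^{(t)}}_2^2 + \norm{\rvw^{(t)}}_2^2,
\end{equation}
using the fact that $V(I_d - \Sigma^2)\valpha^{(t)} \in \mathscr{V}$ is orthogonal to $\rvw^{(t)}$.

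Next I would observe that the second summand depends only on the subspace $\mathscr{V}$ (equivalently, on $V$), not on $\Sigma$ or $U$. So for any fixed $V$, the map $\Sigma \mapsto \sum_t \bigl(\norm{(I_d - \Sigma^2)\valpha^{(t)}}_2^2 + \norm{\rvw^{(t)}}_2^2\bigr)^{1/2}$ is minimized by choosing $\Sigma = I_d$, which kills the first summand entry-wise. The spanning hypothesis on $\{\rvz^{(t)}\}$ implies that $\{\valpha^{(t)}\} = \{V^{\rm T} \rvz^{(t)}\}$ spans $\R^d$ (because $V^{\rm T}$ is surjective onto $\R^d$), so if $\Sigma \neq I_d$ then $(I_d - \Sigma^2)\valpha^{(t)} \neq 0$ for at least one $t$, making the choice $\Sigma = I_d$ strictly better. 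Consequently any minimizer $A^\star$ satisfies $\Sigma^\star = I_d$, and therefore $A^\star (A^\star)^{\rm T} = U^\star (\Sigma^\star)^2 (U^\star)^{\rm T} = I_d$.

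The main thing to be careful about is the orthogonal-decomposition identity in the display: one must verify that $V(I_d - \Sigma^2)\valpha^{(t)}$ and $\rvw^{(t)}$ live in orthogonal subspaces so that the squared norm splits cleanly. This is routine once the SVD is in place. The spanning hypothesis plays exactly the role of converting a pointwise inequality into a strict improvement, ruling out spurious minimizers with $\Sigma \neq I_d$; it is the only place that hypothesis is used.
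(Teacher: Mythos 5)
Your proof is correct, and at heart it runs the same comparison as the paper's, only in explicit SVD coordinates instead of through the abstract best-approximation property of orthogonal projections. The paper takes the minimizer $\rmA^{\star}$, lets $\rmP^{\star}$ be the orthoprojector onto the range of $\rmA^{\star\rm{T}}\rmA^{\star}$, writes $\rmP^{\star} = \tilde{\rmA}^{\rm{T}}\tilde{\rmA}$ for a feasible competitor $\tilde{\rmA}$ with orthonormal rows, and squeezes: optimality bounds the competitor's loss below by the minimizer's, the projection property gives the reverse inequality termwise, so $\rmP^{\star}\rvz^{(t)} = \rmA^{\star\rm{T}}\rmA^{\star}\rvz^{(t)}$ for every $t$; the spanning hypothesis upgrades this to $\rmP^{\star} = \rmA^{\star\rm{T}}\rmA^{\star}$, and a final algebraic step ($\rmA^{\star}\rmA^{\star\rm{T}}\rmA^{\star} = \rmA^{\star}$ together with full rank) yields $\rmA^{\star}{\rmA^{\star}}^{\rm{T}} = \rmI_d$. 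Your competitor $\rmU\rmV^{\rm{T}}$ is precisely such an $\tilde{\rmA}$, since $(\rmU\rmV^{\rm{T}})^{\rm{T}}(\rmU\rmV^{\rm{T}}) = \rmV\rmV^{\rm{T}}$ is the orthoprojector onto the same subspace, so both arguments test $\rmA^{\star}$ against the same matrix; what the SVD buys you is that the Pythagorean split $\norm{\rvz^{(t)} - \rmA^{\rm{T}}\rmA\rvz^{(t)}}_2^2 = \norm{(\rmI_d - \rmSigma^2)\alpha^{(t)}}_2^2 + \norm{\rvw^{(t)}}_2^2$ replaces the projection lemma by a direct computation, the strict-improvement contradiction replaces the paper's equality squeeze, and the conclusion $\rmSigma = \rmI_d$ delivers $\rmA^{\star}{\rmA^{\star}}^{\rm{T}} = \rmU\rmSigma^{2}\rmU^{\rm{T}} = \rmI_d$ at once, bypassing the paper's last cancellation. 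It is worth noting where the rank hypothesis enters in each version: the paper uses it only in that final cancellation, whereas you use it to guarantee a thin SVD with $d$ strictly positive singular values, which is what makes $\rmSigma^2 \neq \rmI_d$ whenever $\rmSigma \neq \rmI_d$ and keeps the modified matrix $\rmU\rmV^{\rm{T}}$ feasible. Your two flagged checks do hold: $\rmV(\rmI_d - \rmSigma^2)\alpha^{(t)}$ lies in the range of $\rmV$ while $\rvw^{(t)}$ is orthogonal to it (and $\rmV$ is an isometry on $\R^d$), and surjectivity of $\rmV^{\rm{T}}$ makes $\{\alpha^{(t)}\}$ span $\R^d$, so a nonzero $\rmI_d - \rmSigma^2$ cannot annihilate all of them; your handling of the unsquared $\ell_2$ losses, termwise via monotonicity of $a \mapsto \sqrt{a+b}$ for $a \geq 0$, is likewise sound. (Both your proof and the paper's tacitly assume the argmin in \eqref{op2} is attained, which is fair given how the proposition is stated.)
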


\begin{proof}

Let $\rmA^{\star}$ be an optimizer of \eqref{op2} and $\rmP^{\star}$ denote the orthogonal projection onto the range of $\rmA^{\star \rm{T}} \rmA^{\star}$. Note that $\rmP^{\star}$ can be written as $\tilde{\rmA}^{\rm{T}} \tilde{\rmA}$, where $\tilde{\rmA}$ is a $d \times D$ matrix composed of an orthonormal basis of the range of $\rmP^{\star}$. Therefore, being an optimum of \eqref{op2}, $\rmA^{\star}$ satisfies
\begin{equation}\label{eq:projgeqaaopt}
    \norm{\rvz^{(t)} - \rmP^{\star} \rvz^{(t)}}_2 \geq \norm{\rvz^{(t)} - \rmA^{\star \rm{T}} \rmA^{\star} \rvz^{(t)}}_2 ~, \quad t = 1, \cdots, N ~.
\end{equation}
On the other hand, the definition of orthogonal projection implies that
\begin{equation}\label{eq:projleqaa}
    \norm{\rvz^{(t)} - \rmP^{\star} \rvz^{(t)}}_2 \leq \norm{\rvz^{(t)} - \rmA^{\star \rm{T}} \rmA^{\star} \rvz^{(t)}}_2 ~, \quad t = 1, \cdots, N ~.
\end{equation}
That is, equality is obtained in \eqref{eq:projgeqaaopt} and \eqref{eq:projleqaa}. This equality and the fact that $\rmP^{\star}$ is a projection on the range of $\rmA^{\star \rm{T}} \rmA^{\star}$ imply that  
\begin{equation}\label{eq:allequal}
    \rmP^{\star} \rvz^{(t)} = \rmA^{\star \rm{T}} \rmA^{\star} \rvz^{(t)} ~, \quad t = 1, \cdots, N ~.
\end{equation}
Since $\{\rvz^{(t)}\}_{t=1}^N$ spans $\R^D$, \eqref{eq:allequal} results in
\begin{equation}
    \rmP^{\star} = \rmA^{\star \rm{T}} \rmA^{\star} ~,
\end{equation}
which further implies that 
\begin{equation}
\rmA^{\star} \rmA^{\star \rm{T}} \rmA^{\star} = \rmA^{\star} \rmP^{\star} = \rmA^{\star} ~.
\end{equation}
Combining this observation ($\rmA^{\star} \rmA^{\star \rm{T}} \rmA^{\star} = \rmA^{\star}$) with the constraint that $\rmA^{\star}$ has a full rank, we conclude that $\rmA^{\star} {\rmA^{\star}}^{\rm{T}} = \rmI_d$.

\end{proof}

The minimization in \eqref{op2} is nonconvex and intractable. Nevertheless, \citet{lerman2017fast} propose a heuristic to solve it with some weak guarantees 
and \citet{maunu2017well} propose an algorithm with guarantees under some conditions.
However, such a minimization is even more difficult when applied to the combined energy in \eqref{eq:combined}, instead of \eqref{eq:lossRSR}. Therefore, we find it necessary to include the second term in \eqref{eq:lossRSR} that imposes the nearness of $\rmA^{\rm{T}} \rmA$ to an orthogonal projection (equivalently, of $ \rmA \rmA^{\rm{T}}$ to the identity). 

\section{More on related theory for the RSR penalty}
\label{sec:GAN}

In \Secref{subsec:linearaeforrsr} we characterize the solution of \eqref{eq:lossAEPCA} via a subspace problem. Special case solutions to this problem include both the PCA subspace and the least absolute deviations subspace.
In \Secref{sec:proofsubspace} we prove Proposition \ref{prop:subspace}. In \Secref{subsec:betas} we review some pure mathematical work that we find relevant to this discussion. 
\subsection{Property of linear autoencoders} \label{subsec:linearaeforrsr}
The following proposition expresses the solution of \eqref{eq:lossAEPCA} in terms of another minimization problem.
After proving it, we clarify that the other minimization problem is related to both PCA and RSR.

\begin{proposition}
\label{prop:deforpcarsr}
Let $p \geq 1$, $d<D$, and $\{\rvx^{(t)}\}_{t=1}^N \subset \R^D$ be a dataset with rank at least $d$. If $(\rmD^{\star}, \rmE^{\star}) \in \R^{D \times d} \times \R^{d \times D}$ is a minimizer of \eqref{eq:lossAEPCA}, then
\begin{equation}
\label{eq:deforpcarsr}
\rmD^{\star} \rmE^{\star} = \rmP^{\star} ~,
\end{equation}
where $\rmP^{\star} \in \R^{D \times D}$ is a minimizer of 
\begin{equation}
\label{eq:pstarargminxpx}
\sum_{t=1}^N \norm{\rvx^{(t)} - \rmP \rvx^{(t)}}_2^p ~,
\end{equation}
among all orthoprojectors $\rmP$ (that is, $\rmP = \rmP^T$ and $\rmP^2 = \rmP$) of rank $d$. 
\end{proposition}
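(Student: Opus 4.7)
The plan is to reduce the linear-autoencoder minimization \eqref{eq:lossAEPCA} to the subspace-fitting problem \eqref{eq:pstarargminxpx}, using two facts: (i) any product $\rmD\rmE$ has rank at most $d$, with image contained in $V := \operatorname{range}(\rmD)$, and (ii) among all points of $V$, the one closest to a given $\rvx$ is $\rmP_V \rvx$, where $\rmP_V$ denotes the orthoprojector onto $V$.

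First I would establish that the two minima coincide. Fact (ii) gives the pointwise bound
\[
\norm{\rvx^{(t)} - \rmP_V \rvx^{(t)}}_2 \,\leq\, \norm{\rvx^{(t)} - \rmD \rmE \rvx^{(t)}}_2, \qquad t = 1, \ldots, N,
\]
so summing $p$-th powers shows the minimum of \eqref{eq:lossAEPCA} is at least that of \eqref{eq:pstarargminxpx}. Conversely, any rank-$d$ orthoprojector $\rmP$ factors as $\rmD \rmE$ by taking $\rmD$ whose columns are an orthonormal basis of $\operatorname{range}(\rmP)$ and setting $\rmE = \rmD^{\rm{T}}$, whence $\rmD\rmE = \rmP$. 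Hence the two minima are equal.

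Next comes a squeeze argument at a minimizer $(\rmD^\star, \rmE^\star)$. Let $V^\star := \operatorname{range}(\rmD^\star)$; if $\dim V^\star < d$, enlarge it to a $d$-dimensional subspace of $\R^D$ (possible because $d < D$), which only decreases the sum in \eqref{eq:pstarargminxpx}. Combining the previous inequality with the optimality of $(\rmD^\star, \rmE^\star)$ yields
\[
\min_{\rmP} \sum_{t=1}^N \norm{\rvx^{(t)} - \rmP \rvx^{(t)}}_2^p \,\leq\, \sum_{t=1}^N \norm{\rvx^{(t)} - \rmP_{V^\star} \rvx^{(t)}}_2^p \,\leq\, \sum_{t=1}^N \norm{\rvx^{(t)} - \rmD^\star \rmE^\star \rvx^{(t)}}_2^p \,=\, \min_{\rmP} \sum_{t=1}^N \norm{\rvx^{(t)} - \rmP \rvx^{(t)}}_2^p,
\]
so every inequality is saturated. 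In particular $\rmP^\star := \rmP_{V^\star}$ is a minimizer of \eqref{eq:pstarargminxpx}, and uniqueness of the nearest point in a subspace forces $\rmD^\star \rmE^\star \rvx^{(t)} = \rmP^\star \rvx^{(t)}$ for every $t$.

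The main obstacle is then to promote this data-wise identity to the operator equation $\rmD^\star \rmE^\star = \rmP^\star$ on all of $\R^D$. Two linear maps that agree on $\{\rvx^{(t)}\}_{t=1}^N$ automatically agree on $\operatorname{span}\{\rvx^{(t)}\}_{t=1}^N$, so if the data span $\R^D$ (the generic case consistent with the rank-$\geq d$ hypothesis) the conclusion is immediate. Otherwise the factorization is highly non-unique --- invariant under the gauge $(\rmD,\rmE) \mapsto (\rmD G, G^{-1} \rmE)$ for any invertible $d \times d$ matrix $G$ and under arbitrary modification of $\rmE$ on the orthogonal complement of the data span --- and the natural reading of the conclusion is that there exists a minimizer (for instance $\rmD^\star$ with orthonormal columns spanning $V^\star$ and $\rmE^\star = (\rmD^\star)^{\rm{T}}$) whose product equals the orthoprojector $\rmP^\star$ onto an optimal $d$-dimensional subspace in \eqref{eq:pstarargminxpx}. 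All remaining steps are algebraic consequences of the closest-point property of orthogonal projection.
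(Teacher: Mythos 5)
Your argument is correct and takes essentially the same route as the paper's proof: both directions of the sandwich come from the closest-point property of orthogonal projection and from factoring a rank-$d$ orthoprojector as $\rmD \rmD^{\rm{T}}$; saturation at a minimizer then forces, via uniqueness of the nearest point in a subspace, the termwise identity $\rmD^{\star}\rmE^{\star}\rvx^{(t)} = \rmP^{\star}\rvx^{(t)}$. The only cosmetic difference is that the paper projects onto $\LL$, the column space of $\rmD^{\star}\rmE^{\star}$, where you use $\mathrm{range}(\rmD^{\star})$; your explicit enlargement of $V^{\star}$ to dimension $d$ handles a detail the paper leaves implicit.

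Where you stop short --- promoting the data-wise identity to the operator equation --- is in fact the one place where the paper's own proof is too quick, and your caution is warranted. The paper concludes $\rmD^{\star}\rmE^{\star} = \rmP_{\LL}$ from the agreement on the data together with $\LL \subseteq \mathrm{span}\{\rvx^{(t)}\}_{t=1}^N$, but this implication fails when the data do not span $\R^D$, precisely because of the freedom you identify: $\rmE^{\star}$ is unconstrained on the orthogonal complement of the data span. Concretely, take $D=3$, $d=2$, data $\{\ve_1, \ve_2\}$ (rank $2 \geq d$), and
\begin{equation*}
\rmD^{\star} = \begin{pmatrix} 1 & 0 \\ 0 & 1 \\ 0 & 0 \end{pmatrix}, \qquad
\rmE^{\star} = \begin{pmatrix} 1 & 0 & 0 \\ 0 & 1 & 1 \end{pmatrix}.
\end{equation*}
Then $\rmD^{\star}\rmE^{\star}\ve_1 = \ve_1$ and $\rmD^{\star}\rmE^{\star}\ve_2 = \ve_2$, so the loss is zero and $(\rmD^{\star},\rmE^{\star})$ is a minimizer, yet $\rmD^{\star}\rmE^{\star}\ve_3 = \ve_2$, so $\rmD^{\star}\rmE^{\star}$ is not symmetric and hence not an orthoprojector: the conclusion as literally stated fails under the hypothesis ``rank at least $d$.'' The proposition is correct under the implicit assumption that the data span $\R^D$ (rank $D$), in which case agreement on the span settles everything and both your argument and the paper's are complete; your alternative ``existence'' reading (some minimizer, e.g.\ $\rmE^{\star} = (\rmD^{\star})^{\rm{T}}$ with orthonormal columns, has $\rmD^{\star}\rmE^{\star} = \rmP^{\star}$) is the right repair for the general case.
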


\begin{proof}
Let $\rmP^{\diamond}$ be a minimizer of \eqref{eq:pstarargminxpx} and 
$(\rmD^{\star}, \rmE^{\star})$ be a minimizer of \eqref{eq:lossAEPCA}. Since $\rmP^{\diamond}$ is an orthoprojector of rank $d$ it can be written as $\rmP^{\diamond} = \rmU^{\diamond} {\rmU^{\diamond}}^{\rm{T}}$, where $\rmU^{\diamond} \in \R^{D \times d}$, and thus
\begin{equation}
\label{eq:deuutp}
    \sum_{t=1}^N \norm{\rvx^{(t)} - \rmD^{\star} \rmE^{\star} \rvx^{(t)}}_2^p \leq \sum_{t=1}^N \norm{\rvx^{(t)} - \rmU^{\diamond} {\rmU^{\diamond}}^{\rm{T}} \rvx^{(t)}}_2^p = \sum_{t=1}^N \norm{\rvx^{(t)} - \rmP^{\diamond} \rvx^{(t)}}_2^p ~.
\end{equation}

Let $\LL$ denote the column space of $\rmD^{\star} \rmE^{\star}$. Then by the property of orthoprojection
\begin{equation}
\label{eq:deplp0}
    \norm{\rvx^{(t)} - \rmD^{\star} \rmE^{\star} \rvx^{(t)}}_2 
    \geq 
    \norm{\rvx^{(t)} - \rmP_{\LL} \rvx^{(t)}}_2 \ \text{ for } 1 \leq t \leq N
\end{equation}
and consequently
\begin{equation}
\label{eq:deplp}
    \sum_{t=1}^N \norm{\rvx^{(t)} - \rmD^{\star} \rmE^{\star} \rvx^{(t)}}_2^p \geq \sum_{t=1}^N \norm{\rvx^{(t)} - \rmP_{\LL} \rvx^{(t)}}_2^p \geq \sum_{t=1}^N \norm{\rvx^{(t)} - \rmP^{\diamond} \rvx^{(t)}}_2^p ~.
\end{equation}

The combination of \eqref{eq:deuutp} and \eqref{eq:deplp} yields the following two equalities
\begin{equation}
\label{eq:equalityprop0}
    \sum_{t=1}^N \norm{\rvx^{(t)} - \rmP_{\LL} \rvx^{(t)}}_2^p = \sum_{t=1}^N \norm{\rvx^{(t)} - \rmP^{\diamond} \rvx^{(t)}}_2^p ~,
\end{equation}
\begin{equation}
\label{eq:equalityprop}
    \sum_{t=1}^N \norm{\rvx^{(t)} - \rmD^{\star} \rmE^{\star} \rvx^{(t)}}_2^p = \sum_{t=1}^N \norm{\rvx^{(t)} - \rmP_{\LL} \rvx^{(t)}}_2^p ~.
\end{equation}
We note that \eqref{eq:equalityprop0} implies  that $\rmP_{\LL}$ is a minimizer of \eqref{eq:pstarargminxpx} (among all rank $d$ orthoprojectors).
We further note that \eqref{eq:deplp0} and \eqref{eq:equalityprop} yield that for all 
$1 \leq t \leq N$
\begin{equation}
\label{eq:deplp1}
    \norm{\rvx^{(t)} - \rmD^{\star} \rmE^{\star} \rvx^{(t)}}_2 
    =
    \norm{\rvx^{(t)} - \rmP_{\LL} \rvx^{(t)}}_2 ~. 
\end{equation}
Since $\rmD^{\star} \rmE^{\star} \rvx^{(t)} \in \LL$ and $\rmP_{\LL}$ is an orthoprojector we conclude from \eqref{eq:deplp1} that 
\begin{equation}
\label{eq:deplp2}
\rmD^{\star} \rmE^{\star} \rvx^{(t)} = \rmP_{\LL} \rvx^{(t)} \ \text{ for }1 \leq t \leq N.    
\end{equation}
We note that the definition of $(\rmD^{\star},\rmE^{\star})$ implies that $\LL$ (which is the column space of  
$\rmD^{\star}\rmE^{\star}$) is contained in the span of  $\{\rvx^{(t)}\}_{t=1}^N$. We also recall that the dimension of the span of  $\{\rvx^{(t)}\}_{t=1}^N$ is at least the dimension of $\LL$, that is, $d$. Combining the latter facts with \eqref{eq:deplp2} we obtain that $\rmD^{\star} \rmE^{\star} = \rmP_{\LL}$. 
This and the fact that $\rmP_{\LL}$ is a minimizer of \eqref{eq:pstarargminxpx} (which was derived from \eqref{eq:equalityprop0}) 
concludes \eqref{eq:deforpcarsr}.

\end{proof}

Note that when $p=2$, the energy function in \eqref{eq:pstarargminxpx} corresponds to 
PCA. More precisely, a minimizer $\rmP^{\star}$
of \eqref{eq:pstarargminxpx} (among rank $d$ orthoprojectors) is an orthoprojector on a $d$-dimensional PCA subspace, equivalently, a subspace spanned by top $d$ eigenvectors of the sample covariance (we assume for simplicity linear, and not affine, autoencoder, so the PCA subspace is linear and thus when $p=2$ the data is centered at the origin).  This minimizer is unique if and only if the $d$-th eigenvalue of the sample covariance is larger than the $(d+1)$-st eigenvalue. These elementary facts are reviewed in Section II-A
of \citet{lerman2018overview}.

When $p=1$, the minimizer $\rmP^{\star}$
of \eqref{eq:pstarargminxpx} (among rank $d$ orthoprojectors) is an orthoprojector on the $d$-dimensional least absolute deviations subspace.
This subspace is reviewed in Section II-D of   \citet{lerman2018overview} as a common approach for RSR. 
The minimizer is often not unique, where sufficient and necessary conditions for local minima of \eqref{eq:pstarargminxpx} are studied in \citet{lp_recovery_part1_11}.

\subsection{Proof of Proposition \ref{prop:subspace}}
\label{sec:proofsubspace}

\begin{proof}
We denote the subspace $\LL$ in the left hand side of \eqref{eq:rsrganconclusion} by $\LL^{\star}$ in order to distinguish it from the generic notation $\LL$ for subspaces. 
Consider the random variable $X \sim \mu$, Where $\mu$ is $\mathcal{N}(\rvm_X, \rmSigma_X)$. Fix $\pi \in \Pi(\rvmu, \rvnu)$. We note that 
\begin{align}
& \E_{(X,Y) \sim \pi} \norm{X-Y}_2^p \nonumber \\
~=~ & \int_{\R^D} \int_{\R^D} {\norm{\rvx-\rvy}_2^p \pi(\rvx,\rvy) } \rm{d}\rvx ~ \rm{d}\rvy \nonumber \\
~\geq~ & \min_{\rm{dim} \LL = d} \int_{\R^D} \rm{dist}(\rvx,\LL)^p \int_{\R^D} {\pi(\rvx,\rvy)} \rm{d}\rvy ~ \rm{d}\rvx \label{eq:wassersteintodist} \\
~=~ & \min_{\rm{dim} \LL = d} \int_{\R^D} \rm{dist}(\rvx,\LL)^p \mu(\rvx) ~ \rm{d}\rvx \nonumber
\\
~=~ & \min_{\rm{dim} \LL = d} \E \norm{X - \rmP_\LL X}_2^p
~. \nonumber
\end{align}
The inequality in \eqref{eq:wassersteintodist} holds since $X$ is fixed and $Y$ satisfies $(X,Y) \sim \pi$, so the distribution of $Y$ is $\mathcal{N}(\rvm_Y, \rmSigma_Y)$. Therefore, almost surely, $Y$ takes values in the $d$-dimensional affine subspace $\{\rvy \in \R^D: \rvy - \rvm_Y \in \rm{range}(\rmSigma_Y)\}$.
Furthermore, we note that equality in \eqref{eq:wassersteintodist} is achieved when $Y = \rmP_{\LL^{\star}} X$. 

We conclude the proof by showing that 
\begin{equation}
\label{eq:center}
\rvm_X \in \LL^{\star}.
\end{equation}
Indeed, \eqref{eq:center} implies that the orthogonal projection of $X \sim \mathcal{N}(\rvm_X, \rmSigma_X)$ onto $\LL^{\star}$
results in a random variable with distribution $\rvnu$ which is $\mathcal{N} (\rvm_X, \rmP_{\LL^{\star}} \rmSigma_X \rmP_{\LL^{\star}})$. By the above observation about the optimality of $Y = \rmP_{\LL^{\star}} X$, the density of this distribution is the optimal solution of \eqref{eq:optprob}.

To prove \eqref{eq:center}, we assume without loss of generality that $\rvm_X = \mathbf{0}$. Denote the orthogonal projection of the origin onto the affine subspace $\LL^{\star}$ by $\rvm_{\LL^{\star}}$
and let $\LL_0 = \LL^{\star} - \rvm_{\LL^{\star}}$. We need to show that 
$\LL^{\star} = \LL_0$, or equivalently, $\rvm_{\LL^{\star}} = \mathbf{0}$. We note $\LL_0$ is a linear subspace, $\rvm_{\LL^{\star}}$ is orthogonal to  $\LL_0$ and thus there exists a rotation matrix $\rmO$ such that 
\begin{equation}\label{eq:wlogl}
    \rmO \LL_0 = \left\{ (0, \cdots, 0, z_{D-d+1}, \cdots, z_D):~ z_{D-d+1}, \cdots z_D \in \R \right\} ~,
\end{equation}
and
\begin{equation}\label{eq:wlogm}
    \rmO \rvm_{\LL^{\star}} = (m_1, \cdots, m_{D-d}, 0, \cdots, 0) ~.
\end{equation}

For any $\rvx \in \R^D$ we note that $\mu(\rvx) = \mu(-\rvx)$ since $\mu$ is Gaussian. Using this observation, other basic observations and the notation $\rmO \rvx = (x'_1, \cdots, x'_D)$ we obtain that
\allowdisplaybreaks
\begin{align}
    & \rm{dist}(\rvx,\LL^{\star})^p \mu(\rvx) + \rm{dist}(-\rvx, \LL^{\star})^p \mu(-\rvx) \nonumber \\
~=~ & \left( \rm{dist}(\rvx,\LL^{\star})^p + \rm{dist}(-\rvx, \LL^{\star})^p \right) \mu(\rvx) \nonumber \\
~=~ & \left( \rm{dist}(\rmO \rvx, \rmO \LL^{\star})^p + \rm{dist}(- \rmO \rvx, \rmO \LL^{\star})^p \right) \mu(\rvx) \nonumber \\
~=~ & \left( \left( \sum_{i=1}^{D-d} (x'_i - m_i)^2 \right)^{p/2} + \left( \sum_{i=1}^{D-d} ( - x'_i - m_i)^2 \right)^{p/2} \right) \mu(\rvx) \nonumber \\
~=~ & \left( \left( \sum_{i=1}^{D-d} (x'_i - m_i)^2 \right)^{p/2} + \left( \sum_{i=1}^{D-d} ( x'_i + m_i)^2 \right)^{p/2} \right) \mu(\rvx) \nonumber \\
~\geq~ & 2 \left( \sum_{i=1}^{D-d} {x'_i}^2 \right)^{p/2} \mu(\rvx) \label{eq:convex}\\
~=~ & 2 ~ \rm{dist}(\rmO \rvx, \rmO \LL_0)^p \mu(\rvx) \nonumber \\
~=~ & 2 ~ \rm{dist}(\rvx, \LL_0)^p \mu(\rvx) \nonumber \\
~=~ & \left( \rm{dist}(\rvx, \LL_0)^p + \rm{dist}(-\rvx, \LL_0)^p \right) \mu(\rvx) \nonumber \\
~=~ & \rm{dist}(\rvx,\LL_0)^p \mu(\rvx) + \rm{dist}(-\rvx, \LL_0)^p \mu(-\rvx) ~. \nonumber
\end{align}

The inequality in \eqref{eq:convex} follows from the fact that for  $p \geq 1$, the function $\norm{\cdot}_2^p$ is convex as it is a composition of the convex function $\norm{\cdot}_2: \R^d \rightarrow \R_{+}$ and the increasing convex function $(\cdot)^p: \R_{+} \rightarrow \R_{+}$. Equality is achieved in \eqref{eq:convex} if $m_i = 0$ for $i=1, \cdots, D-d$, that is, $\LL^{\star} = \LL_0$.

Integrating the left and right hand sides of \eqref{eq:convex} over $\R^D$ results in 
\begin{equation}\label{eq:intdistcpr}
    \int_{\R^D} \rm{dist}(\rvx, \LL^{\star})^p \mu(\rvx) d\rvx \geq \int_{\R^D} \rm{dist}(\rvx, \LL_0)^p \mu(\rvx) d\rvx ~.
\end{equation}
Since $\LL^{\star}$ is a minimizer among all affine subspaces of rank $d$ of  
$\int_{\R^D} \rm{dist}(\rvx,\LL)^p \mu(\rvx) ~ \rm{d}\rvx = \E \norm{X - \rmP_\LL X}_2^p$, equality is obtained in \eqref{eq:intdistcpr}. Consequently, equality is obtained, almost everywhere, in \eqref{eq:convex}. Therefore, $\LL^{\star} = \LL_0$ and the claim is proved.
\end{proof}

\subsection{Relevant Mathematical Theory}
\label{subsec:betas}
We note that a complex network can represent a large class of functions. 
Consequently, for a sufficiently complex network, minimizing the loss function in \eqref{eq:lossAE} results in minimum value zero. In this case the minimizing ``manifold'' contains the original data, including the outliers. On the other hand, the RSR loss term imposes fitting a subspace that robustly fits only part of the data and thus cannot result in minimum value zero. Nevertheless, imposing a subspace constraint might be too restrictive, even in the latent space.  A seminal work by \citet{Jones90} studies optimal types of curves that contain general sets. This work relates the construction and optimal properties of these curves with multiscale approximation of the underlying set by lines. It was generalized to higher dimensions in \citep{DS93} and to a setting relevant to outliers in \citep{Lerman03}. These works suggest loss functions that incorporate several linear RSR layers from different scales.
Nevertheless, their pure setting does not directly apply to our setting. We have also noticed various technical difficulties when trying to directly implement these ideas to our setting.

\section{Brief description of the baselines and metrics}
\label{sec:describe_baselines}
We first clarify the methods used as baselines in \Secref{sec:real}. 

\textbf{Local Outlier Factor (LOF)} measures the local deviation of a given data point with respect to its neighbors. If the LOF of a data point is too large then the point is determined to be an outlier. 

\textbf{One-Class SVM (OCSVM)} learns a margin for a class of data. Since outliers contribute less than the normal class, it also applies to the unsupervised setting \citep{goldstein2016comparative}. It is usually applied with a non-linear kernel.

\textbf{Isolation Forest (IF)} determines outliers by looking at the number of splittings needed for isolating a sample. It constructs random decision trees. A short path length for separating a data point implies a higher probability that the point is an outlier.

\textbf{Geometric Transformations (GT)} applies a variety of geometric transforms to input images and consequently creates a self-labeled dataset, where the labels are the types of transformations. Its anomaly detection is based on Dirichlet Normality score according to the softmax output from a classification network for the labels. 

\textbf{Deep Structured Energy-Based Models (DSEBMs)} outputs
an energy function which is the negative log probability that a sample follows the data distribution. The energy based model is connected to an autoencoder to avoid the need of complex sampling methods.

\textbf{Deep Autoencoding Gaussian Mixture Model (DAGMM)} is also a deep autoencoder model. It optimizes an end-to-end structure that contains both an autoencoder and an estimator for Gaussian Mixture Model. The anomaly detection is done after modeling the density function of the Gaussian Mixture Model.

Next, we review the definitions of the two metrics that we used: the AUC and AP scores  \citep{davis2006relationship}. In computing these metrics we identify the outliers as ``positive''.

\textbf{AUC (area-under-curve)} is the area under the Receiver Operating Characteristic (ROC) curve. Recall that the True Positive Rate (TPR), or Recall, is the number of samples correctly labeled as positive divided by the total number of actual positive samples. The False Positive Rate (FPR), on the other hand, is the number of negative samples incorrectly labeled as positive divided by the total number of actual negative samples. The ROC curve is a graph of TPR as a function of FPR. It is drawn by recording values of FPR and TPR for different choices of $\rvepsilon_{\rm{T}}$ in Algorithm~\ref{alg:the_alg}.

\textbf{AP (average-precision)} is the area under the Precision-Recall Curve. While Recall is the TPR, Precision is the number of samples correctly labeled as positive divided by the total number of predicted positives. The  Precision-Recall curve is the graph of  Precision as a function of Recall.
It is drawn by recording values of Precision and Recall for different choices of $\rvepsilon_{\rm{T}}$ in Algorithm~\ref{alg:the_alg}.

Both AUC and AP can be computed using the corresponding functions in the scikit-learn package \citep{scikit-learn}.

\section{Comparison with RSR and RCAE}
\label{sec:rsrandrpca}

We demonstrate basic properties of our framework by comparing it to two different frameworks. The first framework is direct RSR, which  tries to model the inliers by a low-dimensional subspace, as opposed to the nonlinear model discussed in here. Based on careful comparison of RSR methods in \citet{lerman2018overview}, we use the Fast Median Subspace (FMS) algorithm \citep{lerman2017fast} and its normalized version, the Spherical FMS (SFMS). The other framework can be viewed a nonlinear version of RPCA, instead of RSR. It assumes sparse elementwise corruption of the data matrix, instead of corruption of whole data points, or equivalently, of some columns of the data matrix. For this purpose we use the Robust Convolutional Autoencoder (RCAE) algorithm of \citet{chalapathy2017robust}, who advocate it as ``extension of robust PCA to allow for a nonlinear manifold that explains most of the data''.  We adopt the same network structures as in \Secref{subsec:benchmark}.

Fig.~\ref{fig:RSRandRCAE} reports comparisons of RSRAE, FMS, SFMS and RCAE on the datasets used in \Secref{subsec:res}. 
We first note that both FMS and SFMS are not effective for the datasets we have been using.  That is, the inliers in these datasets are not well-approximated by a linear model. It is also interesting to notice that without normalization to the sphere, FMS can be much worse than SFMS. That is, SFMS is often way more robust to outliers than FMS. This observation and the fact that there are no obvious normalization procedures a general autoencoder (see \Secref{sec:relatedtheory}) clarifies why the mere use of the $L_{\rm{AE}}^1$ loss for an autoencoder is not expected to be robust enough to outliers.

Comparing with RSRAE, we note that RCAE is not a competitive method for these datasets. This is not surprising since the model of RCAE, which assumes sparse elementwise corruption, does not fit well to the problem of anomaly detection, but to other problems, such as background detection.

\begin{figure}[htbp]
\centering
\begin{minipage}[t]{0.48\textwidth}
\rotatebox{90}{\null \qquad Caltech 101}
\centering
\includegraphics[width=6cm]{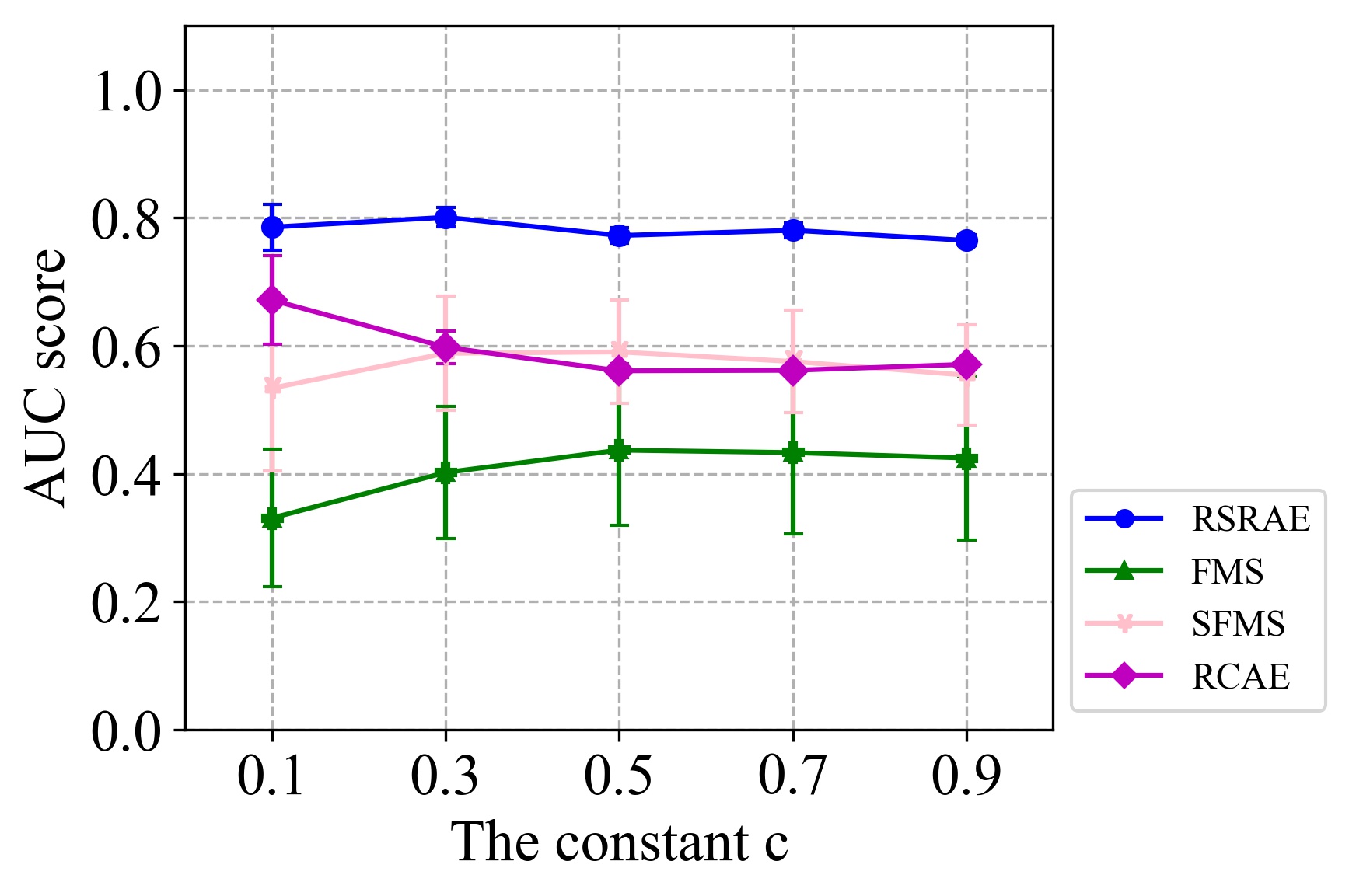}
\end{minipage}
\begin{minipage}[t]{0.48\textwidth}
\centering
\includegraphics[width=6cm]{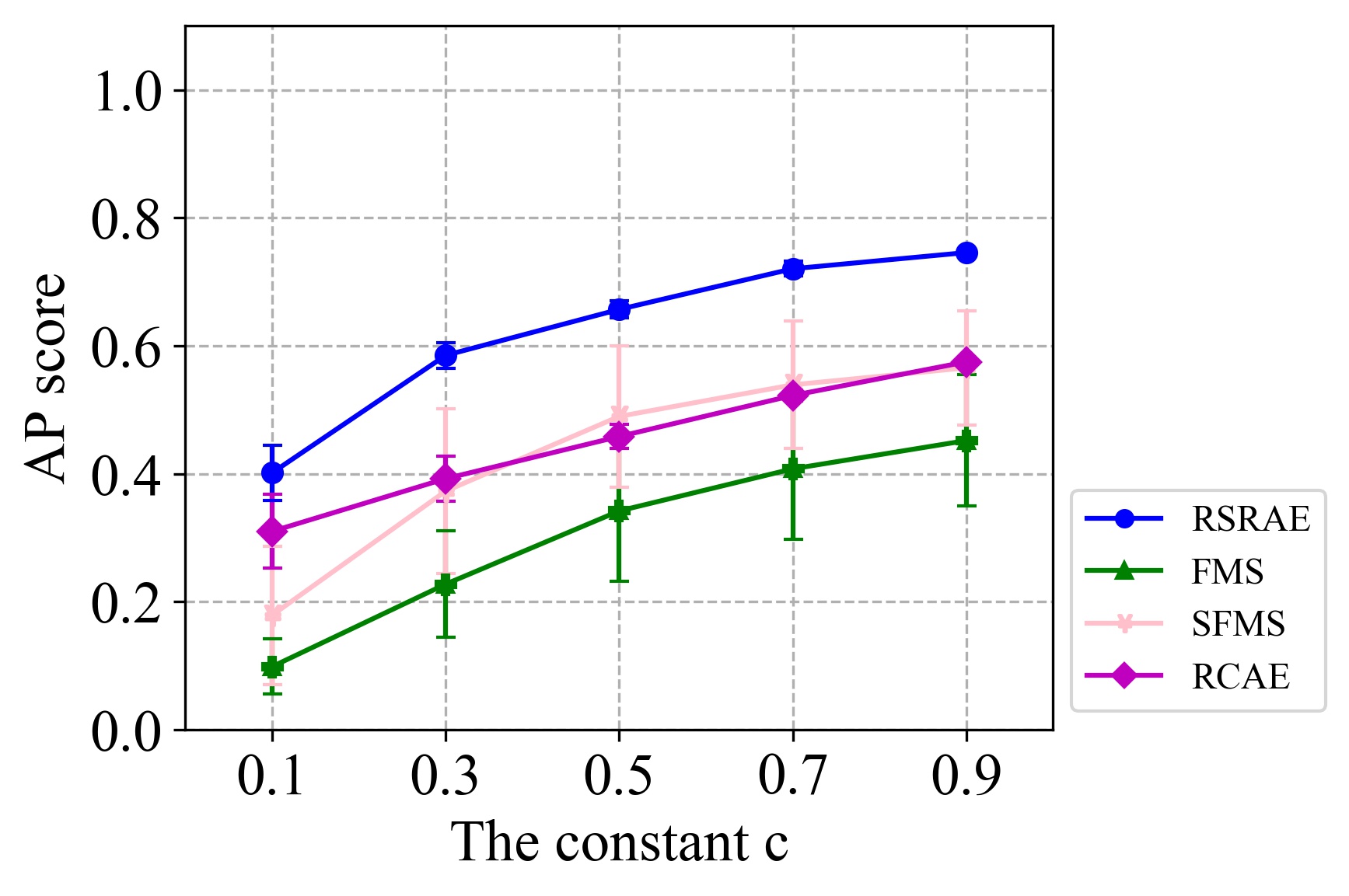}
\end{minipage}

\centering
\begin{minipage}[t]{0.48\textwidth}
\rotatebox{90}{\null \qquad Fashion MNIST}
\centering
\includegraphics[width=6cm]{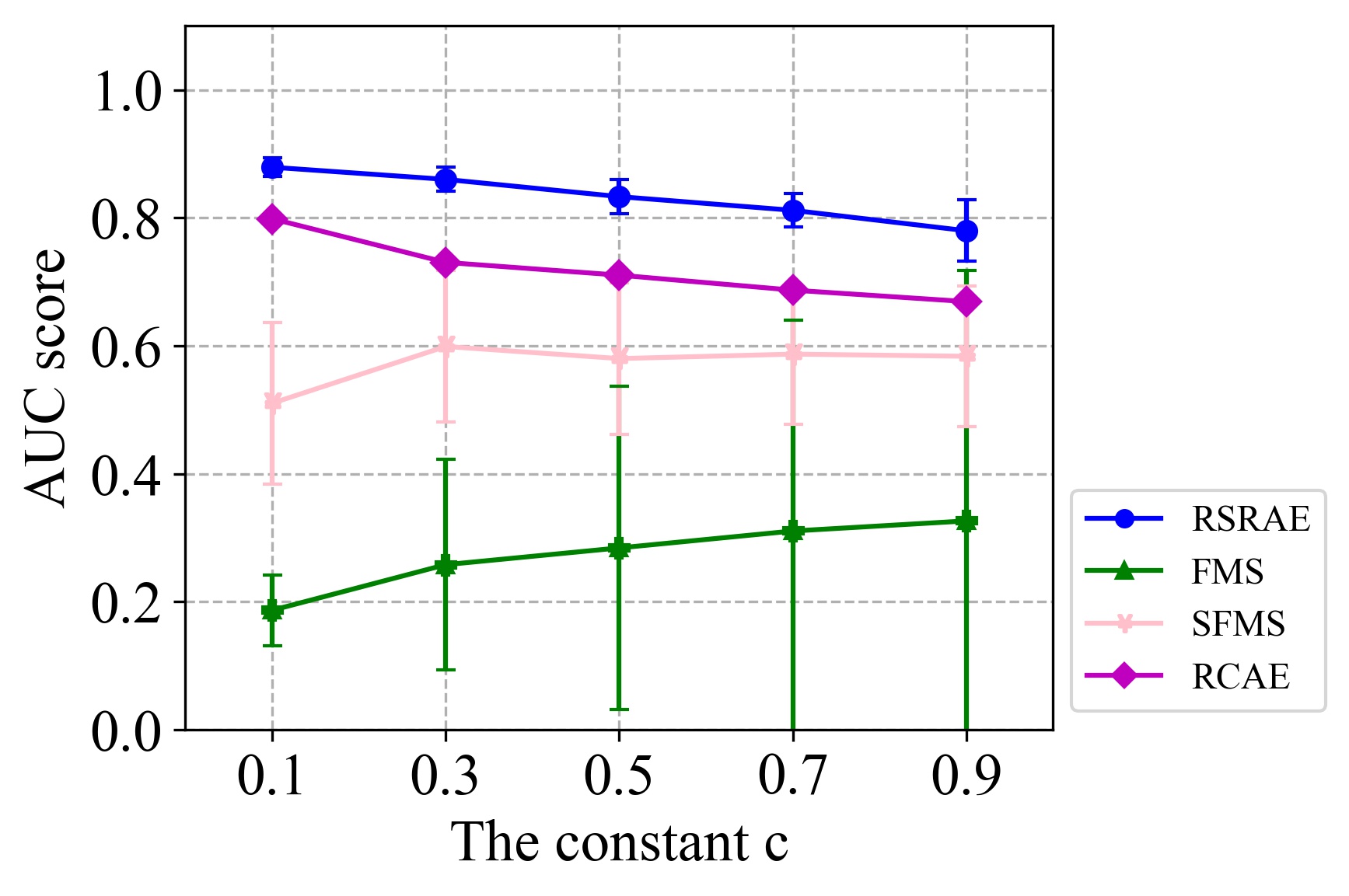}
\end{minipage}
\begin{minipage}[t]{0.48\textwidth}
\centering
\includegraphics[width=6cm]{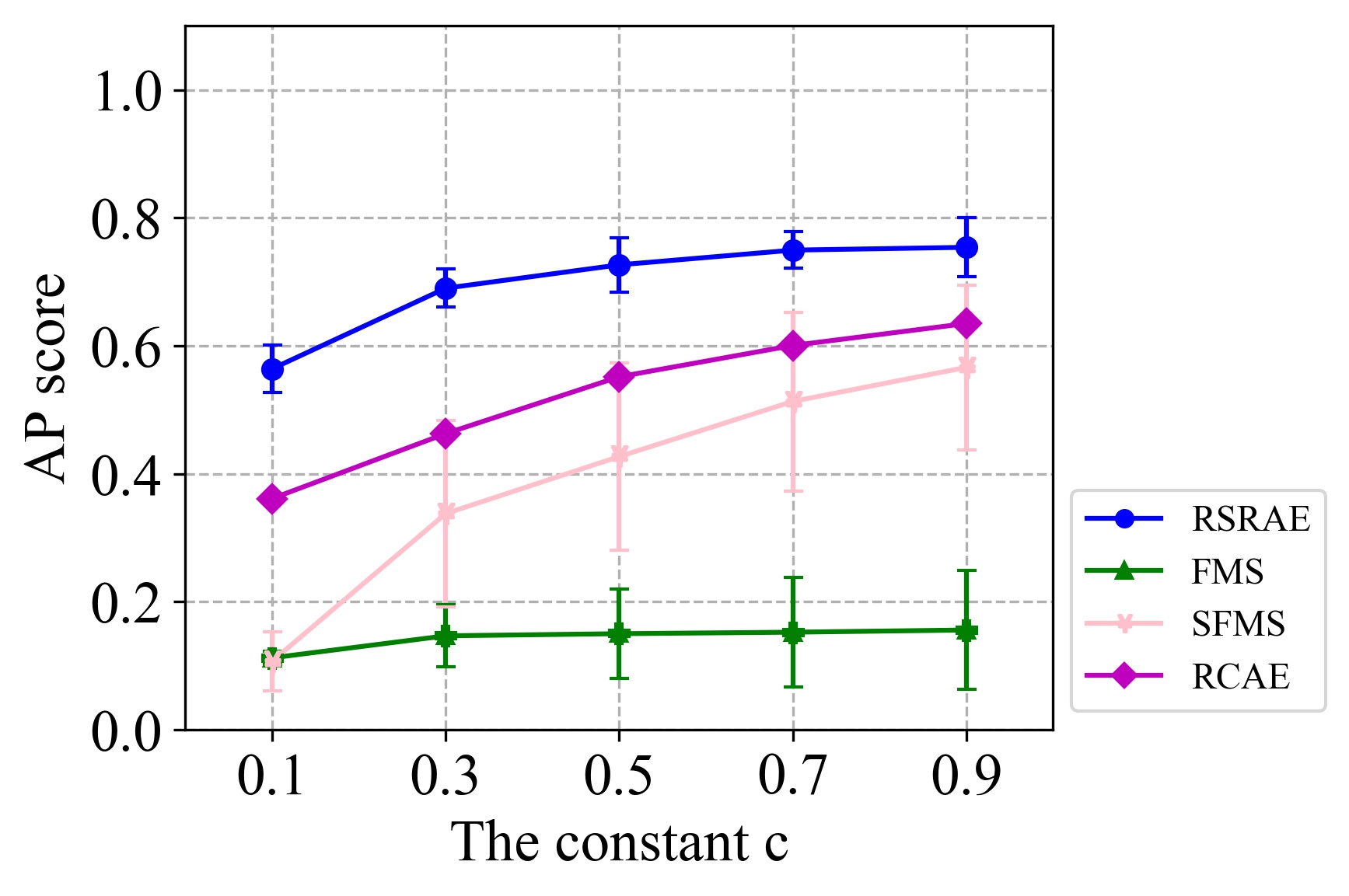}
\end{minipage}

\centering
\begin{minipage}[t]{0.48\textwidth}
\rotatebox{90}{\null \qquad Tiny Imagenet}
\centering
\includegraphics[width=6cm]{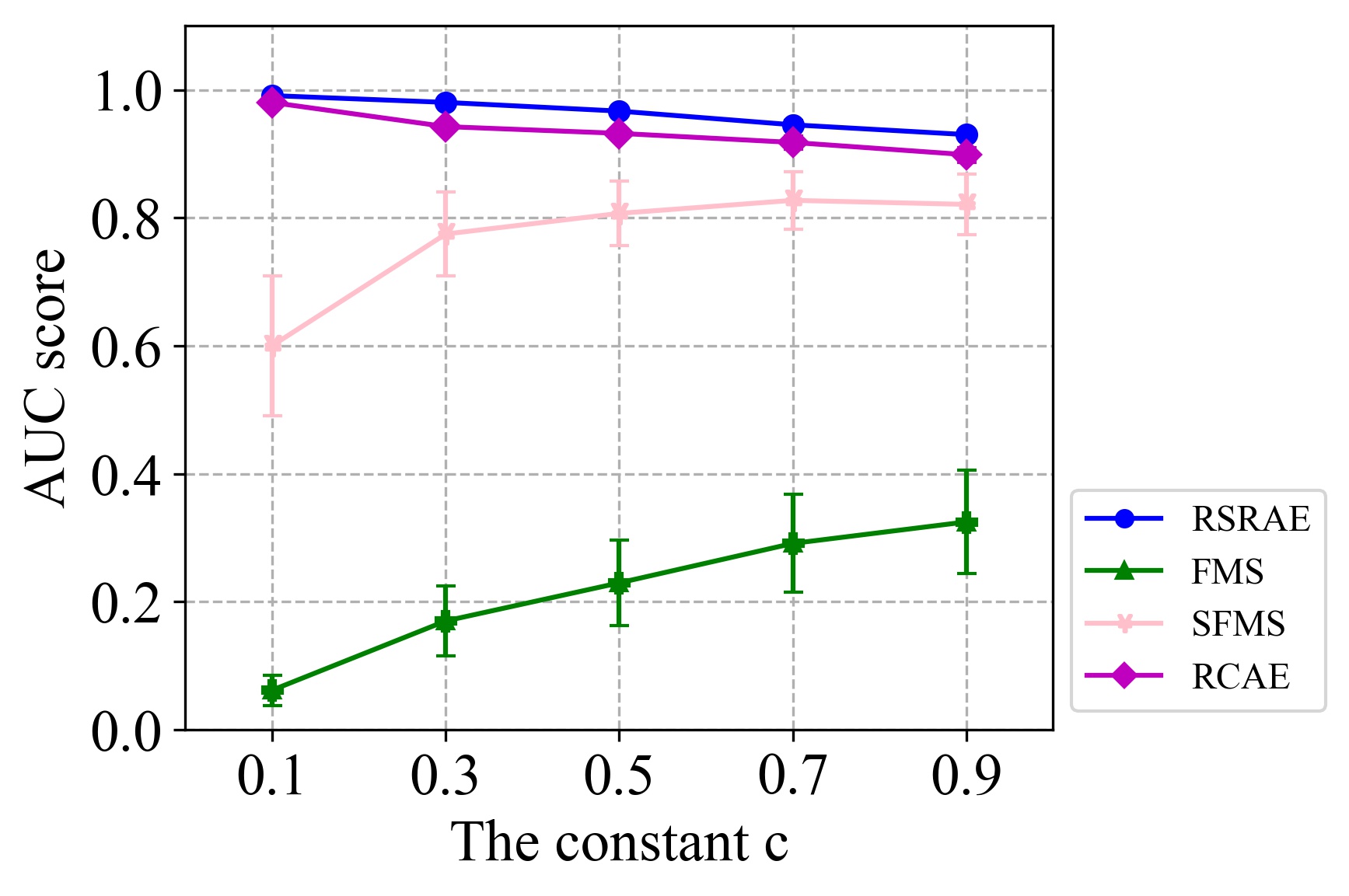}
\end{minipage}
\begin{minipage}[t]{0.48\textwidth}
\centering
\includegraphics[width=6cm]{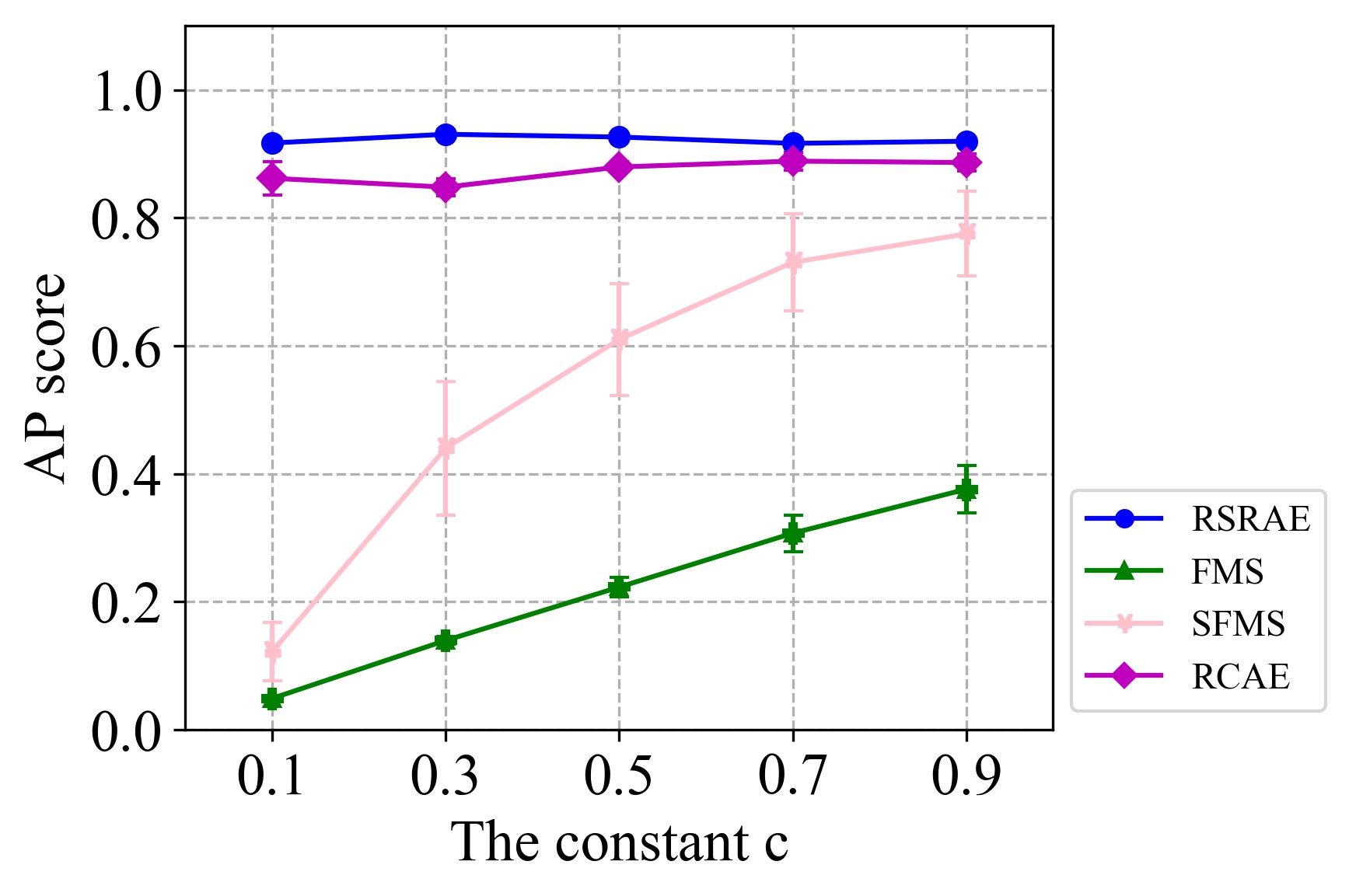}
\end{minipage}

\centering
\begin{minipage}[t]{0.48\textwidth}
\rotatebox{90}{\null \qquad Reuters-21578}
\centering
\includegraphics[width=6cm]{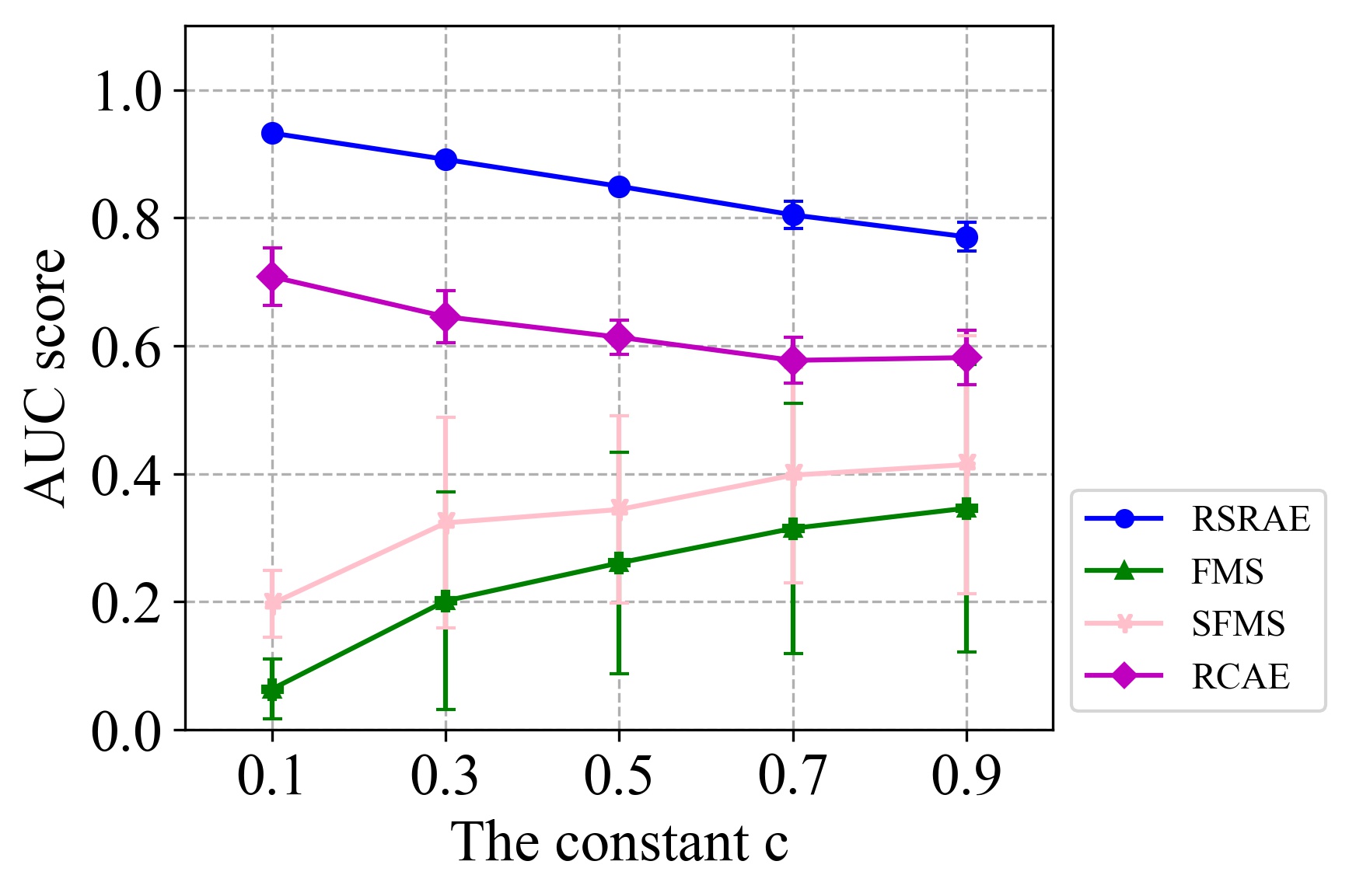}
\end{minipage}
\begin{minipage}[t]{0.48\textwidth}
\centering
\includegraphics[width=6cm]{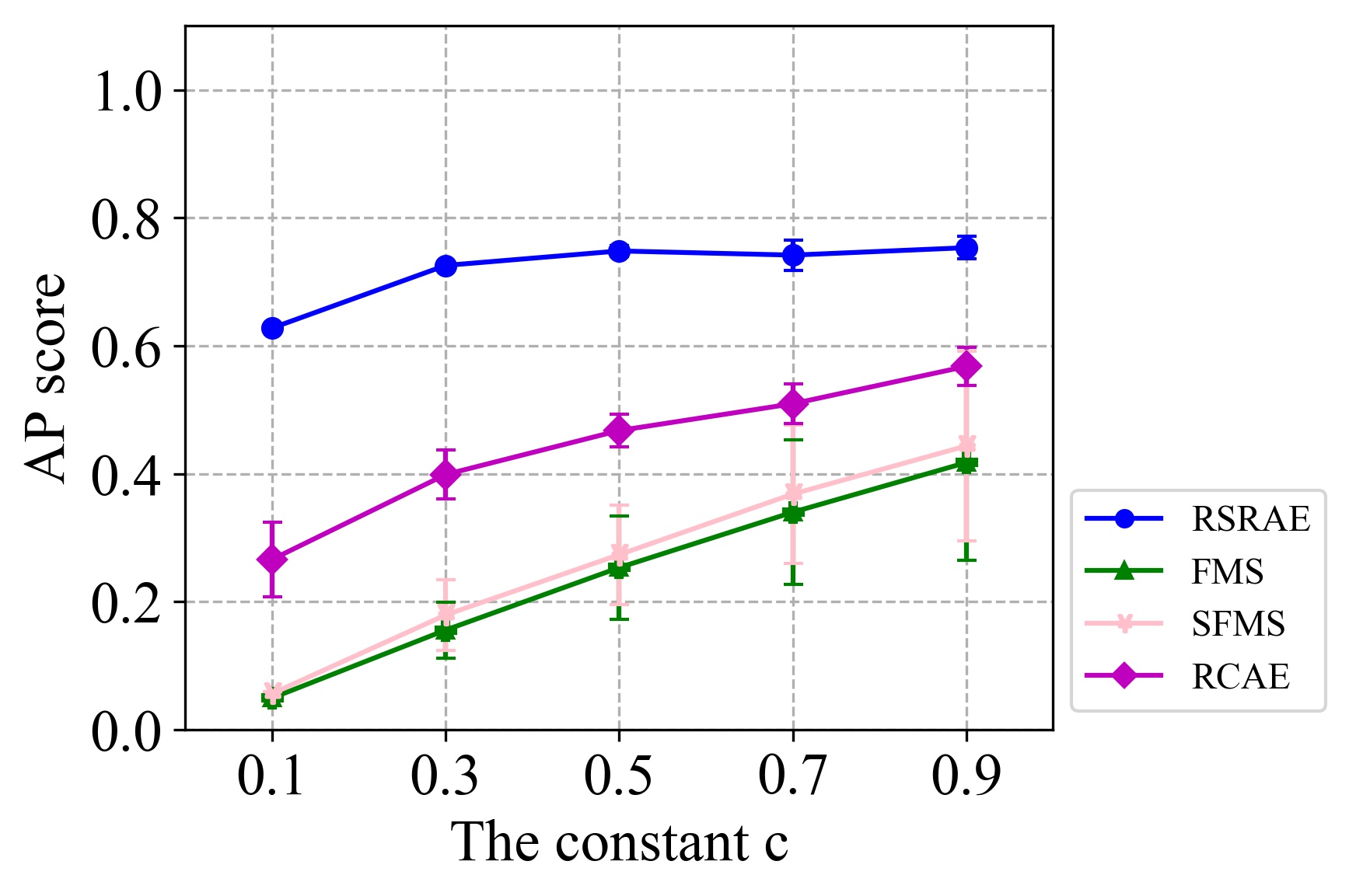}
\end{minipage}

\centering
\begin{minipage}[t]{0.48\textwidth}
\rotatebox{90}{\null \qquad 20 Newsgroups}
\centering
\includegraphics[width=6cm]{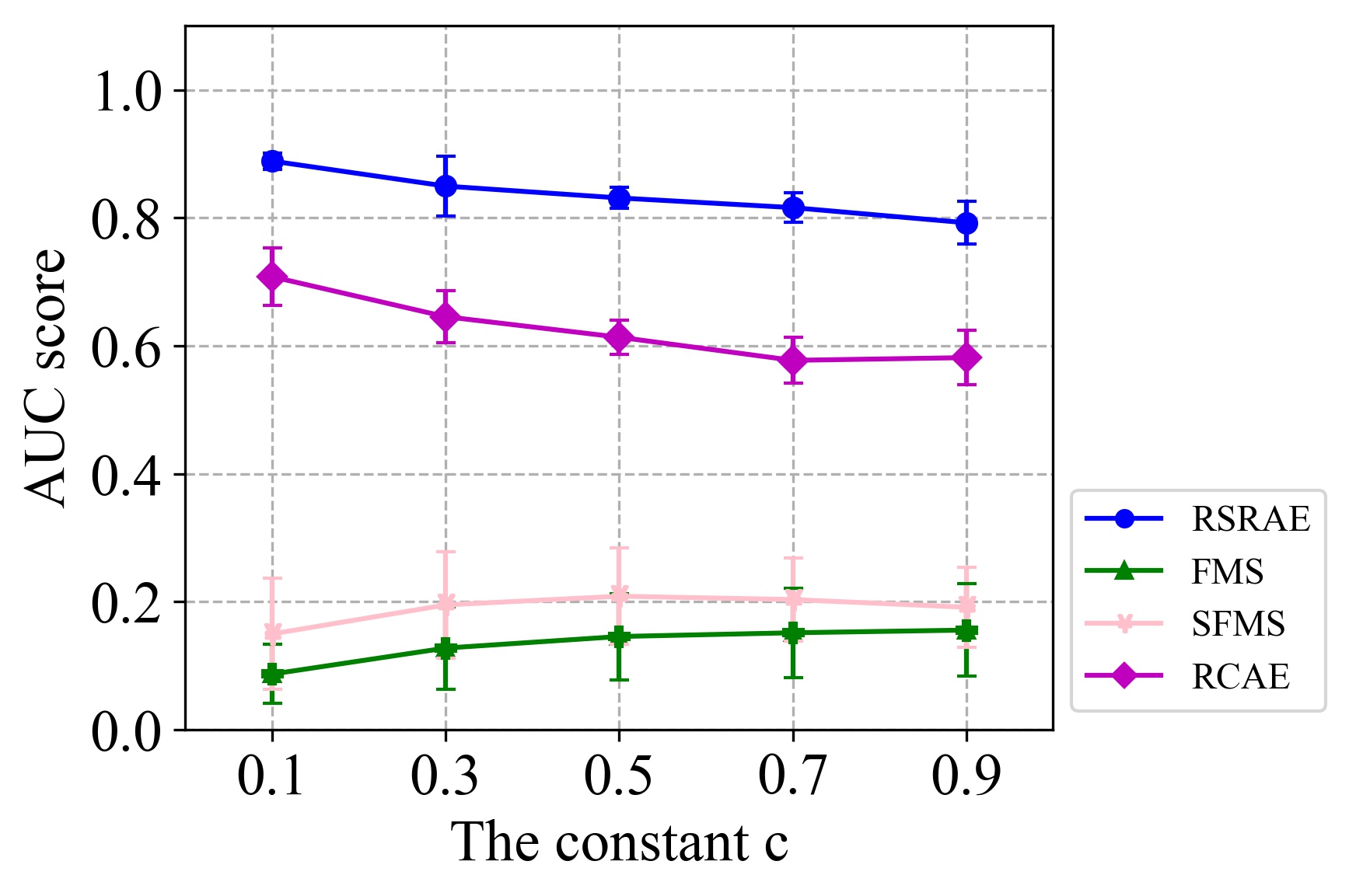}
\end{minipage}
\begin{minipage}[t]{0.48\textwidth}
\centering
\includegraphics[width=6cm]{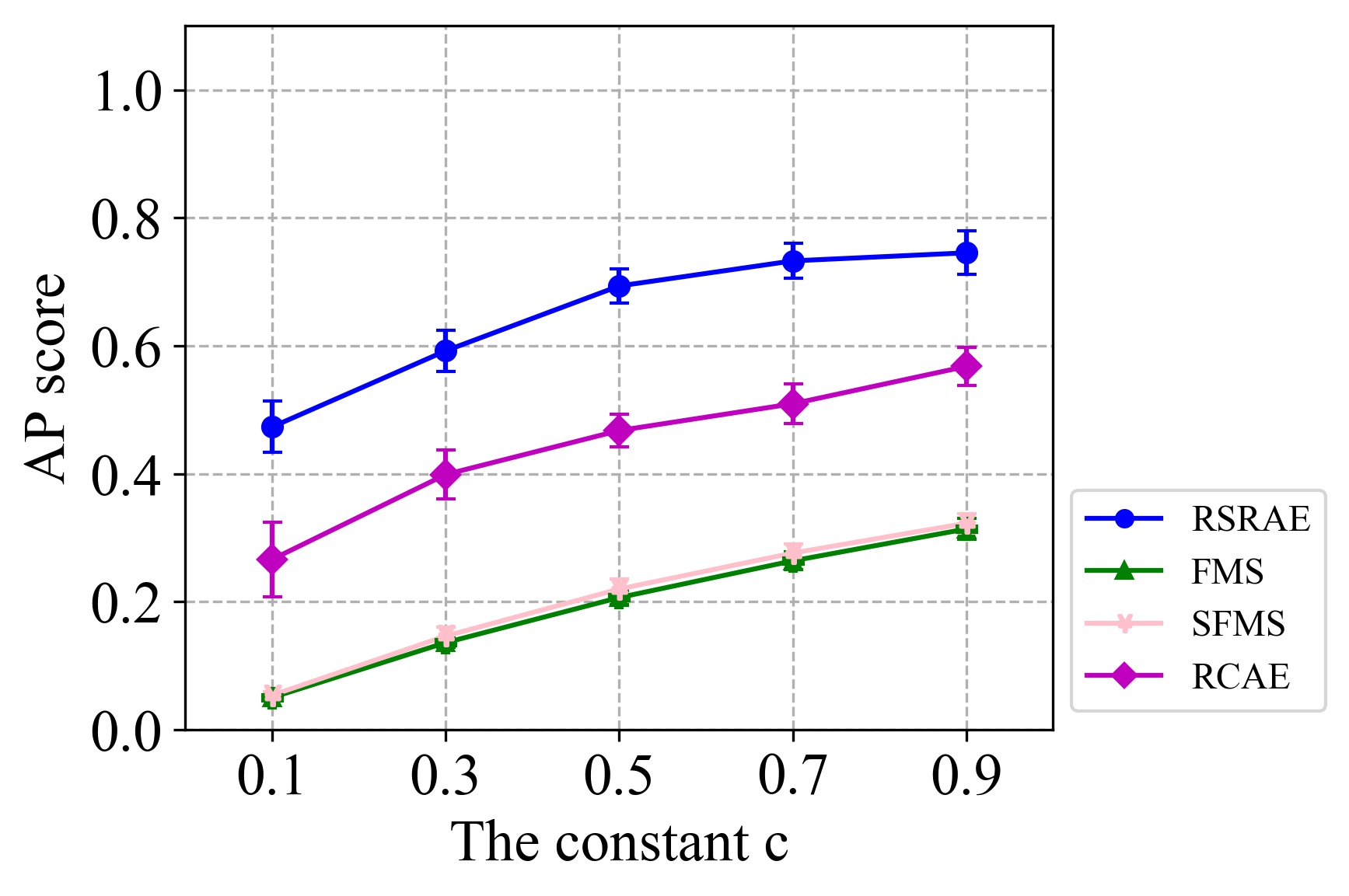}
\end{minipage}

\caption{AUC and AP scores for RSRAE, FMS, SFMS and RCAE. From top to bottom are the results using Caltech 101, Fashion MNIST, Tiny Imagenet with deep features, Reuters-21578 and 20 Newsgroups.}
\label{fig:RSRandRCAE}

\end{figure}

\newpage
\section{Sensitivity to hyperparameters}
\label{sec:sensitivity}

We examine the sensitivity of some of the reported results to changes in the hyperparameters. Section \ref{subsec:intrinsicdim} tests the sensitivity of RSRAE to changes in the intrinsic dimension $d$.
Section \ref{subsec:learningrate} tests the sensitivity of RSRAE to changes in the learning rate. 
Section \ref{subsec:hyperparameters} tests the sensitivity of RSRAE+ to changes in $\lambda_1$ and $\lambda_2$.

\subsection{Sensitivity to the intrinsic dimension}
\label{subsec:intrinsicdim}

In the experiments reported in \Secref{sec:real} we fixed $d=10$. Here we check the sensitivity of the reported results to changes in $d$.
We use the same datasets of \Secref{subsec:res} with an outlier ratio of $c=0.5$ and test the following values of $d$: $1,2,5,8,10,12,15,20,30,40,50$.  Fig.~\ref{fig:intrinsicdim} reports the AUC and AP scores for these choice of $d$ and for these datasets with $c=0.5$. 
We note that, in general, our results are not  sensitive to choices of $d \leq 30$.

We believe that the structure of these datasets is complex, and is not represented by a smooth manifold of a fixed dimension. Therefore, low-dimensional encoding of the inliers is beneficial with various choices of low dimensions. 

When $d$ gets closer to $D$ the performance deteriorates. Such a decrease in accuracy is noticeable for Reuters-21578 and 20 Newsgroups, where for both datasets $D=128$. For the image data sets (without deep features) $D = 1152$ and thus only relatively small values of $d$ were tested. As an example of large $d$ for an image dataset, we consider the case of $d=D=1152$ in Caltech101 with $c=0.5$. In this case, AUC = 0.619 and AP = 0.512, which are very low scores.

We conclude that in our experiments (with $c=0.5$), RSRAE was stable in $d$ around our choice of $d=10$.

\begin{figure}[htbp]
\centering
\begin{minipage}[t]{0.48\textwidth}
\rotatebox{90}{\null \qquad Caltech 101}
\centering
\includegraphics[width=6cm]{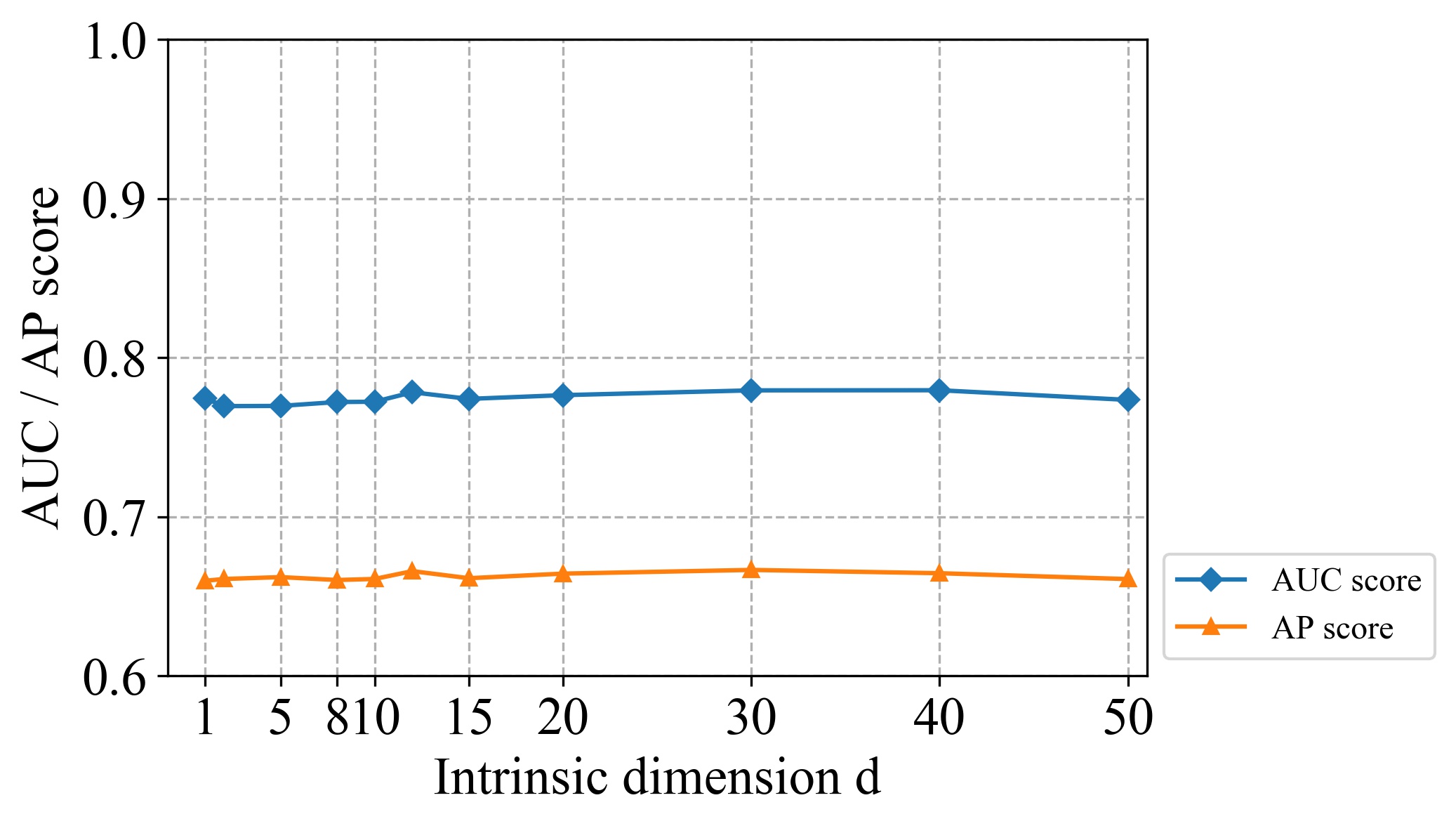}
\end{minipage}
\begin{minipage}[t]{0.48\textwidth}
\rotatebox{90}{\null \qquad Fashion MNIST}
\centering
\includegraphics[width=6cm]{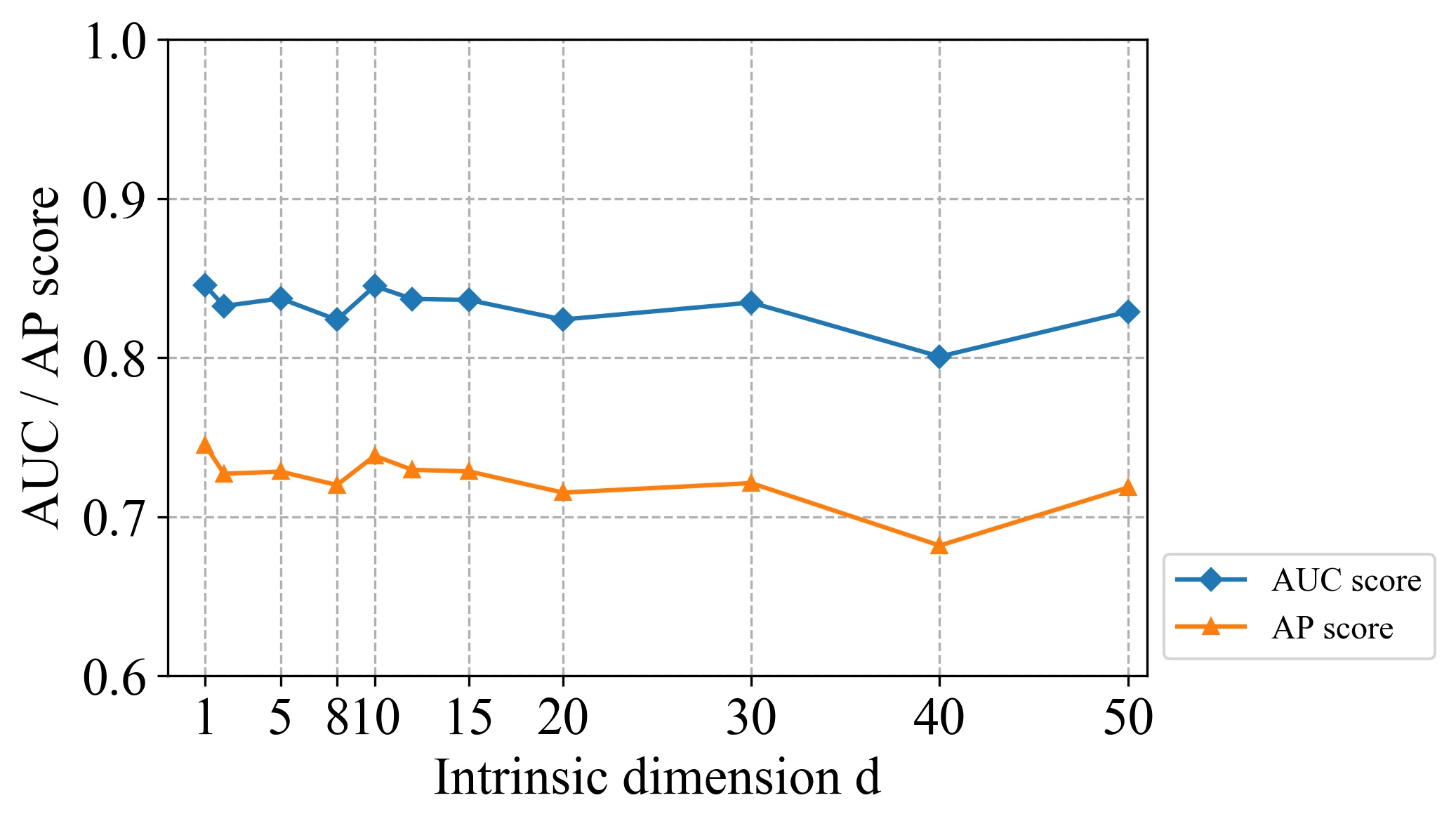}
\end{minipage}

\centering
\begin{minipage}[t]{0.48\textwidth}
\rotatebox{90}{\null \qquad Tiny Imagenet}
\centering
\includegraphics[width=6cm]{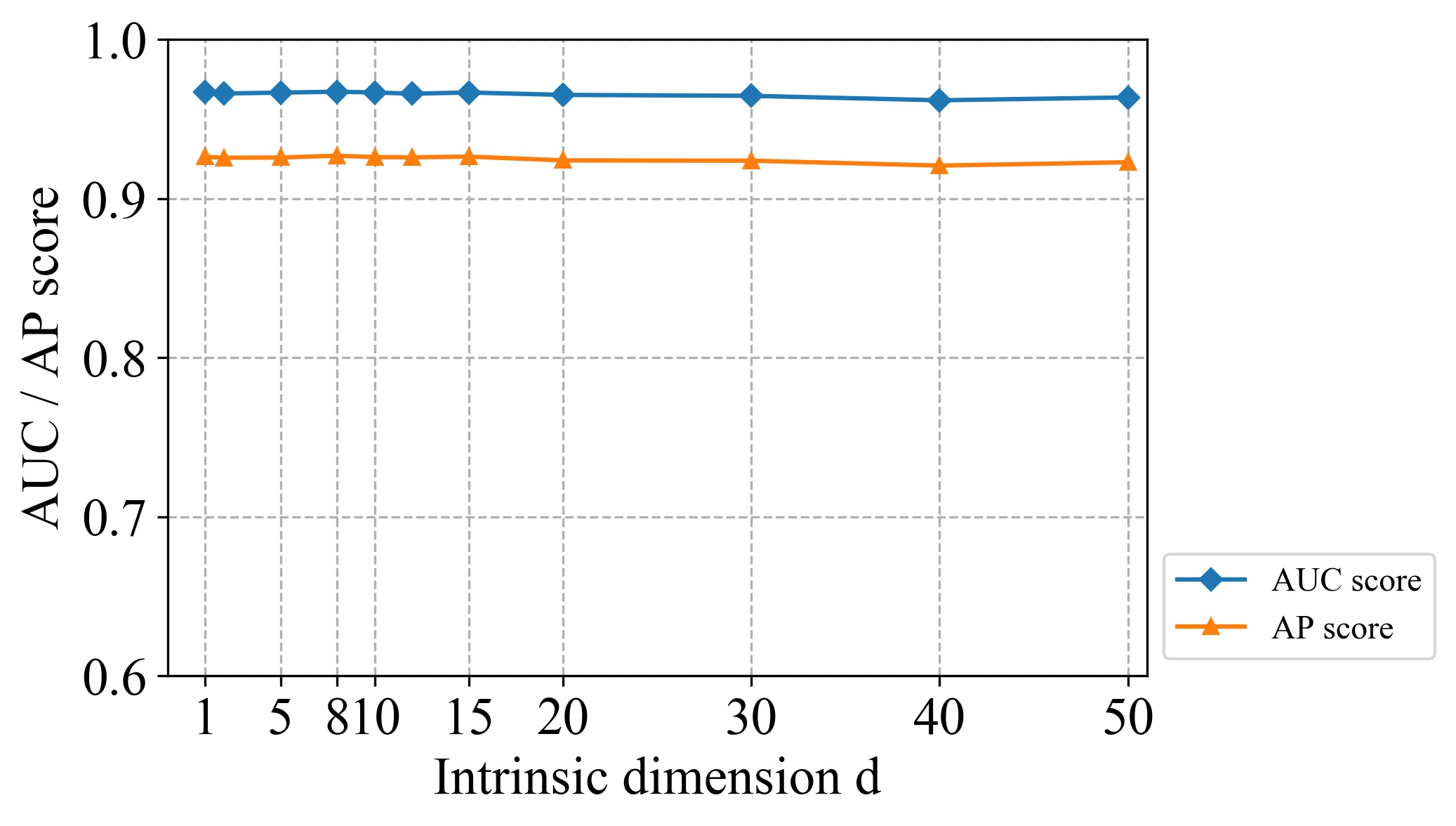}

\end{minipage}
\begin{minipage}[t]{0.48\textwidth}
\rotatebox{90}{\null \qquad Reuters-21578}
\centering
\includegraphics[width=6cm]{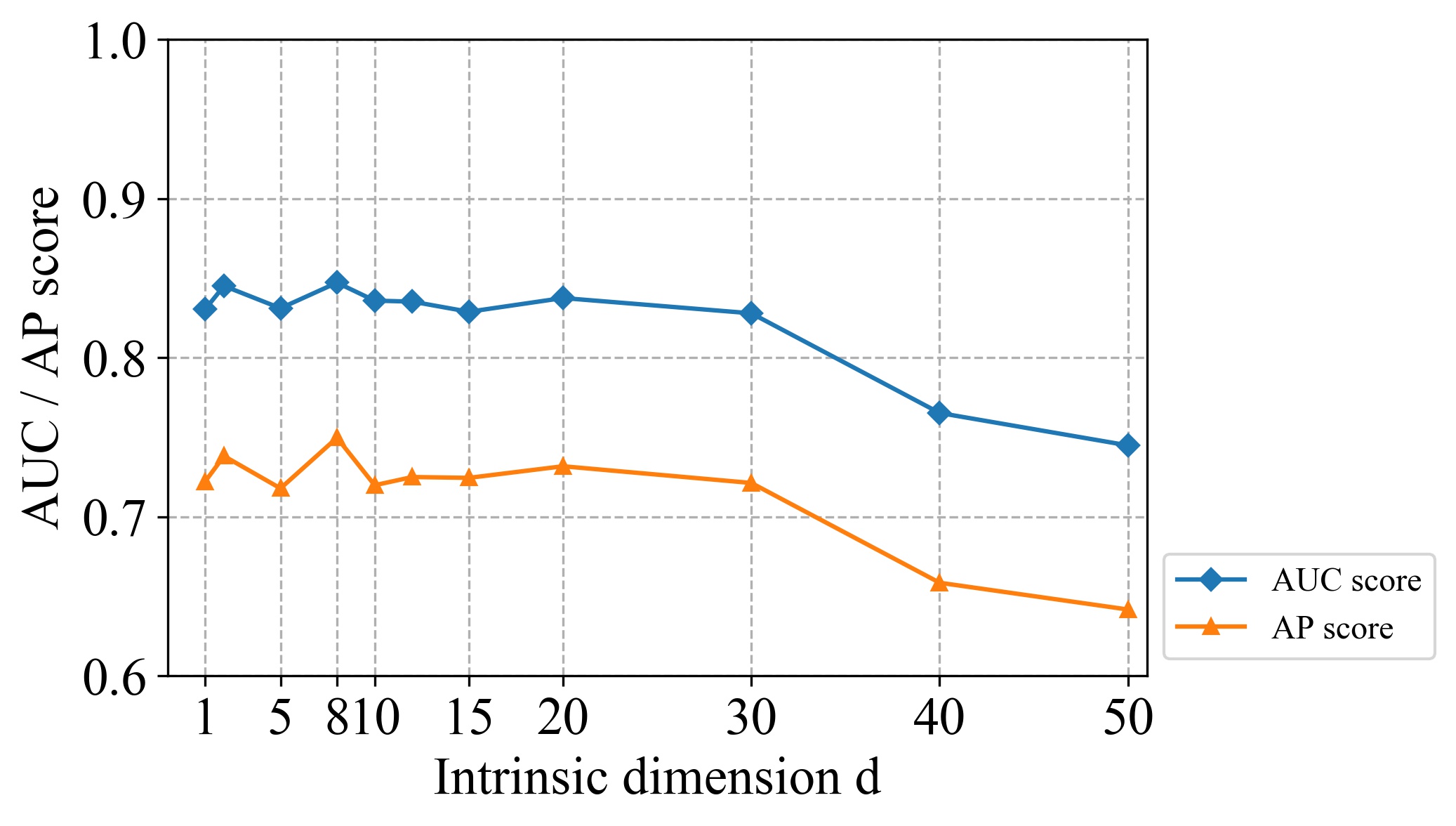}
\end{minipage}

\centering
\begin{minipage}[t]{0.48\textwidth}
\rotatebox{90}{\null \qquad 20 Newsgroups}
\centering
\includegraphics[width=6cm]{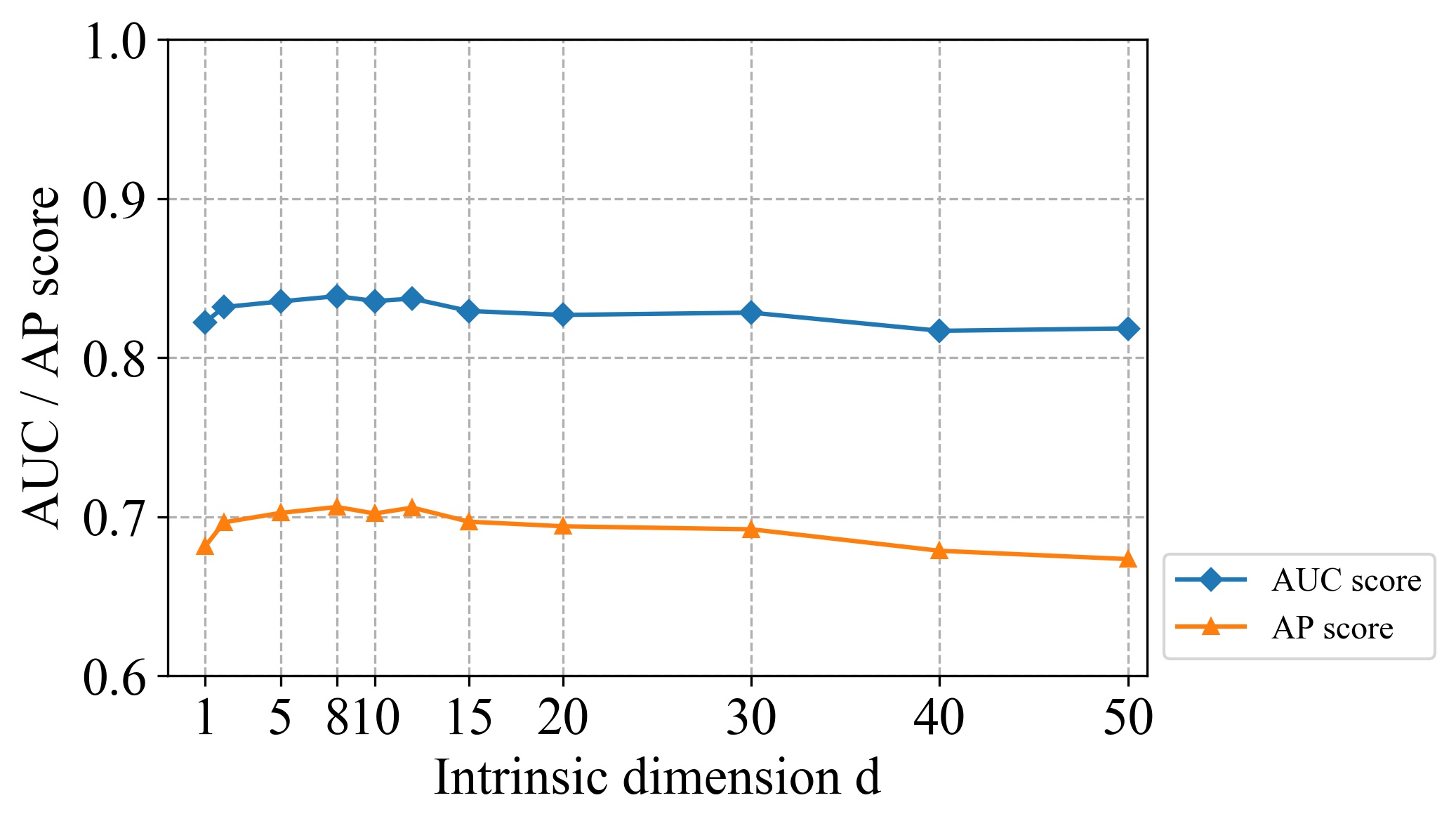}

\end{minipage}

\caption{AUC and AP scores for different choices of $d$. The datasets are the same as those in \Secref{subsec:res}, where the outlier ratio is $c=0.5$.
\label{fig:intrinsicdim}
}
\end{figure}

\newpage
\subsection{Sensitivity to the learning rate}
\label{subsec:learningrate}

In the experiments reported in \Secref{sec:real} we fixed the learning rate for RSRAE to be $0.00025$. Here we check the sensitivity of the reported results to changes in the learning rate.
We use the same datasets of \Secref{subsec:res} with an outlier ratio of $c=0.5$ and test the following values of the learning rate: $0.0001, 0.00025, 0.0005, 0.001, 0.0025, 0.005, 0.01, 0.025, 0.05, 0.1$.  Fig.~\ref{fig:learningrate} reports the AUC and AP scores for these values and for these datasets (with $c=0.5$). 
We note that the performance is stable for learning rates not exceeding 0.01.

\begin{figure}[htbp]
\centering
\begin{minipage}[t]{0.48\textwidth}
\rotatebox{90}{\null \qquad Caltech 101}
\centering
\includegraphics[width=6cm]{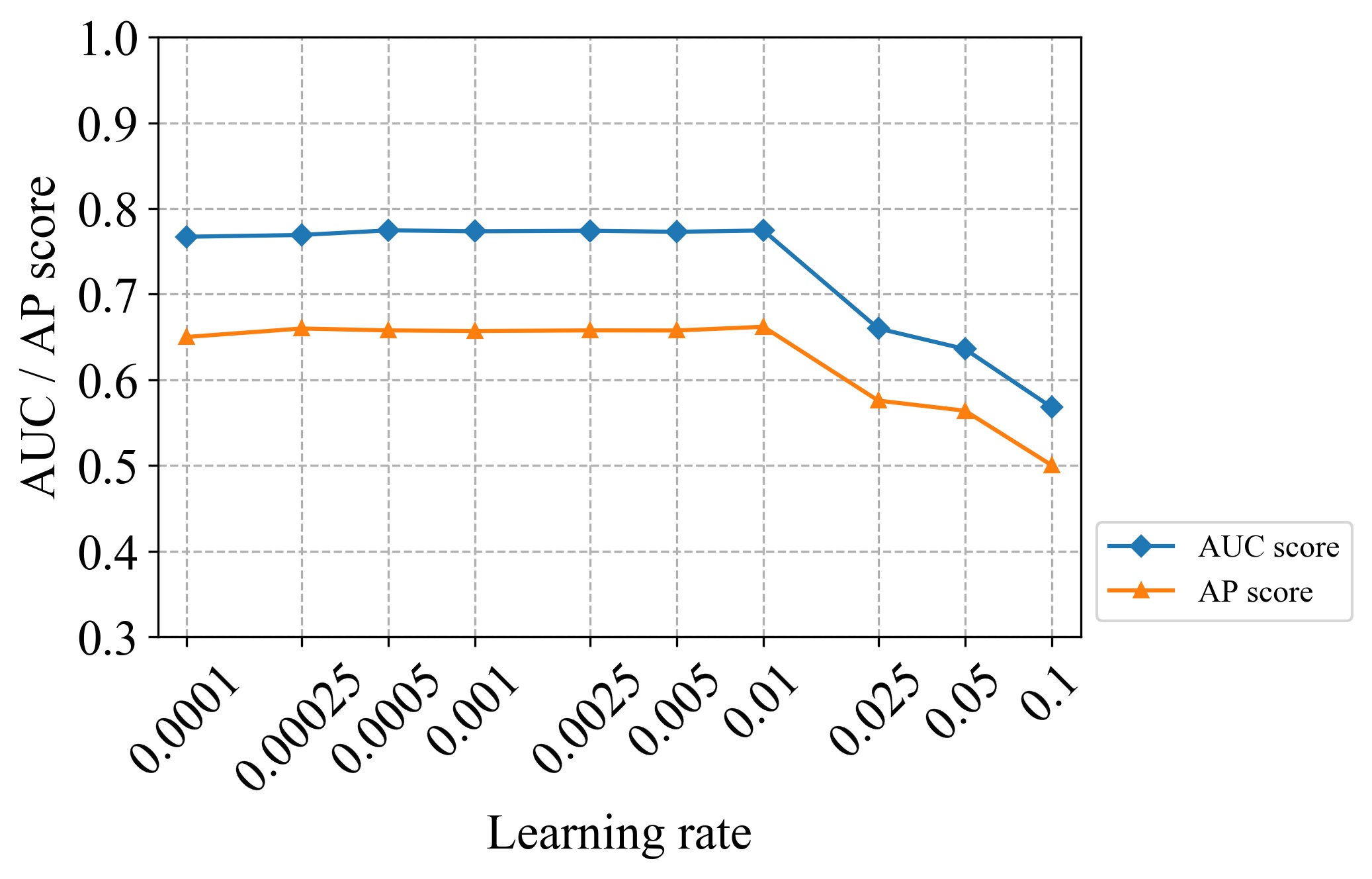}
\end{minipage}
\begin{minipage}[t]{0.48\textwidth}
\rotatebox{90}{\null \qquad Fashion MNIST}
\centering
\includegraphics[width=6cm]{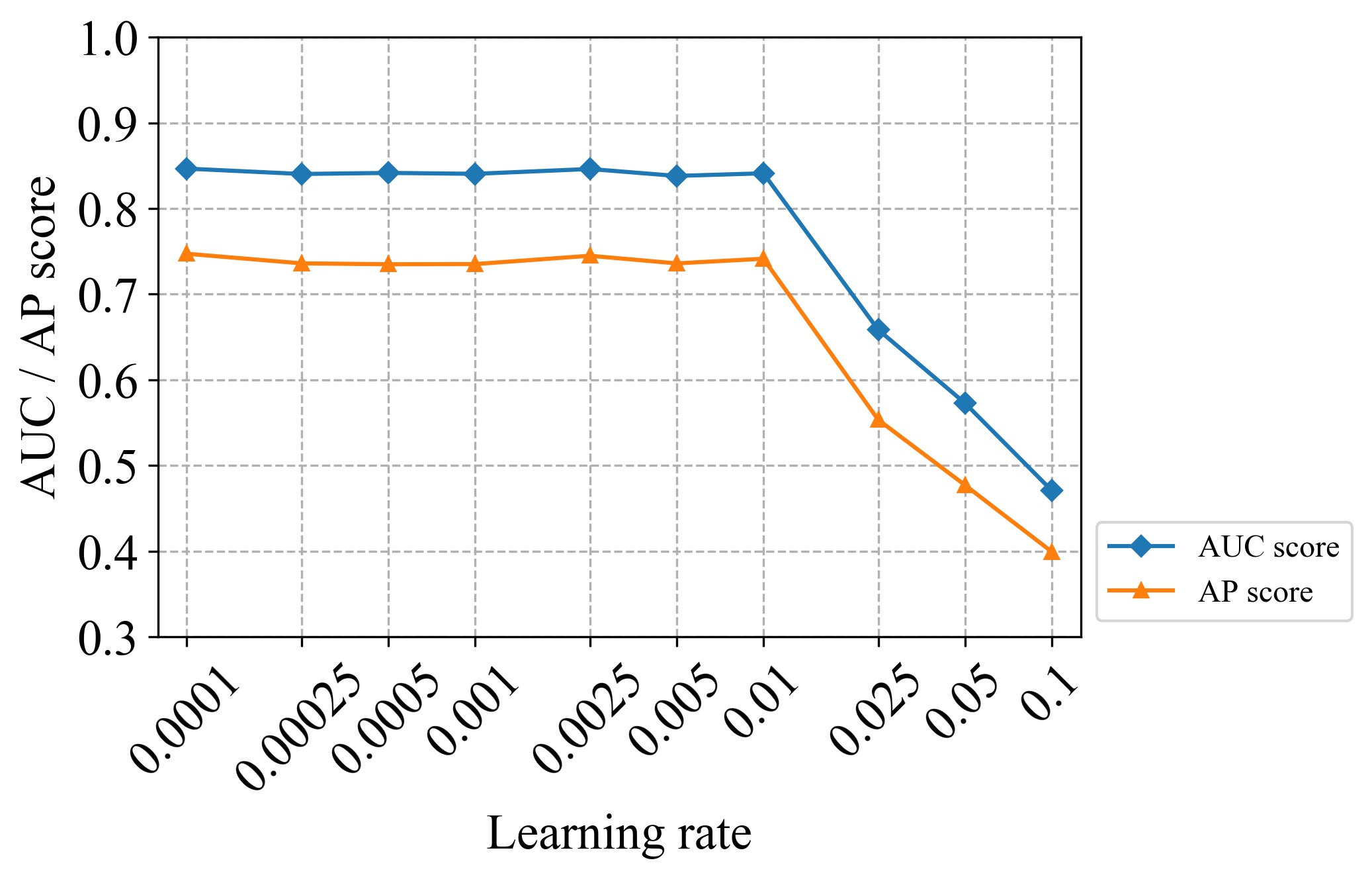}
\end{minipage}

\centering
\begin{minipage}[t]{0.48\textwidth}

\rotatebox{90}{\null \qquad Tiny Imagenet}
\centering
\includegraphics[width=6cm]{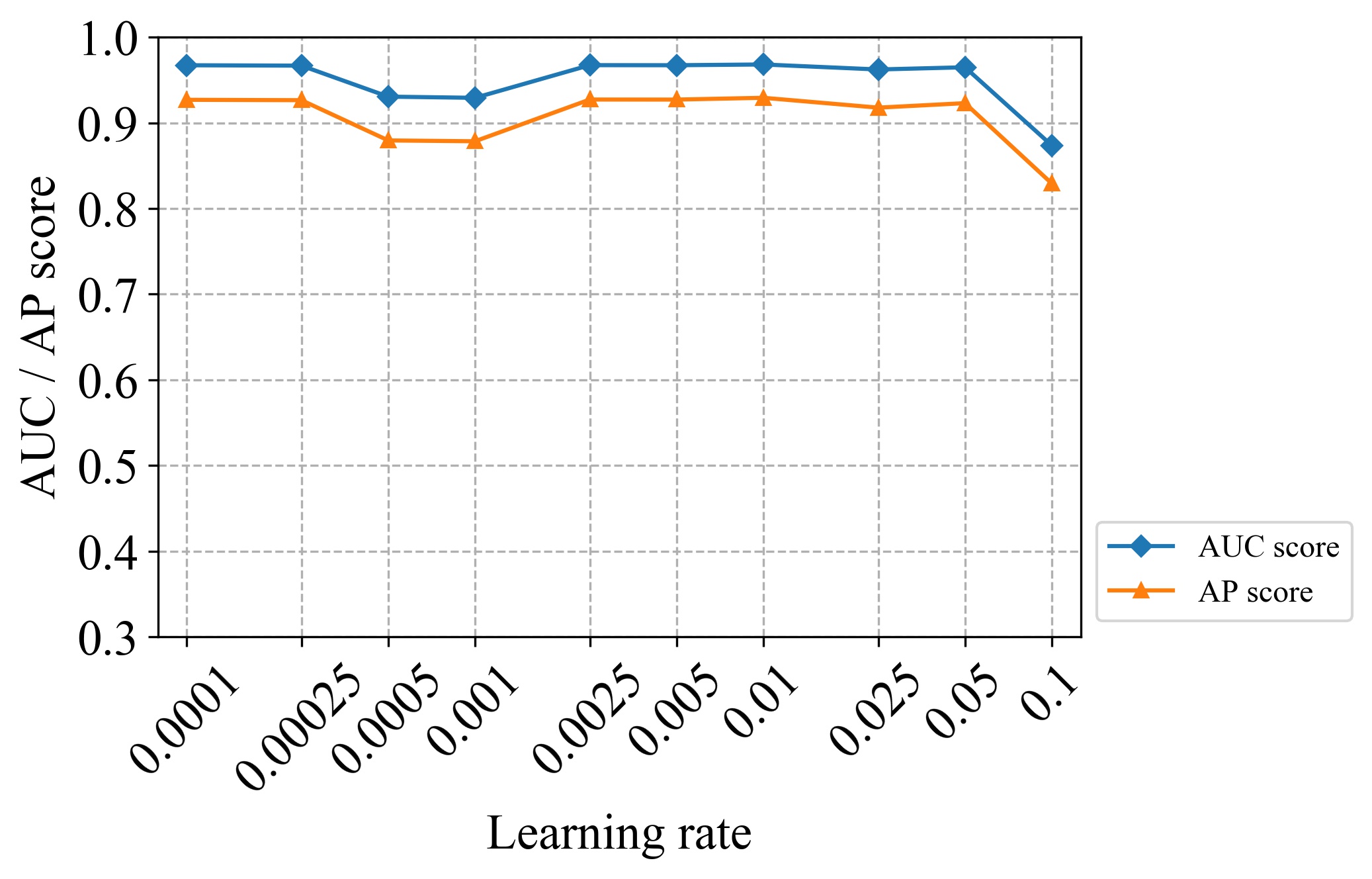}
\end{minipage}
\begin{minipage}[t]{0.48\textwidth}
\rotatebox{90}{\null \qquad Reuters-21578}
\centering
\includegraphics[width=6cm]{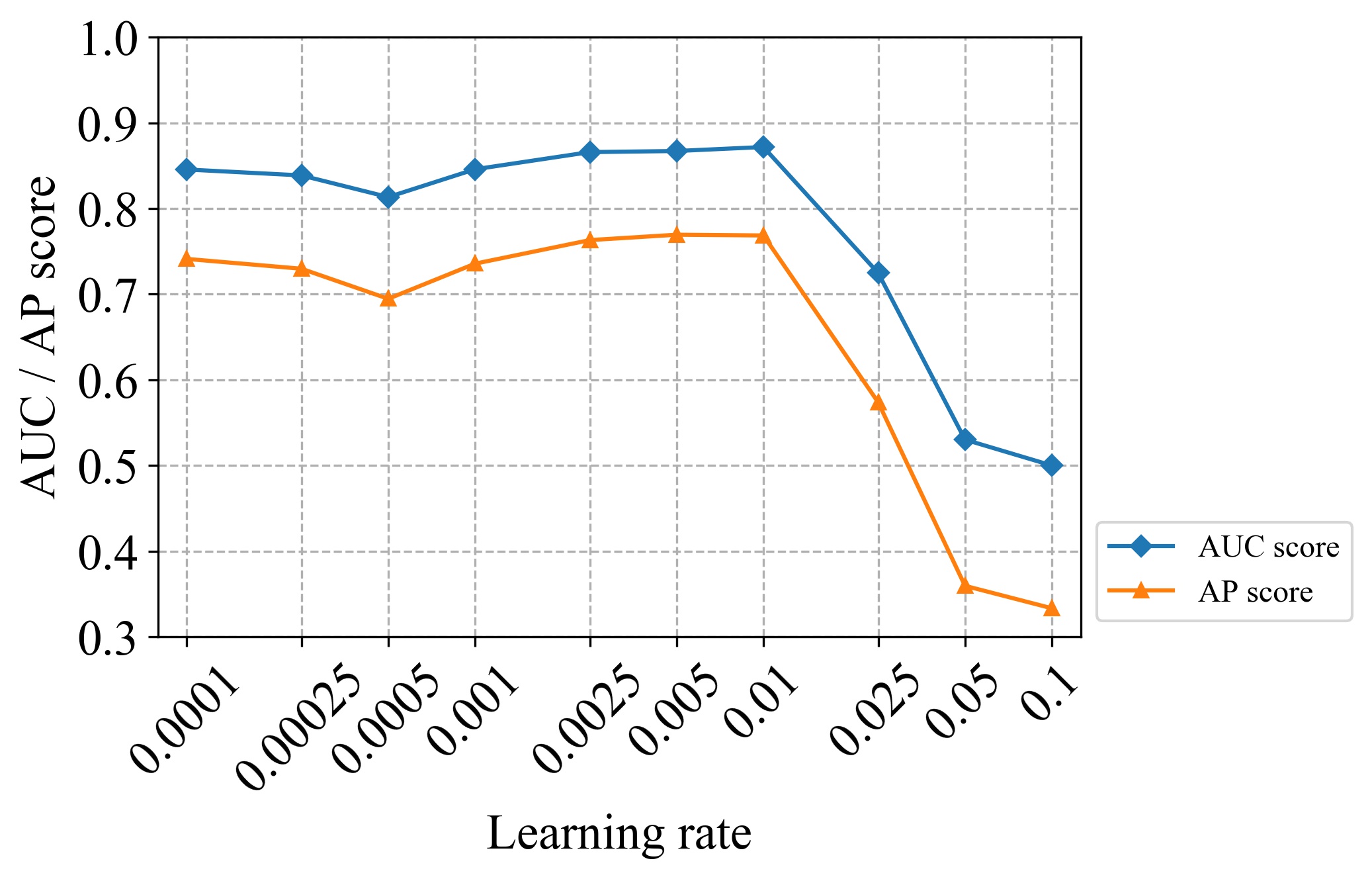}
\end{minipage}

\centering
\begin{minipage}[t]{0.48\textwidth}
\rotatebox{90}{\null \qquad 20 Newsgroups}
\centering
\includegraphics[width=6cm]{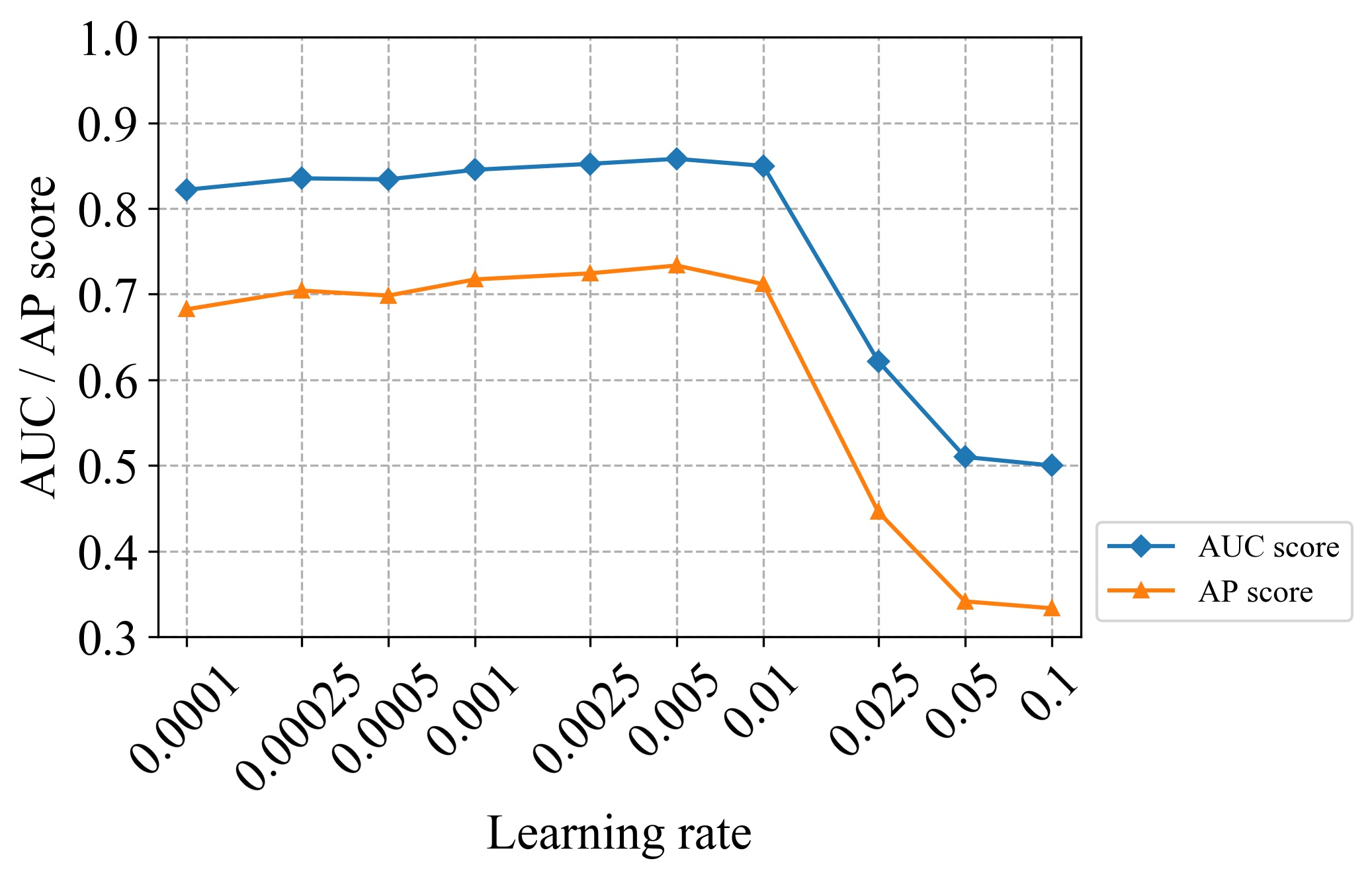}
\end{minipage}

\caption{AUC and AP scores for various learning rates.  The datasets are the same as those in \Secref{subsec:res}, where the outlier ratio is $c=0.5$.}
\label{fig:learningrate}

\end{figure}

\subsection{Sensitivity of RSRAE+ to $\lambda_1$ and $\lambda_2$}
\label{subsec:hyperparameters}

We study the sensitivity of RSRAE+ to different choices of $\lambda_1$ and $\lambda_2$. We recall that RSRAE does not require these parameters. It is still interesting to check such sensitivity and find out whether careful tuning of these parameters in RSRAE+ can yield better scores than those of RSRAE. We use the same datasets of \Secref{subsec:res} with an outlier ratio of $c=0.5$ and simultaneously test the following values of either $\lambda_1$ or  $\lambda_2$: $0.01, 0.02, 0.05, 0.1, 0.2, 0.5, 1.0, 2.0$. Figs.~\ref{fig:hyperparameters} and \ref{fig:hyperparameters2} report the AUC and AP scores for these values and datasets (with $c=0.5$). 
For each subfigure, the above values of $\lambda_1$ and $\lambda_2$ are recorded on the $x$ and $y$ axes, respectively. The darker colors of the heat map correspond to larger scores. For comparison, the corresponding AUC or AP score of RSRAE is indicated in the title of each subfigure. 

We note that RSRAE+ is more sensitive to $\lambda_1$ than $\lambda_2$. Furthermore, as $\lambda_1$ increases the scores are often more stable to changes in $\lambda_1$. That is, the magnitudes of the derivatives of the scores with respect to $\lambda_1$ seem to generally decrease with $\lambda_1$. In \Secref{subsec:cprnorm} we used 
$\lambda_1 = \lambda_2 = 0.1$ as this choice seemed optimal for the independent set of 20 Newsgroup. We note though that optimal hyperparameters depend on the dataset and it is thus not a good idea to optimize them using different datasets. They also depend on the choice of $c$, but for brevity we only test them with $c=0.5$.

At last we note that the AUC and AP scores of RSRAE are comparable to the fine-tuned ones of RSRAE+ (where $c=0.5$). We thus advocate using the alternating minimization of RSRAE, which is independent of $\lambda_1$ and $\lambda_2$.

\begin{figure}[htbp]
\centering
\begin{minipage}[t]{0.48\textwidth}
\rotatebox{90}{\null \qquad Caltech 101}
\centering
\includegraphics[width=6cm]{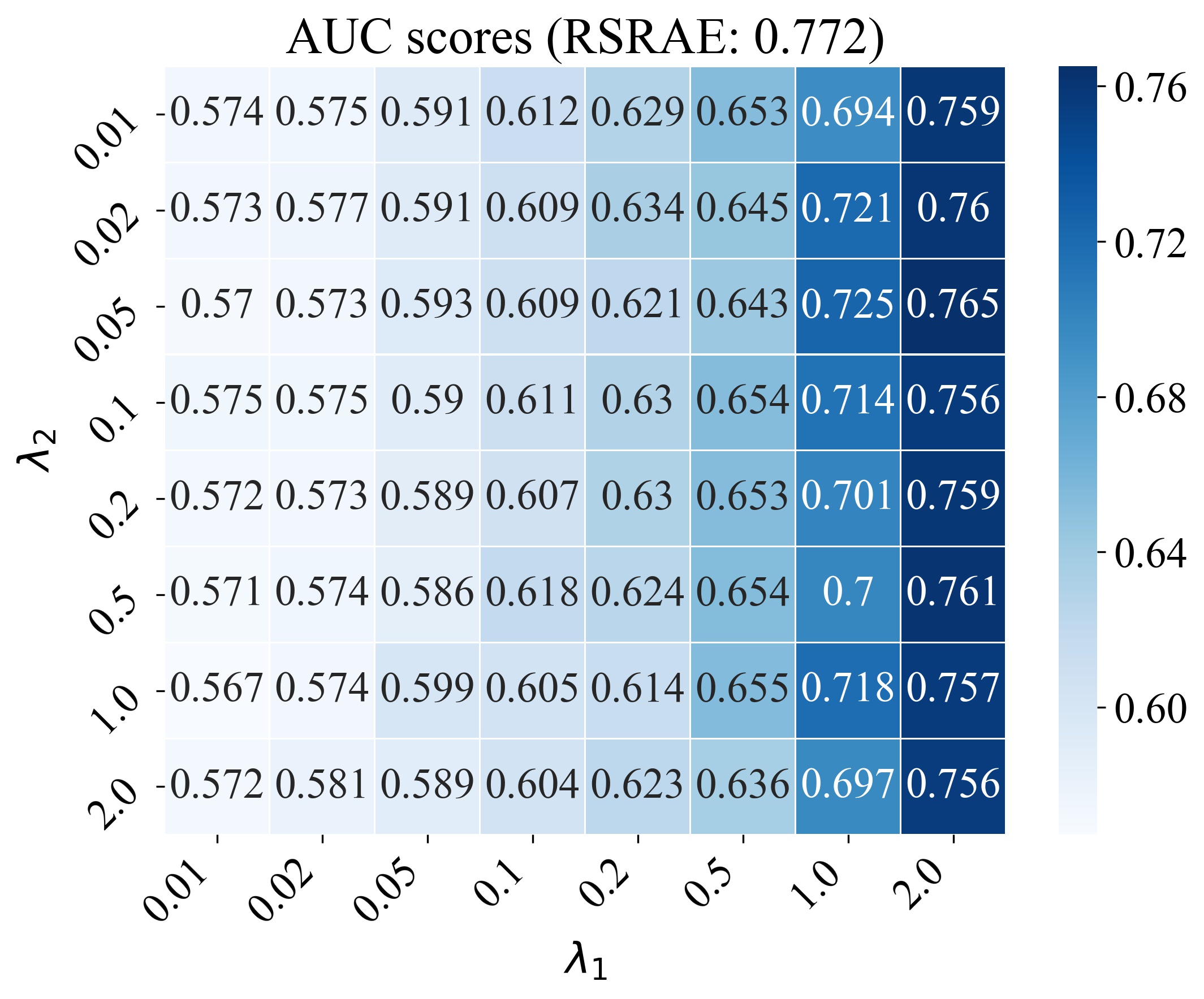}
\end{minipage}
\begin{minipage}[t]{0.48\textwidth}
\centering
\includegraphics[width=6cm]{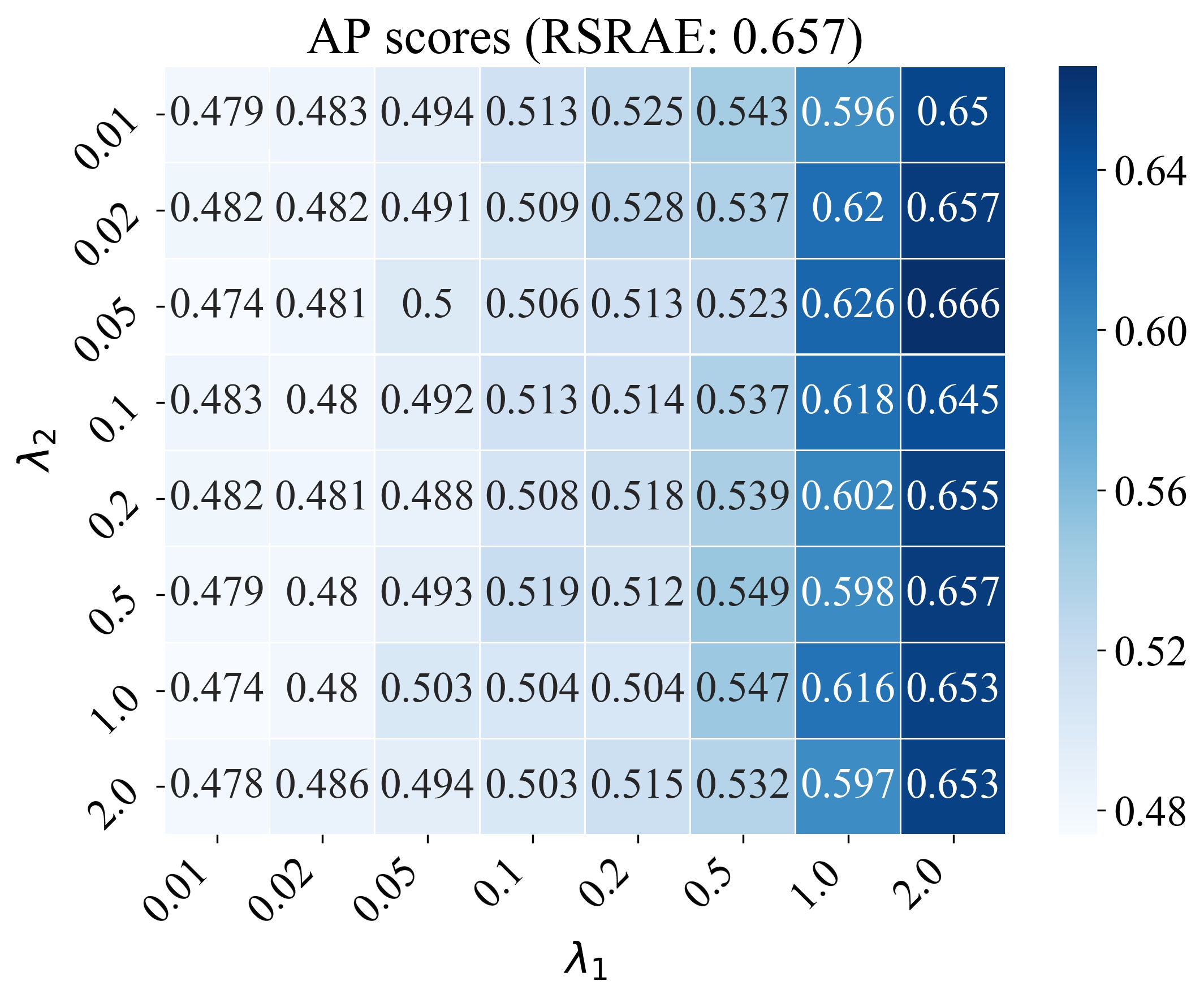}
\end{minipage}

\centering
\begin{minipage}[t]{0.48\textwidth}
\rotatebox{90}{\null \qquad Fashion MNIST}
\centering
\includegraphics[width=6cm]{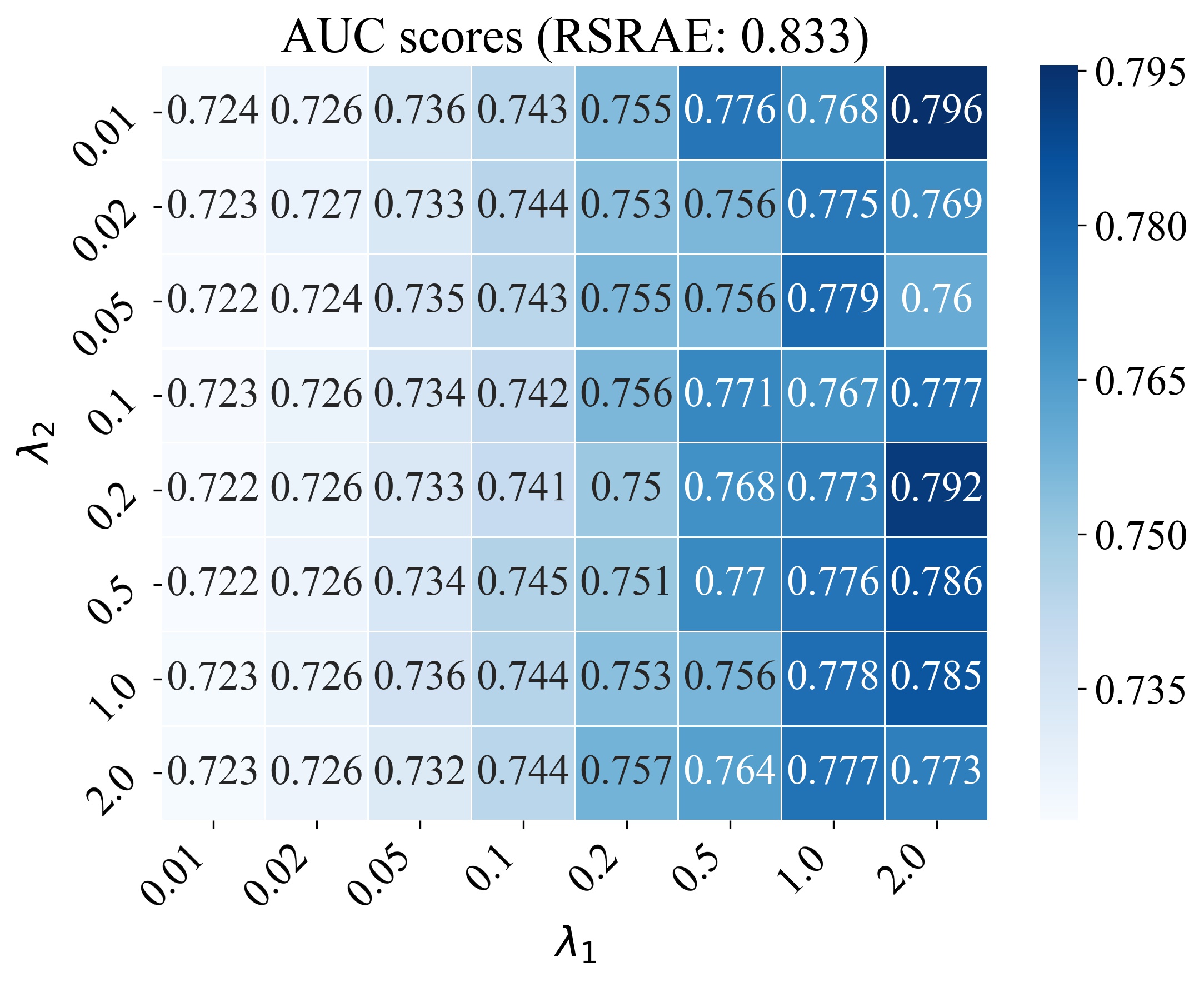}
\end{minipage}
\begin{minipage}[t]{0.48\textwidth}
\centering
\includegraphics[width=6cm]{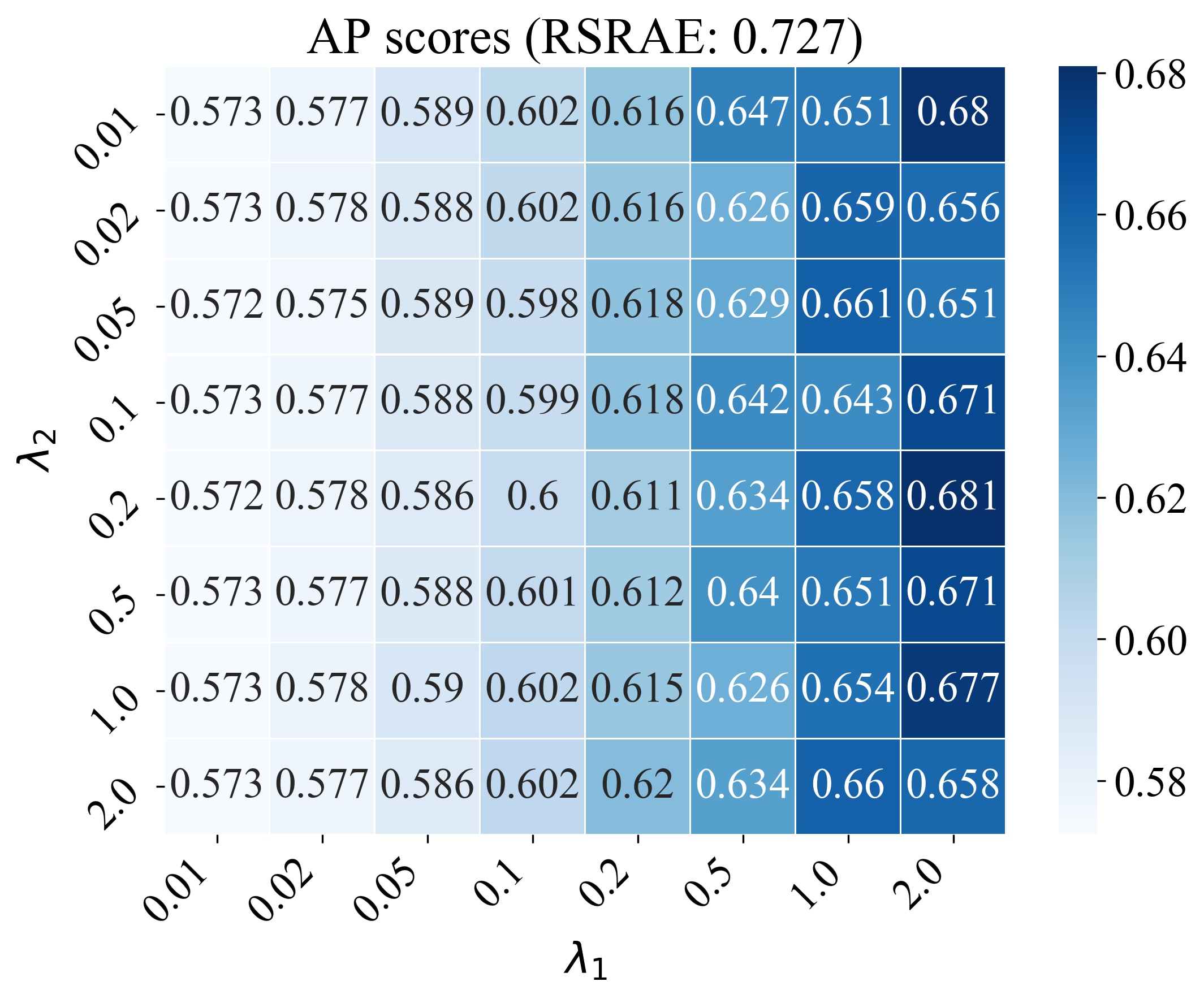}
\end{minipage}

\centering
\begin{minipage}[t]{0.48\textwidth}
\rotatebox{90}{\null \qquad Tiny Imagenet}
\centering
\includegraphics[width=6cm]{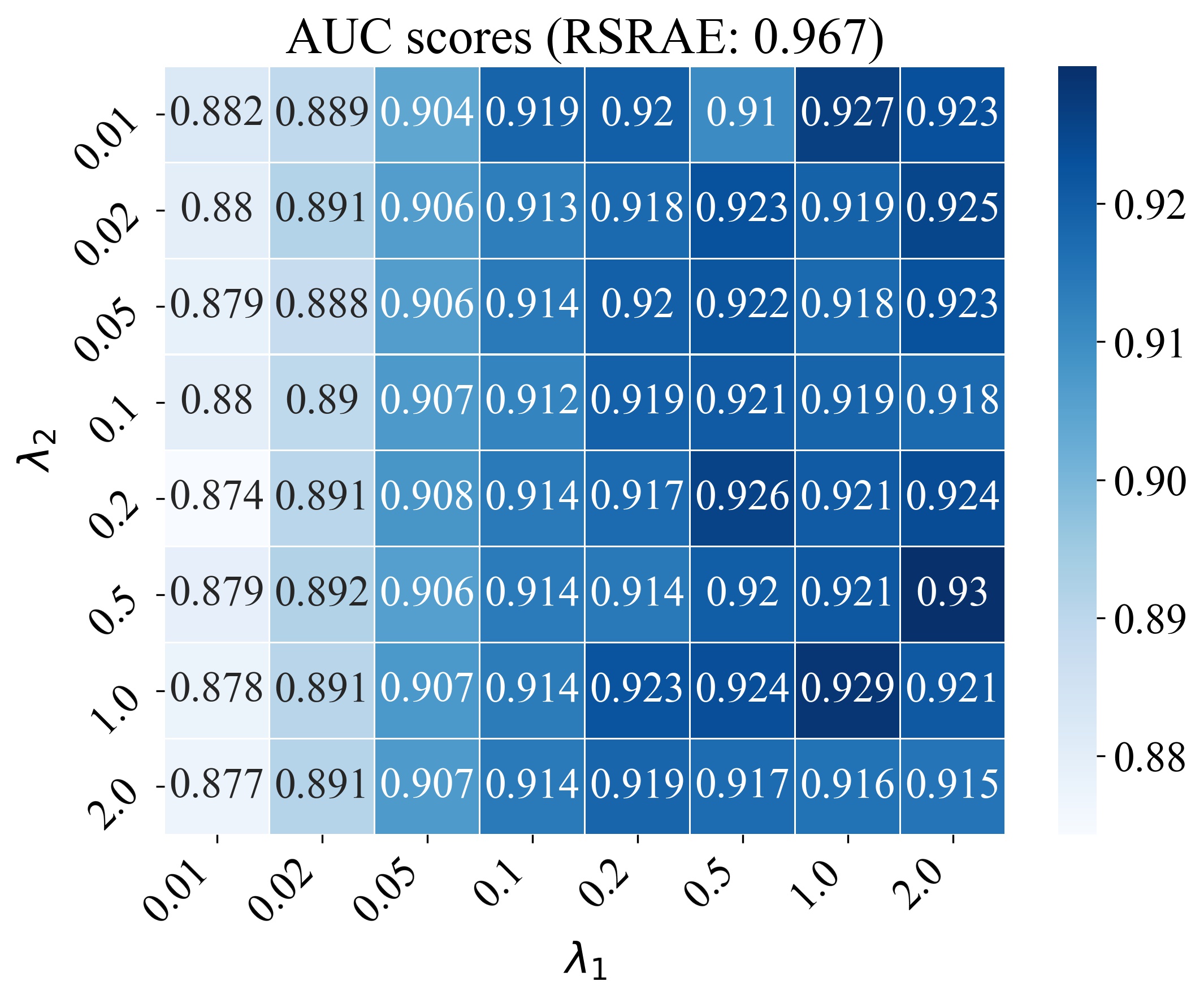}
\end{minipage}
\begin{minipage}[t]{0.48\textwidth}
\centering
\includegraphics[width=6cm]{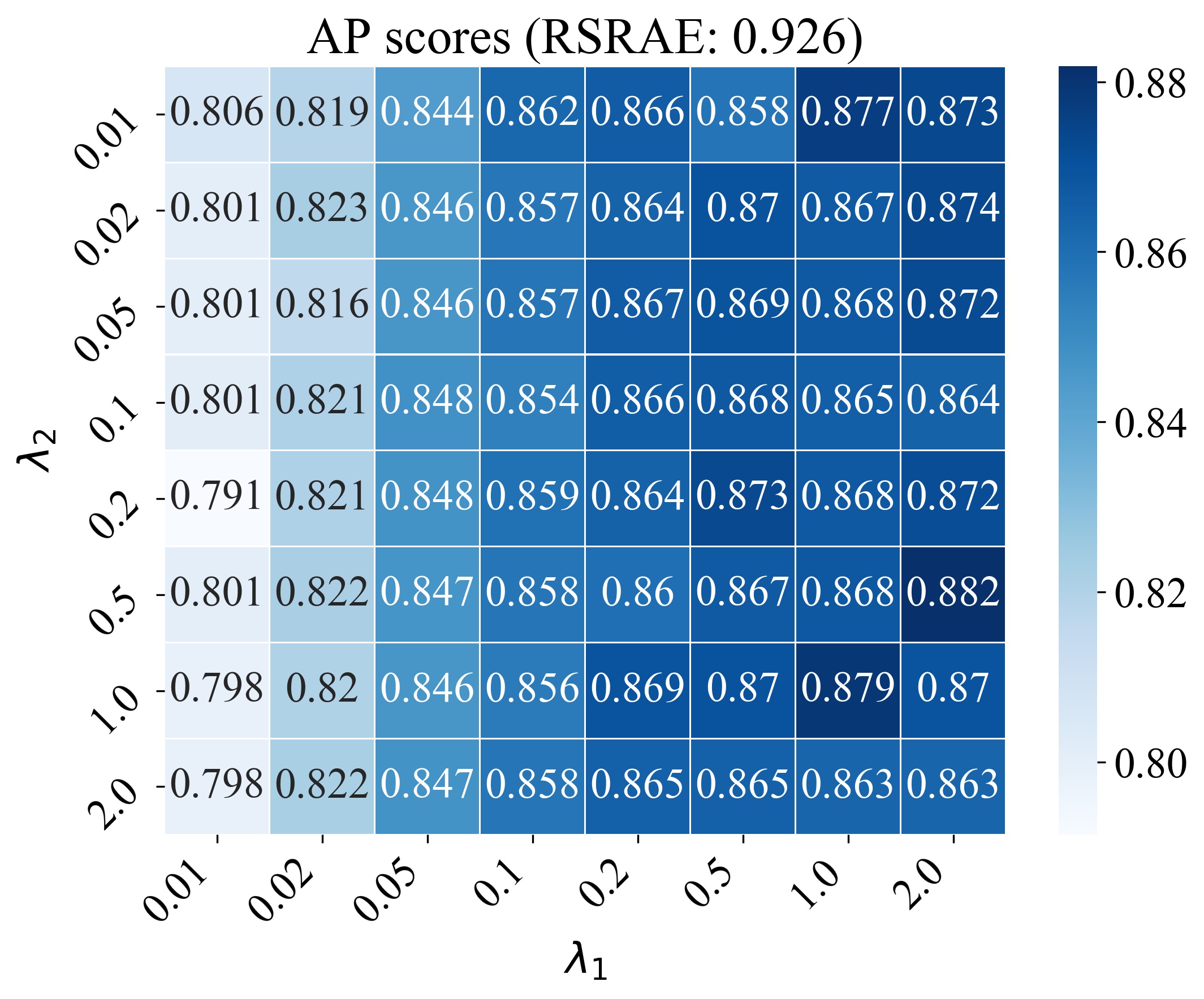}
\end{minipage}

\caption{AUC and AP scores for RSRAE+ with various choices of $\lambda_1$ and $\lambda_2$ for Caltech 101, Fashion MNIST and Tiny Imagenet with deep features, where $c=0.5$.}
\label{fig:hyperparameters}

\end{figure}

\begin{figure}

\centering
\begin{minipage}[t]{0.48\textwidth}
\rotatebox{90}{\null \qquad Reuters-21578}
\centering
\includegraphics[width=6cm]{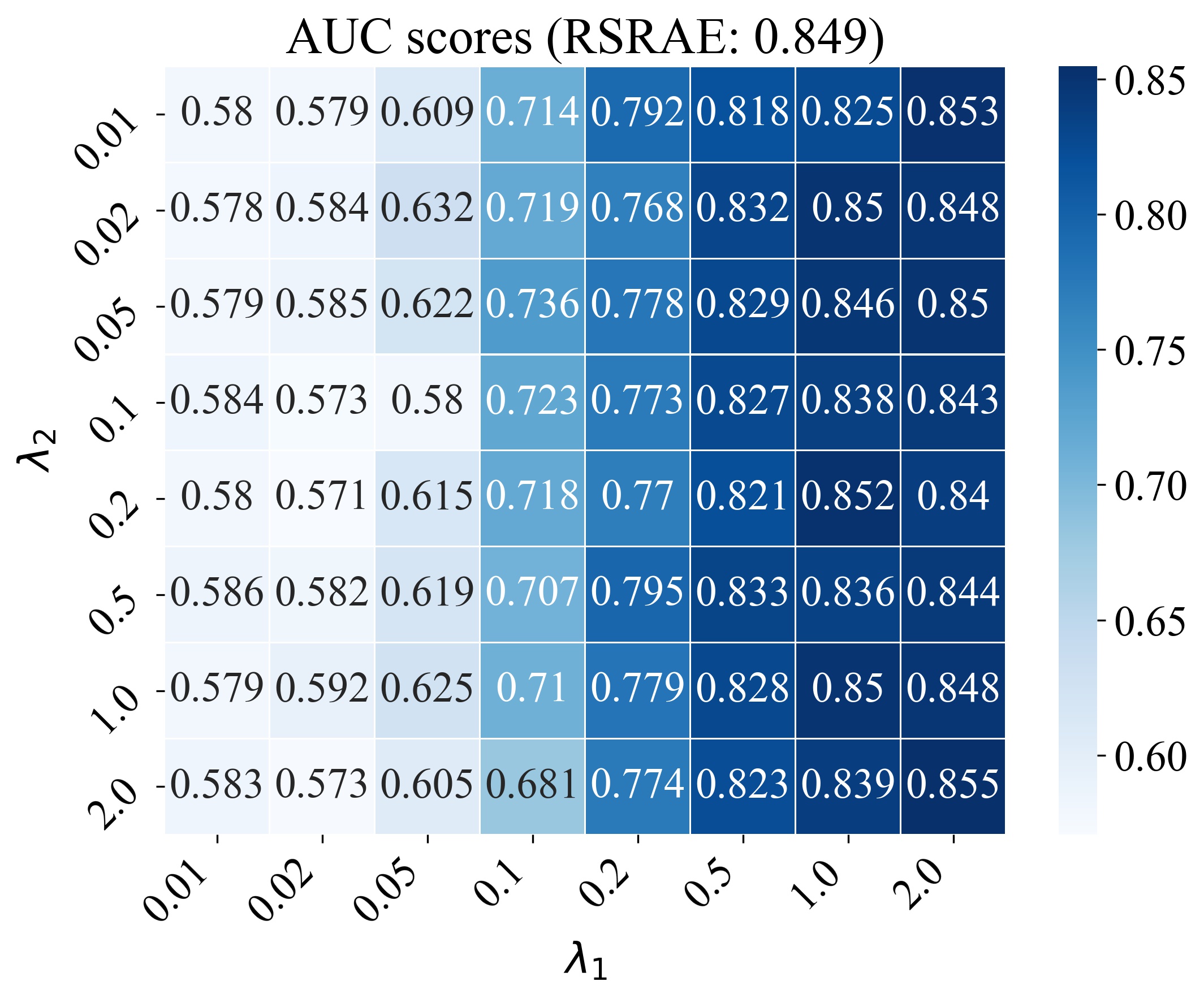}
\end{minipage}
\begin{minipage}[t]{0.48\textwidth}
\centering
\includegraphics[width=6cm]{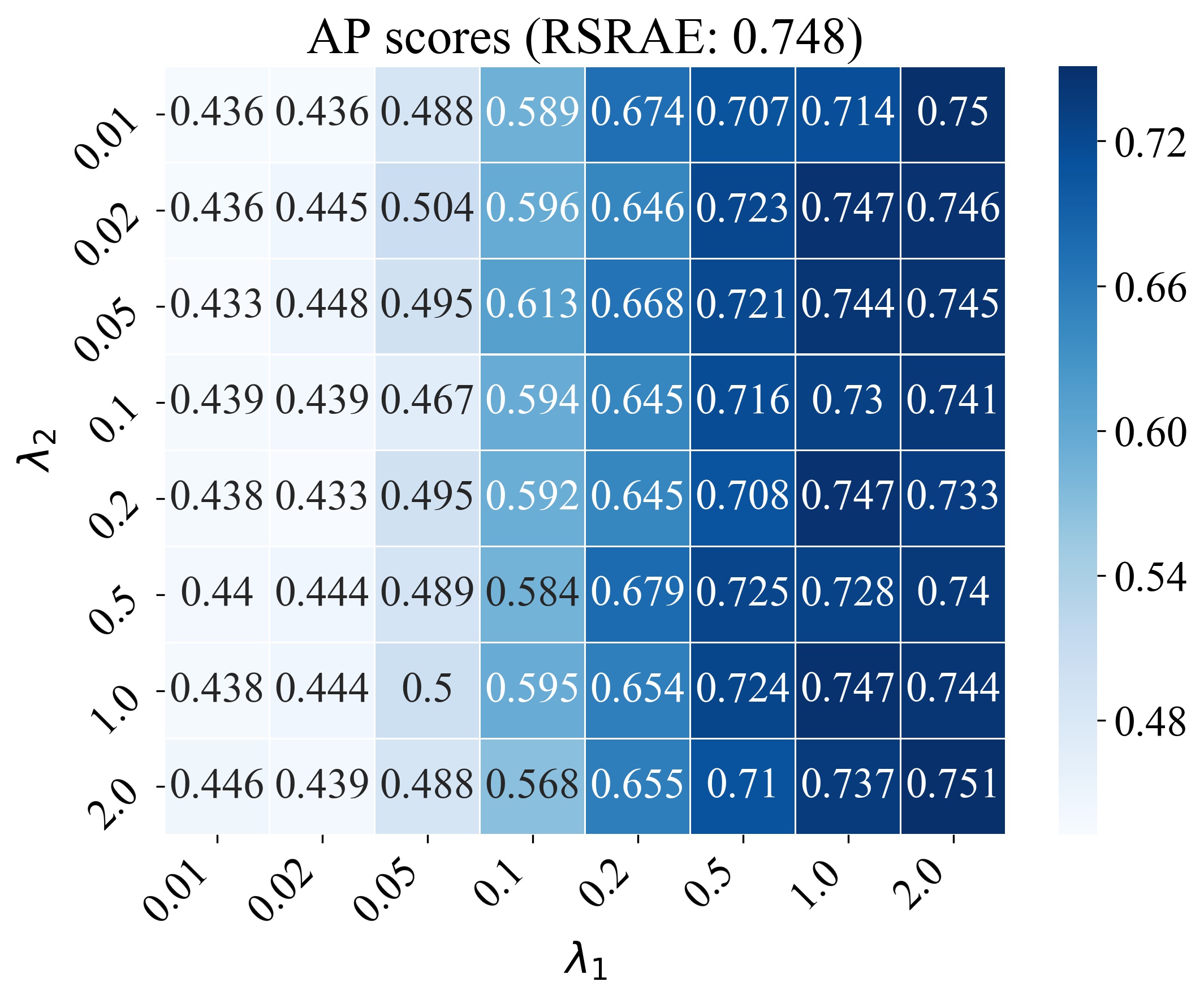}
\end{minipage}

\centering
\begin{minipage}[t]{0.48\textwidth}
\rotatebox{90}{\null \qquad 20 Newsgroups}
\centering
\includegraphics[width=6cm]{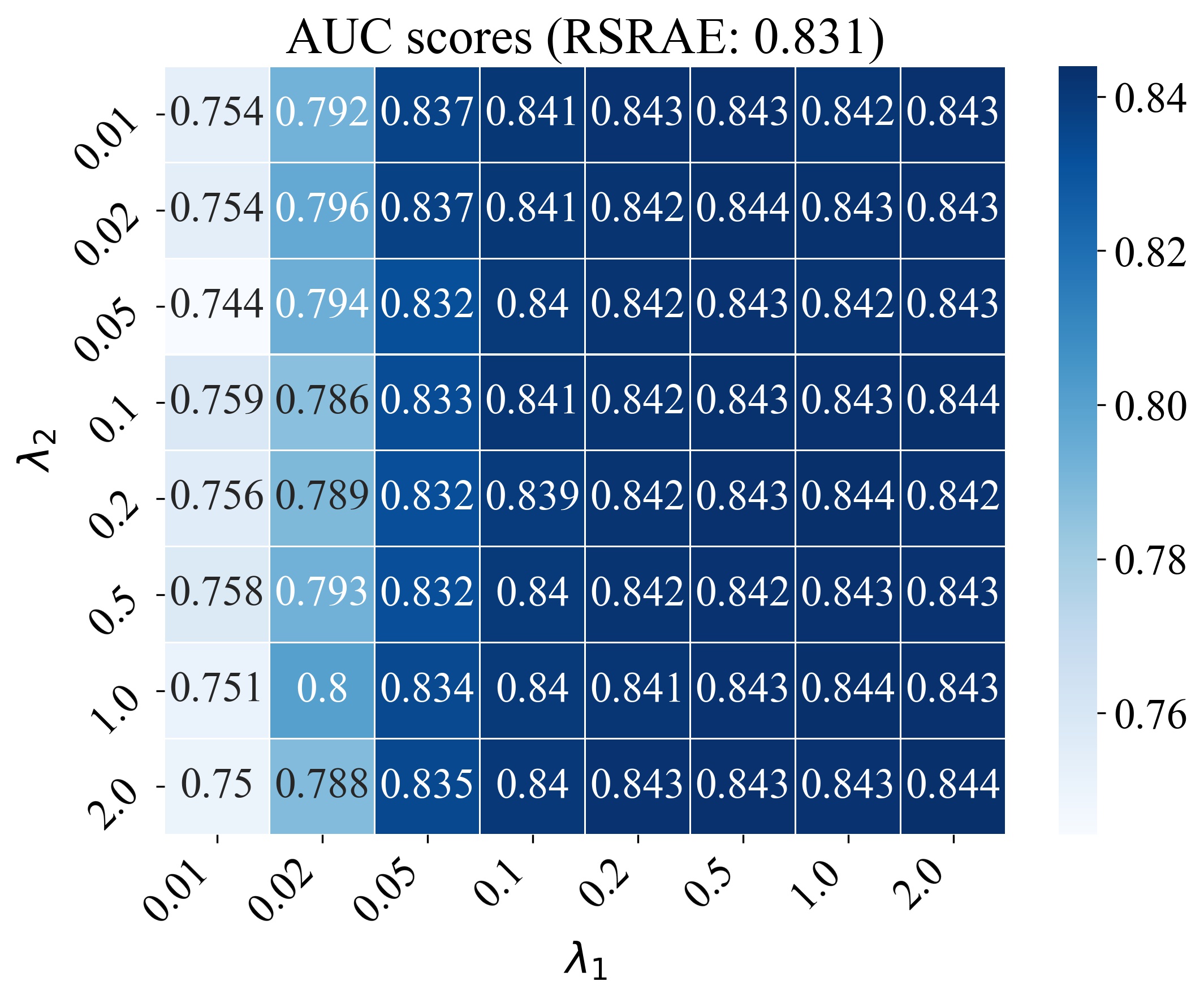}
\end{minipage}
\begin{minipage}[t]{0.48\textwidth}
\centering
\includegraphics[width=6cm]{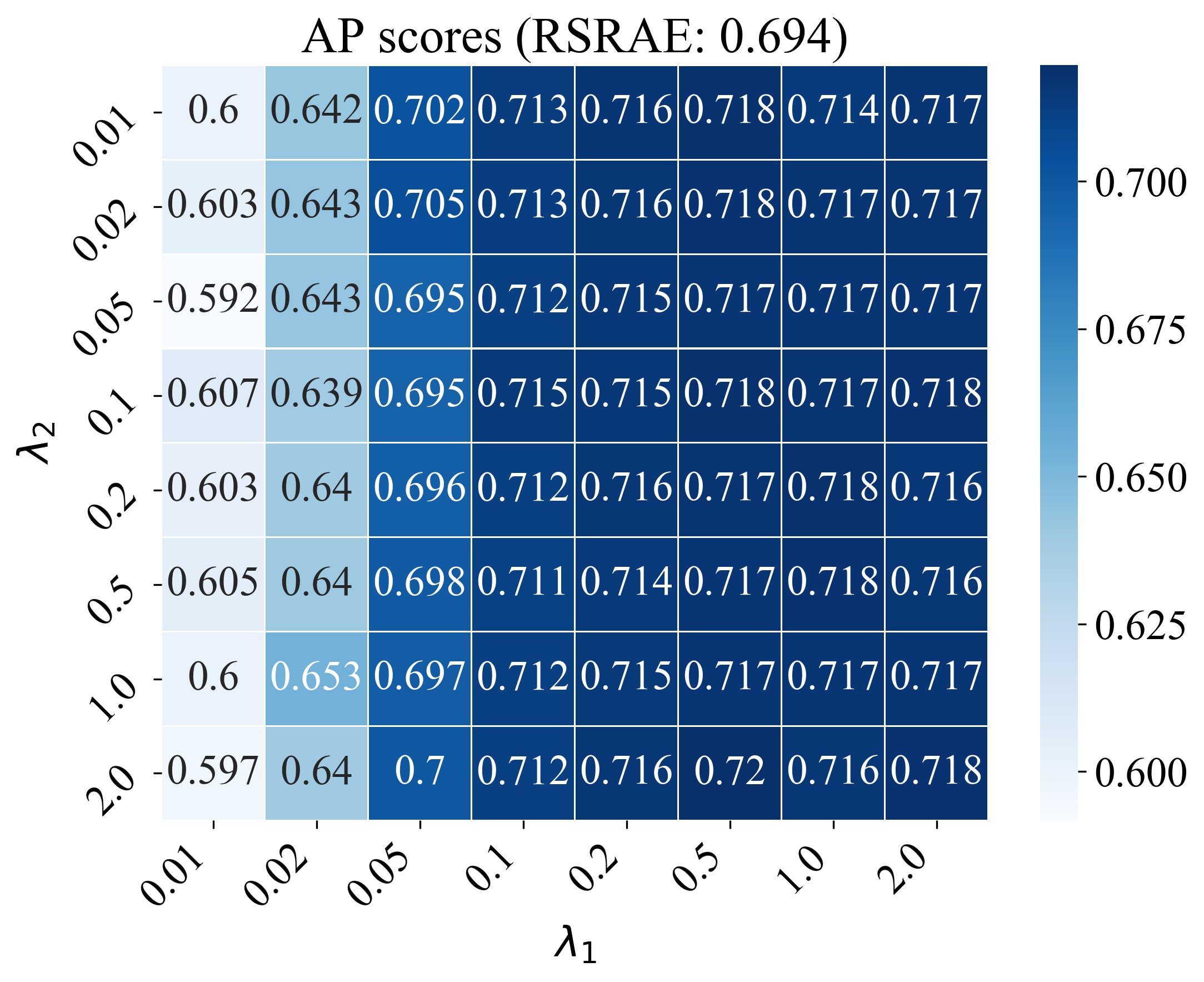}
\end{minipage}

\caption{AUC and AP scores for RSRAE+ with various choices of $\lambda_1$ and $\lambda_2$ using Reuters-21578 and 20 Newsgroup, where $c=0.5$.
}
\label{fig:hyperparameters2}
\end{figure}

\newpage
\section{Runtime comparison}
\label{sec:runtime}
Table~\ref{tab:runtime} records runtimes for all the methods and datasets in \Secref{subsec:res} with the choice of $c=0.5$. More precisely, a runtime is the the time needed to complete a single experiment, where 200 epoches were used for the neural networks. The table averages each runtime over the different classes.

Note that LOF, OCSVM and IF are faster than the rest of methods since they do not require training neural networks. 
We also note that the runtime of RSRAE is competitive in comparison to the other tested methods, that is, 
DSEBMs, DAGMM, and GT. The neural network structures of these four methods are the same, and thus the difference in runtime is mainly due to different pre and post processing.

\begin{table}[th]
\centering
\caption{Runtime comparison: runtimes (in seconds) are reported for all methods and datasets in \Secref{subsec:res}, where the outlier ratio is $c=0.5$.
Since  GT was only applied to the image datasets without deep features, its runtime is not available (N/A) for the last three datasets.
}
\label{tab:runtime}
\resizebox{\textwidth}{!}{\begin{tabular}{|c|c|c|c|c|c|}
\hline
\diagbox{Benchmarks}{Datasets}        & Caltech 101 & Fashion MNIST &Tiny Imagenet & Reuters-21578 & 20 Newsgroups \\ \hline
LOF    & 0.233                & 7.163                  & 0.707                  & 25.342                 & 10.516                 \\ \hline
OCSVM  & 0.120                & 3.151                  & 0.473                  & 8.726                  & 4.169                  \\ \hline
IF     & 0.339                & 1.485                  & 0.511                  & 20.481                 & 6.751                  \\ \hline
GT     & 21.681               & 87.729                 &        N/A                &       N/A                 &           N/A             \\ \hline
DSEBMs & 14.293               & 46.933                 & 25.194                 & 41.083                 & 33.852                 \\ \hline
DAGMM  & 21.066               & 71.632                 & 41.211                 & 83.551                 & 60.720                 \\ \hline
RSRAE  & 6.305                & 33.853                 & 10.940                 & 32.061                 & 18.869                 \\ \hline
\end{tabular}}

\end{table}

\section{Additional results}

We include some supplementary numerical results. In \Secref{subsec:tinyimagenetwithoutdeep} we show the results for Tiny Imagenet without deep features. In \Secref{subsec:cprnormnotshown} we extend the results reported in \secref{subsec:cprnorm} for the other datasets.

\subsection{Tiny Imagenet without deep features}
\label{subsec:tinyimagenetwithoutdeep}

Fig.~\ref{fig:tinyimagenet} presents the results for Tiny Imagenet without deep features. We see that RSRAE performs the best, but in general all the methods do not perform well. Indeed, the performance is significantly worse to that with deep features.

\begin{figure}[ht]
\centering
\begin{minipage}[t]{0.48\textwidth}
\rotatebox{90}{\null \qquad Tiny Imagenet}
\centering
\includegraphics[width=6cm]{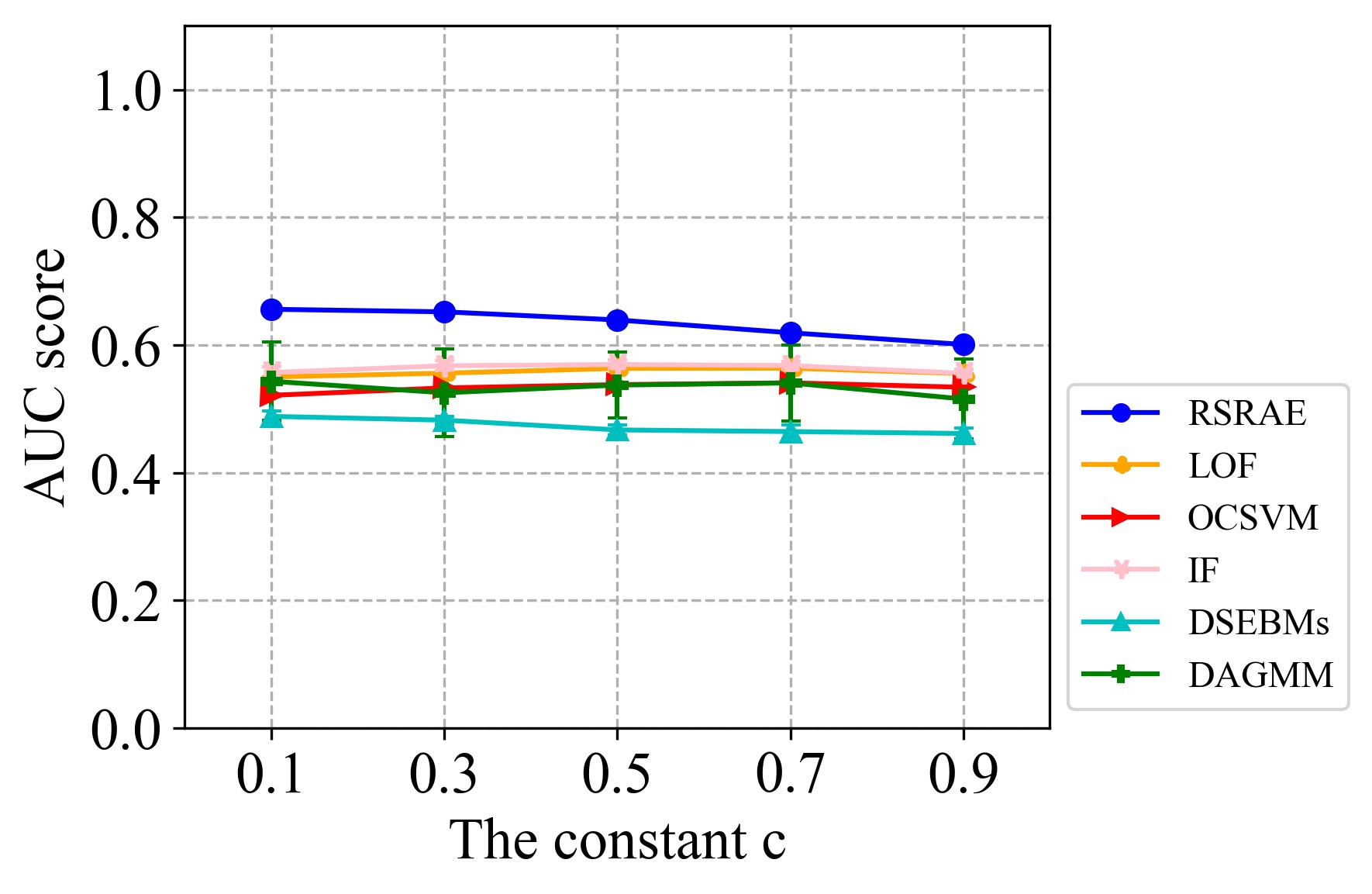}
\end{minipage}
\begin{minipage}[t]{0.48\textwidth}
\centering
\includegraphics[width=6cm]{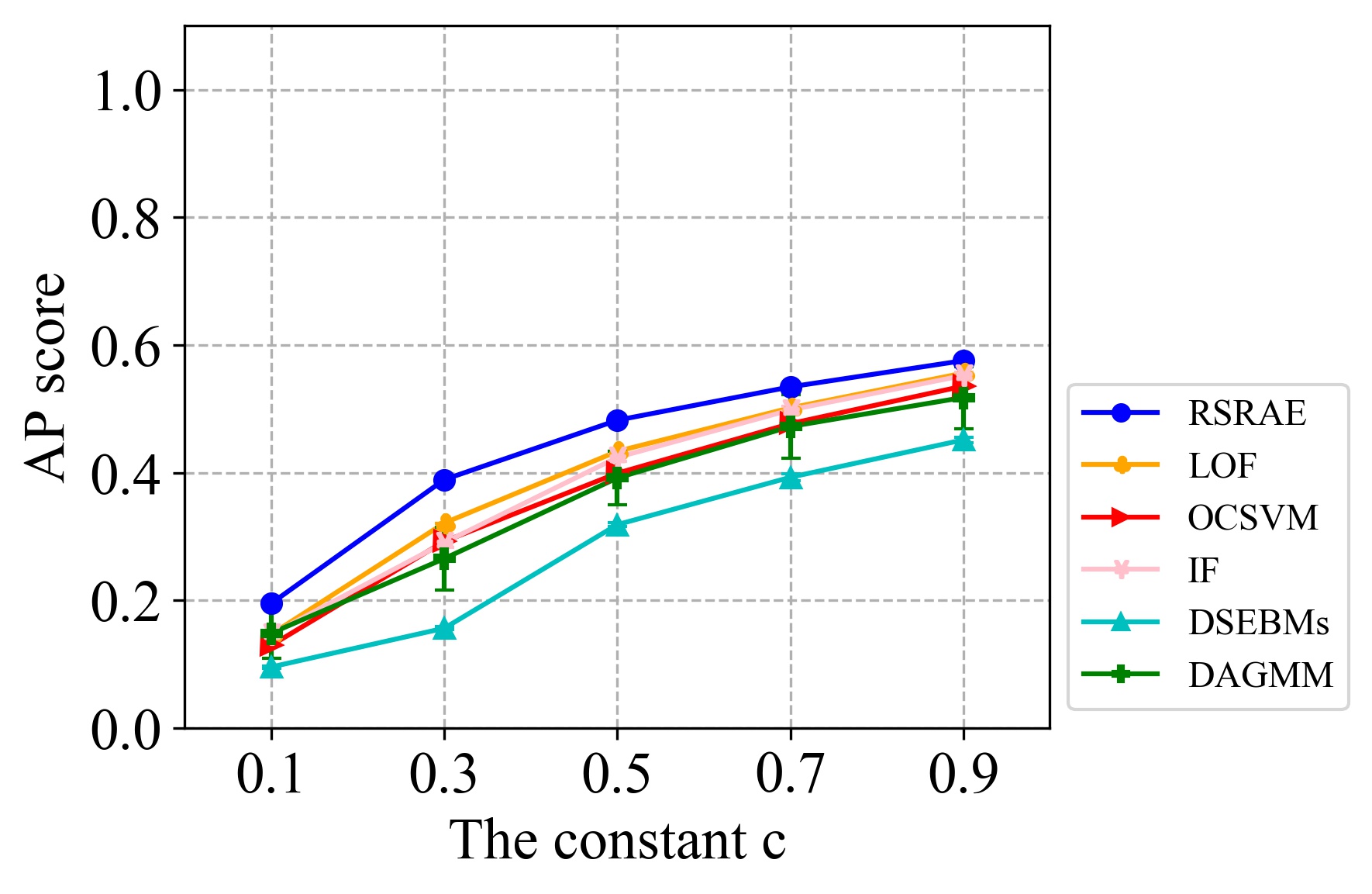}
\end{minipage}
\caption{AUC and AP scores for the Tiny Imagenet without using the deep features.}
\label{fig:tinyimagenet}
\end{figure}

\subsection{Additional comparison with variations of RSRAE}
\label{subsec:cprnormnotshown}

Figs.~\ref{fig:cprrest} and \ref{fig:cprrest2}  extend the comparisons in \Secref{subsec:cprnorm} for additional datasets. The conclusion is the same. In general, RSRAE performs better by a large margin than AE and AE-1. On the other hand, RSRAE+ is often in between RSRAE and AE/AE-1. However, for 20 Newsgroups, RSRAE+ performs similarly to RSRAE, and possibly slightly better, than RSRAE. It seems that in this case our choice of $\lambda_1$ and $\lambda_2$ is good.

\begin{figure}[ht]

\centering
\begin{minipage}[t]{0.48\textwidth}
\rotatebox{90}{\null \qquad Fashion MNIST}
\centering
\includegraphics[width=6cm]{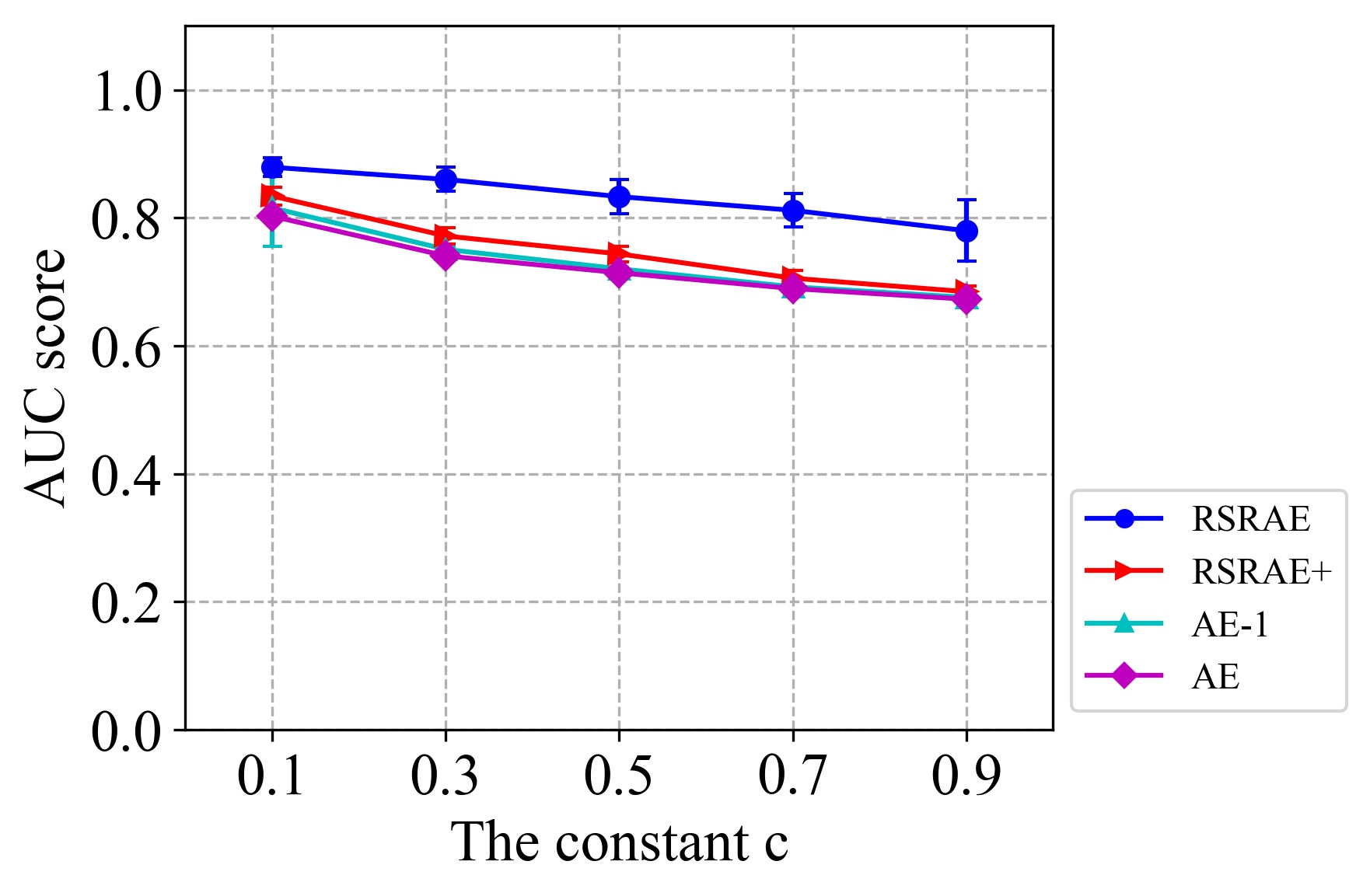}
\end{minipage}
\begin{minipage}[t]{0.48\textwidth}
\centering
\includegraphics[width=6cm]{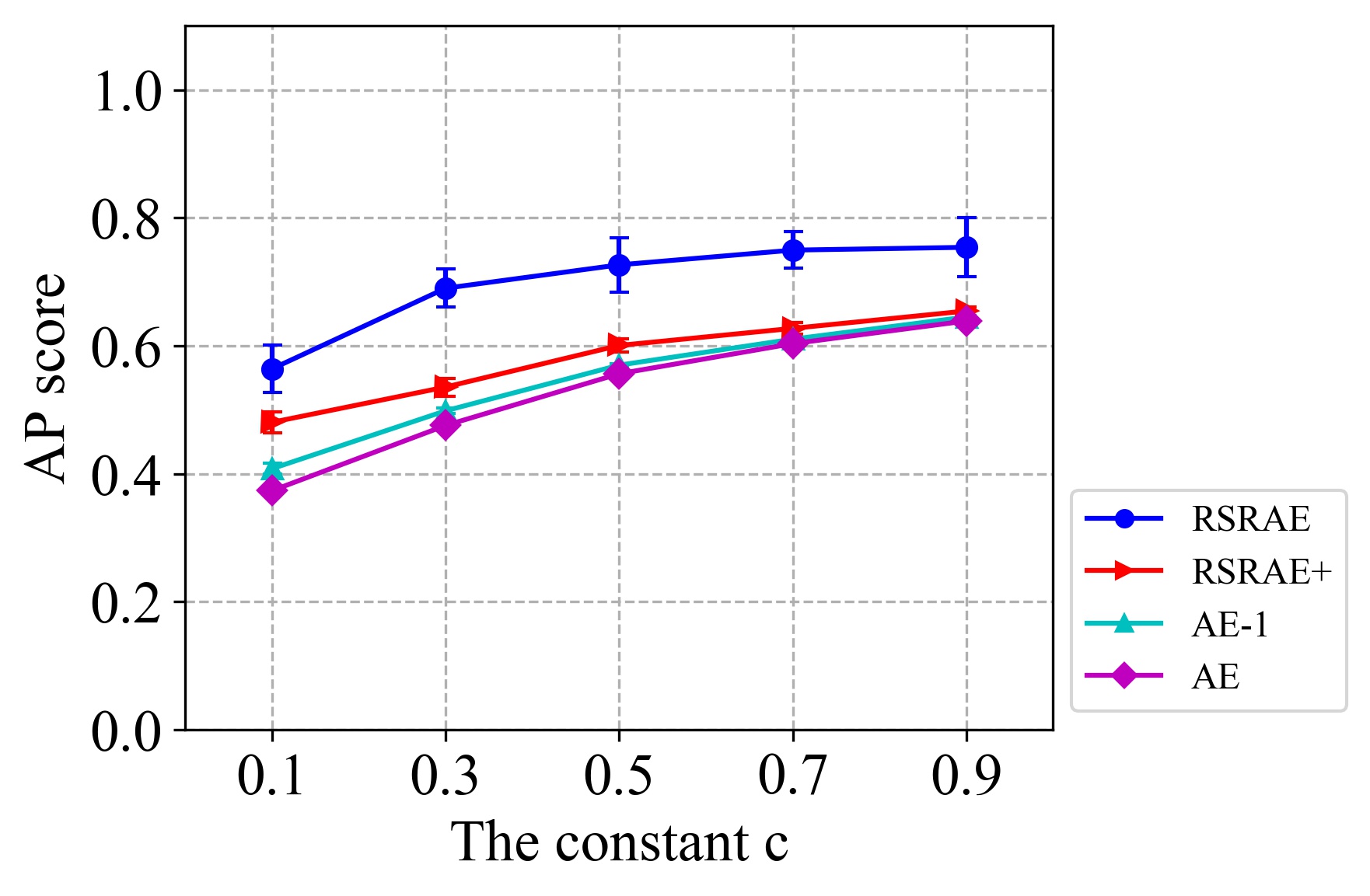}
\end{minipage}

\centering
\begin{minipage}[t]{0.48\textwidth}
\rotatebox{90}{\null \qquad Tiny Imagenet (deep)}
\centering
\includegraphics[width=6cm]{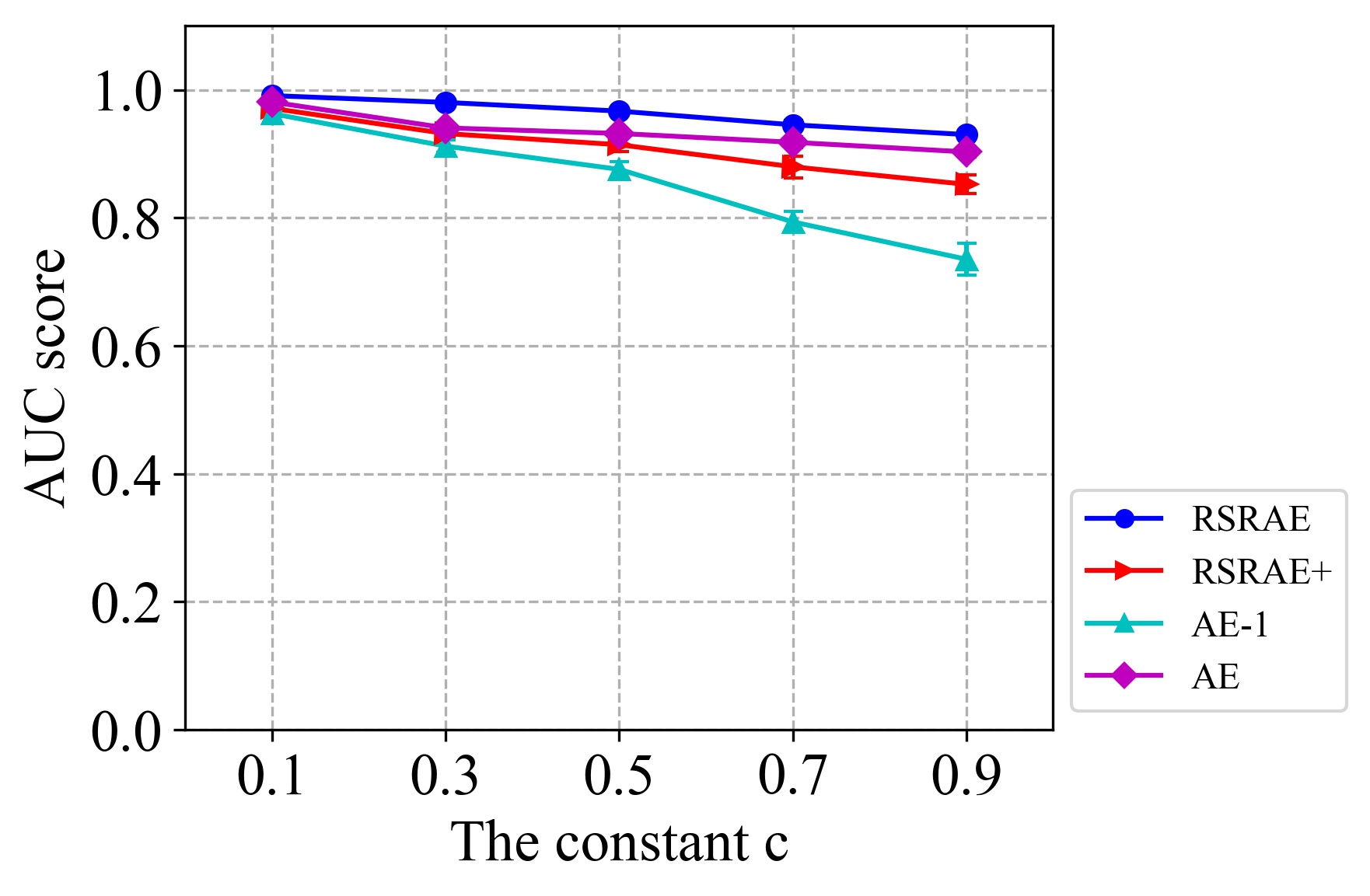}
\end{minipage}
\begin{minipage}[t]{0.48\textwidth}
\centering
\includegraphics[width=6cm]{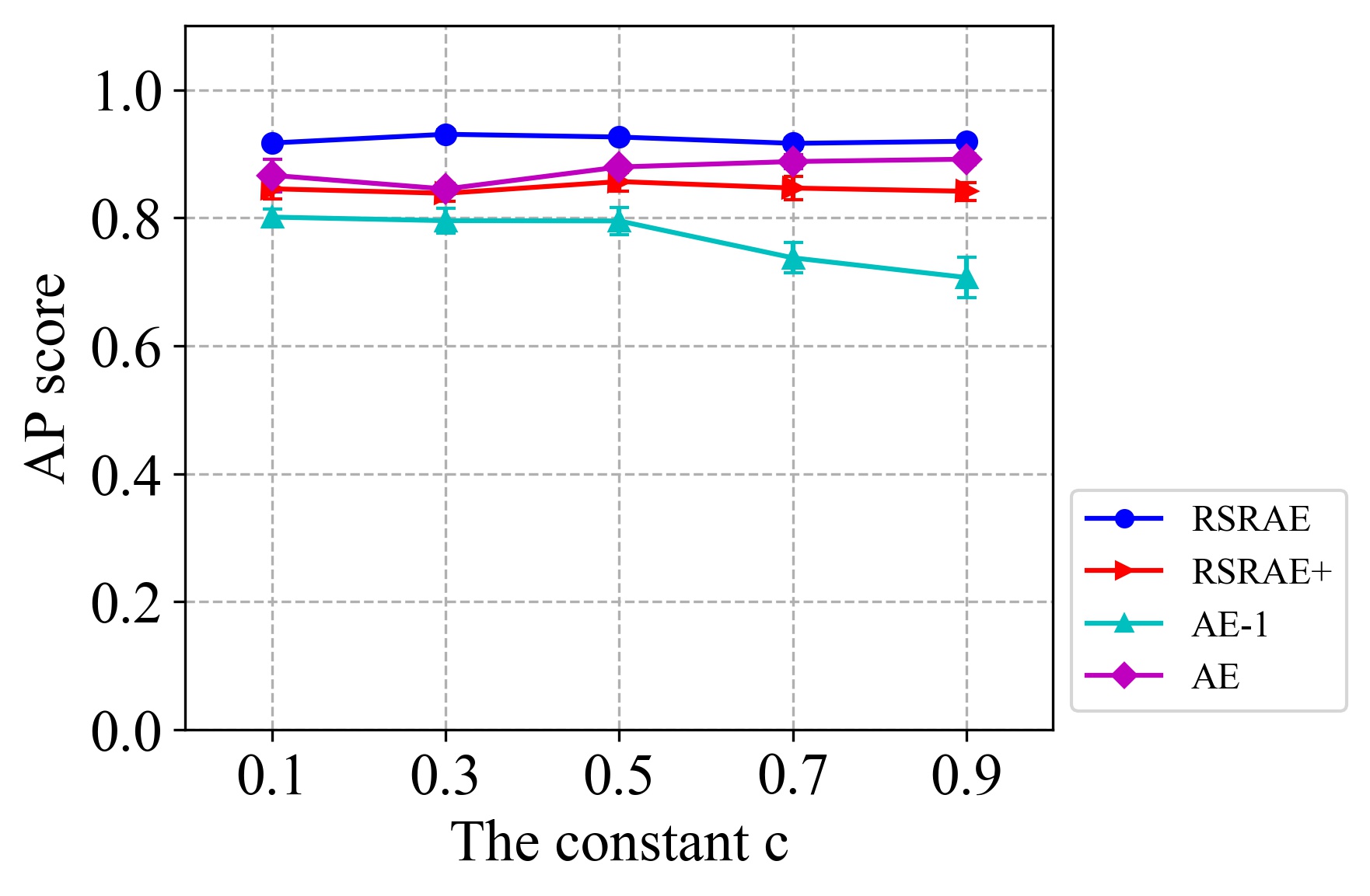}
\end{minipage}

\caption{AUC and AP scores for RSRAE and alternative formulations using Fashion MNIST and deep features of Tiny Imagenet, where $c=0.5$.}
\label{fig:cprrest}
\end{figure}
    
\centering
\begin{minipage}[t]{0.48\textwidth}
\rotatebox{90}{\null \qquad Tiny Imagenet}
\centering
\includegraphics[width=6cm]{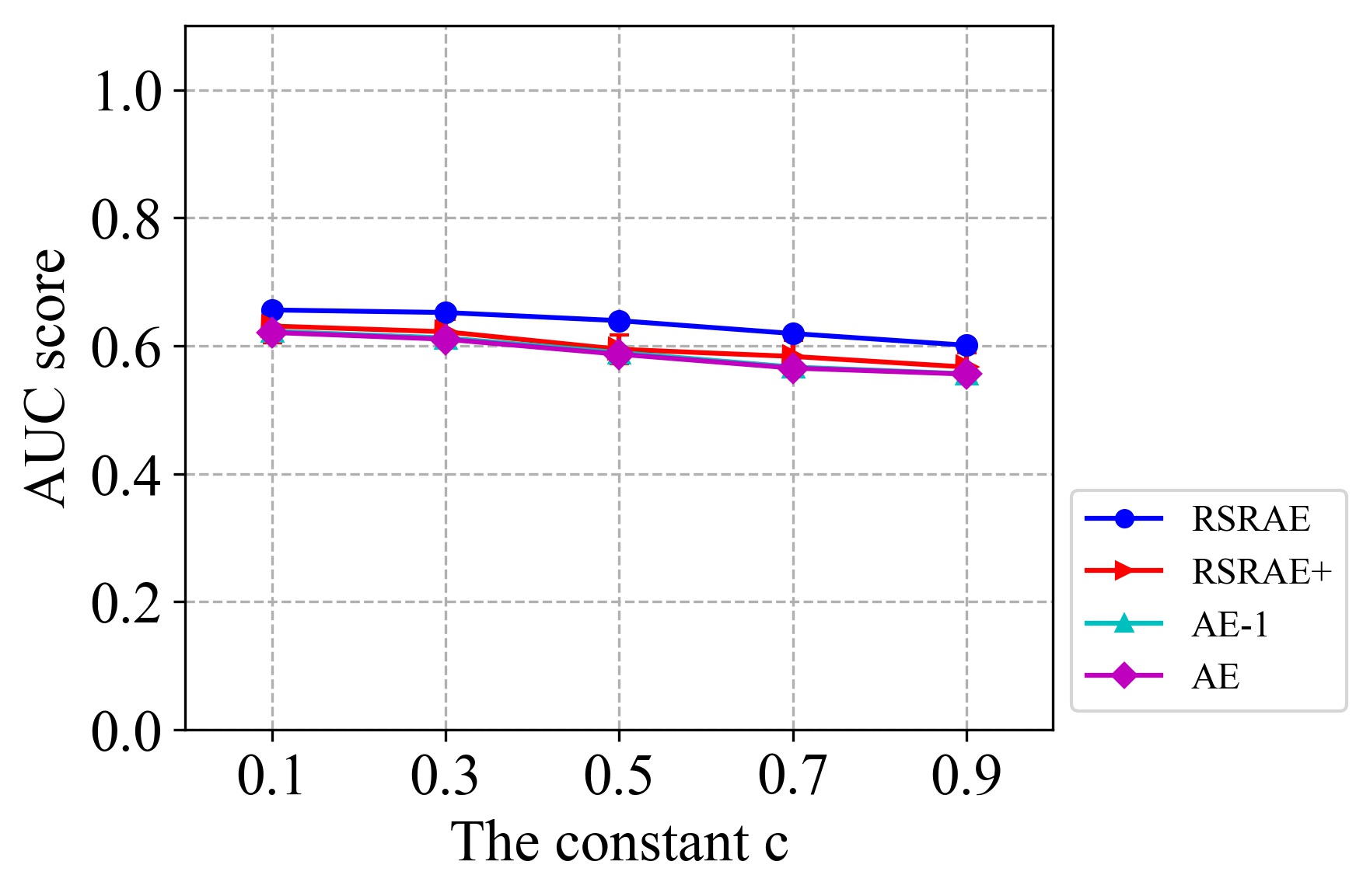}
\end{minipage}
\begin{minipage}[t]{0.48\textwidth}
\centering
\includegraphics[width=6cm]{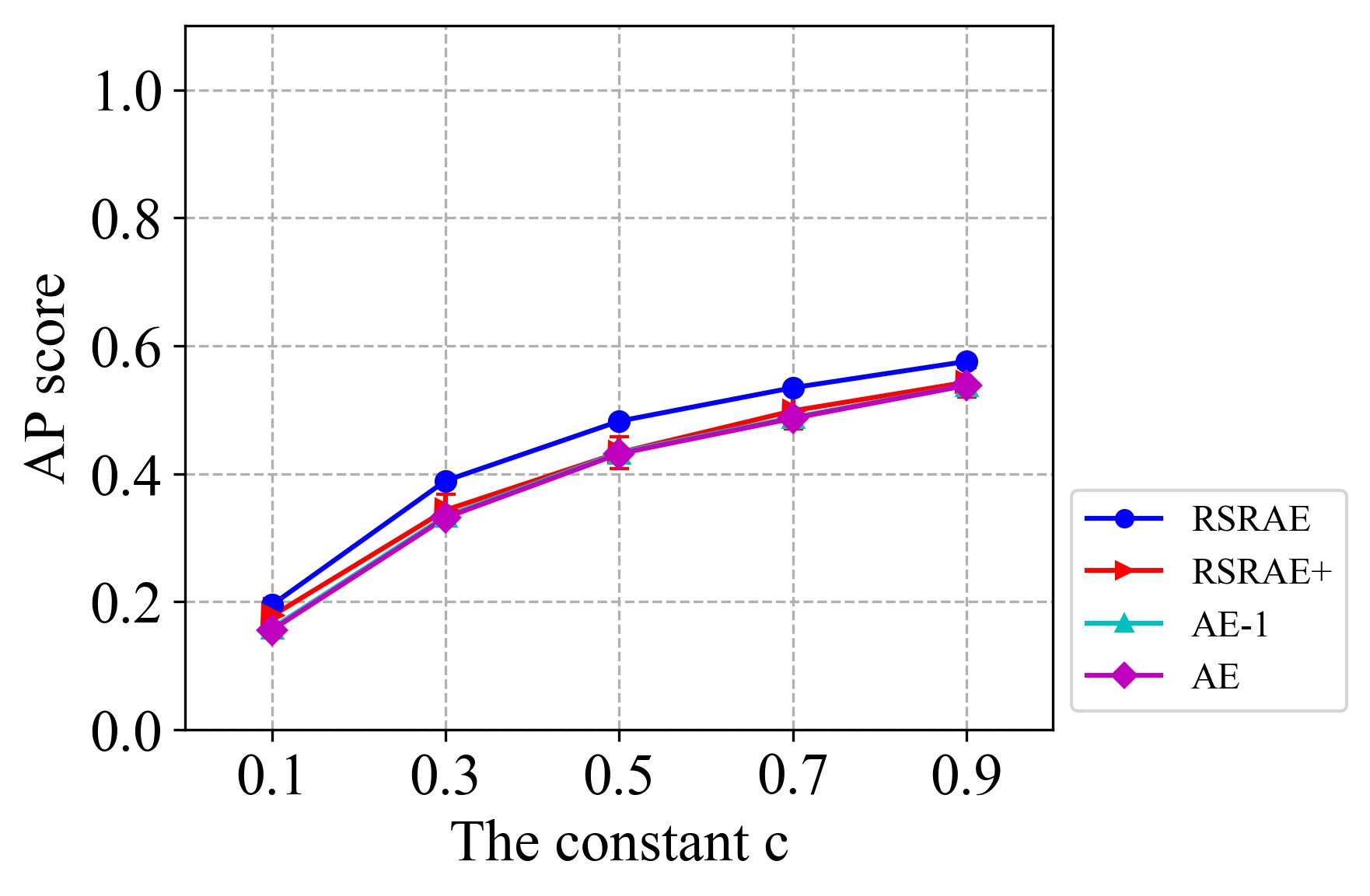}
\end{minipage}

\begin{figure}[ht]

\centering
\begin{minipage}[t]{0.48\textwidth}
\rotatebox{90}{\null \qquad 20 Newsgroups}
\centering
\includegraphics[width=6cm]{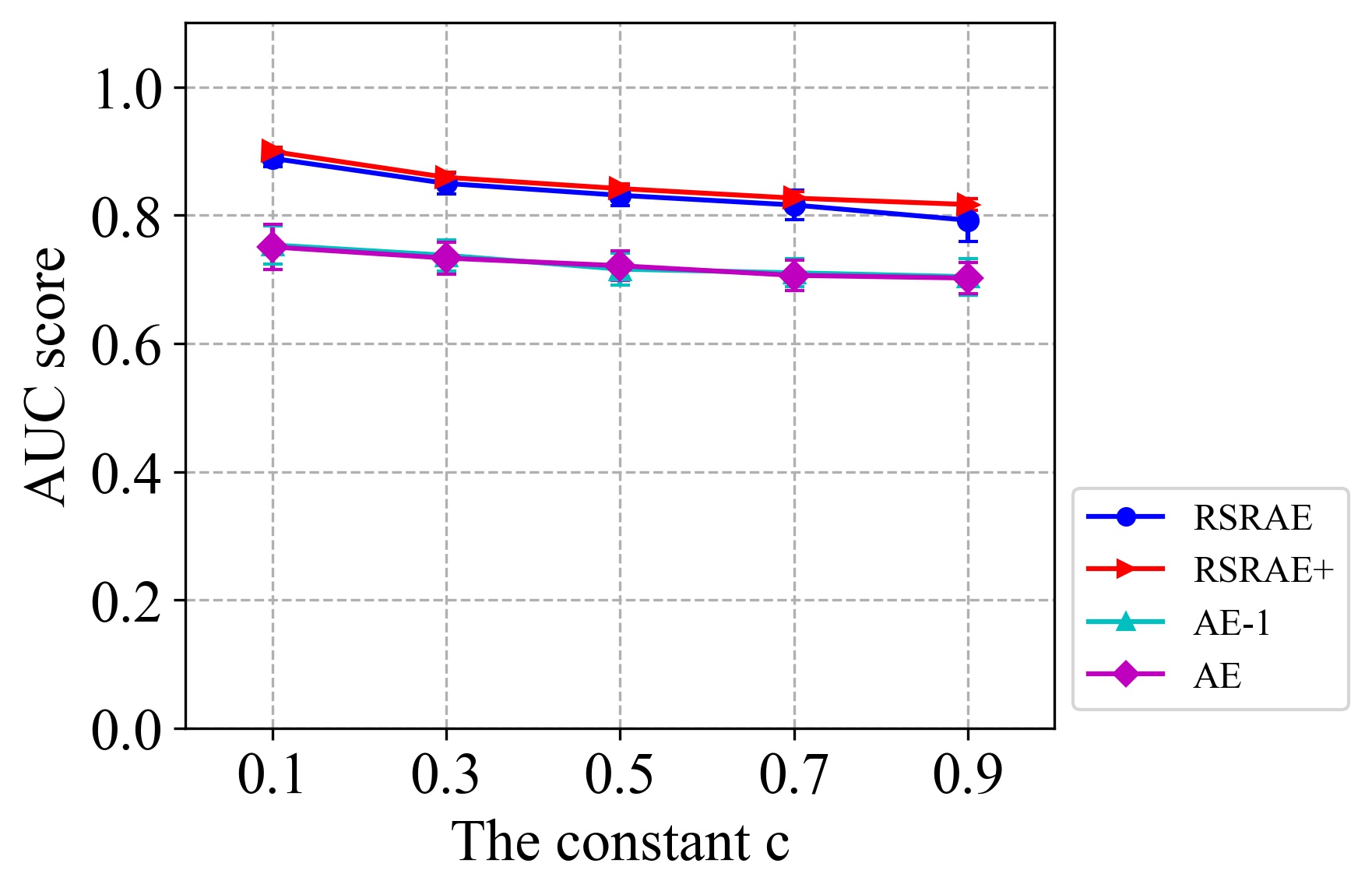}
\end{minipage}
\begin{minipage}[t]{0.48\textwidth}
\centering
\includegraphics[width=6cm]{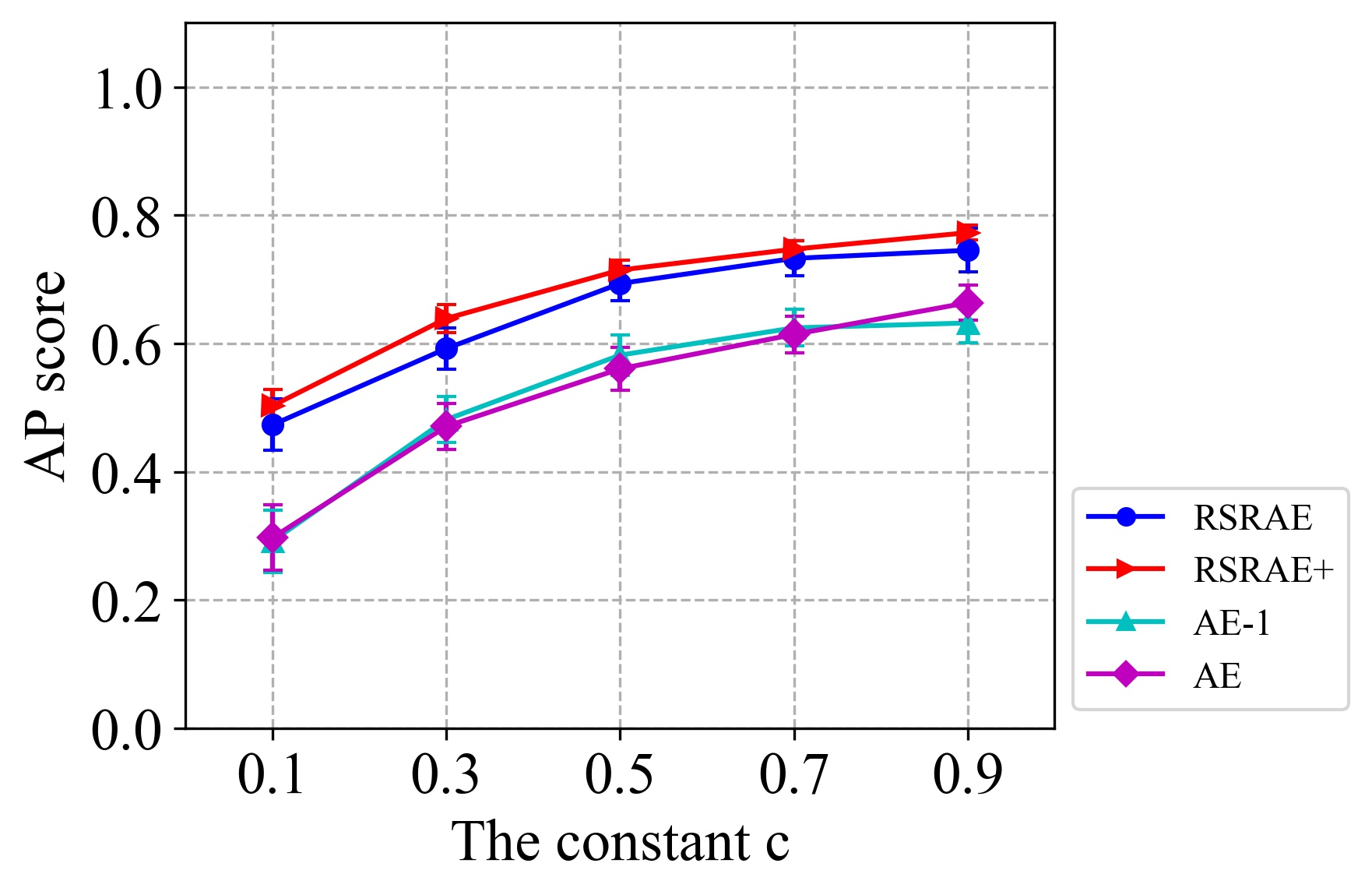}
\end{minipage}

\caption{
AUC and AP scores for RSRAE and alternative formulations using Tiny Imagenet (images) and 20 Newsgroup, where $c=0.5$.}

\label{fig:cprrest2}

\end{figure}

\vspace{10cm}
\null

\end{document}